\newtheorem{myexample}[theorem]{Example}
\def\alphab{\boldsymbol\alpha}
\def\betab{\boldsymbol\beta}
\def\thetab{\boldsymbol\theta}
\def\cte{M}
\def\moment{\operatorname{M}}
\def\a{{\bf a}}
\def\b{{\bf b}}
\def\h{{\bf h}}
\def\Mcal{{\mathcal{M}}}
\def\w{{\boldsymbol\omega}}
\def\Mbf{{\bf M}}
\def\ubf{{\bf u}}
\def\X{{\bf X}}
\def\R{{\mathbb{R}}}
\newcommand{\xbf}{\mathbf{x}}
\newcommand{\ebf}{\mathbf{e}}
\newcommand{\ybf}{\mathbf{y}}
\newcommand{\zbf}{\mathbf{z}}
\newcommand{\sbf}{\mathbf{s}}
\newcommand{\expo}{r}
\newcommand{\ie}{\textit{i.e.}}
\newcommand{\E}{\mathbb{E}}
\newcommand{\C}{\mathbb{C}}
\newcommand{\Acal}{\mathcal{A}}
\newcommand{\D}{\Delta}
\newcommand{\supp}{\operatorname{supp}}
\newcommand{\Lip}{\operatorname{Lip}}
\newcommand{\KL}{\operatorname{KL}}
\newcommand{\MLP}{\operatorname{MLP}}
\newcommand{\W}{\operatorname{W}}
\newcommand{\MMD}{\operatorname{MMD}}
\newcommand{\integ}[1]{{[\![#1]\!]}}
\def\P{{\mathcal{P}}}
\def\Xcal{{\mathcal{X}}}
\def\Ycal{{\mathcal{Y}}}
\def\Hcal{{\mathcal{H}}}
\def\Acal{{\mathcal{A}}}
\def\Rcal{{\mathcal{R}}}
\def\bbf{{\mathbf{b}}}
\def\wbf{{\mathbf{w}}}
\newcommand{\dr}{\mathrm{d}}
\newcommand{\Sfrak}{\mathfrak{S}}
\newcommand{\Gcal}{\mathcal{G}}
\begin{document}

\title{Controlling Wasserstein Distances by Kernel Norms with Application to Compressive Statistical Learning}

\author{\name Titouan Vayer \email titouan.vayer@inria.fr \\
       \addr Univ Lyon, Inria, CNRS, ENS de Lyon, UCB Lyon 1,\\
       LIP UMR 5668, F-69342, Lyon, France
       \AND
       \name Rémi Gribonval \email remi.gribonval@inria.fr \\
       \addr Univ Lyon, Inria, CNRS, ENS de Lyon, UCB Lyon 1,\\
       LIP UMR 5668, F-69342, Lyon, France}

\editor{Marco Cuturi}

\maketitle

\begin{abstract}Comparing probability distributions is at the crux of many machine learning algorithms. Maximum Mean Discrepancies (MMD) and Wasserstein distances are two classes of distances between probability distributions that have attracted abundant attention in past years. This paper establishes some conditions under which the Wasserstein distance can be controlled by MMD norms. Our work is motivated by the \emph{compressive statistical learning} (CSL) theory, a general framework for resource-efficient large scale learning in which the training data is summarized in a single vector (called \emph{sketch}) that captures the information relevant to the considered learning task. Inspired by existing results in CSL, we introduce the \emph{Hölder Lower Restricted Isometric Property} and show that this property comes with interesting guarantees for compressive statistical learning. Based on the relations between the MMD and the Wasserstein distances, we provide guarantees for compressive statistical learning by introducing and studying the concept of \emph{Wasserstein regularity} of the learning task, that is when some task-specific metric between probability distributions can be bounded by a Wasserstein distance.
\end{abstract}

\begin{keywords}
  optimal transport, maximum mean discrepancy, statistical learning, compressive learning, kernel methods, inverse problems.
\end{keywords}

\section{Introduction}
\label{sec:intro}

Countless methods in machine learning (ML) and data science rely on comparing probability distributions. Whether it is to measure errors between parametric models and empirical datasets or to produce statistical tests, a recurring problem is to define loss functions that could faithfully quantify the
discrepancy between two probability distributions $\pi$ and $\pi'$. Divergences and metrics are frequently used to address this problem and are at the core of numerous works, ranging from signal processing \citep{kolouri_2017}, generative modeling \citep{arjovsky17a,pmlr-v84-genevay18a}, supervised and semi-supervised learning \citep{Frogner_2015,solomon14}, fairness \citep{pmlr-v97-gordaliza19a}, two-sample testing \citep{JMLR:v13:gretton12a} or in information theory \citep{Liese}. The choice of such a metric is an important issue, as finding a suitable one is delicate and often depends on many criteria such as its associated topology, its computational cost, the type of the problem being considered, the task at hand  \dots\ Consequently it is often of great interest to understand the links/relationships between them. \emph{Integral Probability Metrics} (IPMs) introduced by \citet{Mueller1997IntegralPM} (see also \citealp{sriperumbudur2009integral,Bharath}) offer an important class of distances that take the  form
\begin{equation}
\label{eq:ipmdefinition}
d_{\Gcal}(\pi,\pi'):=\sup_{g \in \Gcal} |\int g \dr \pi -\int g \dr \pi'| \,,
\end{equation}
where $\pi,\pi'$ are appropriately integrable distributions and $\Gcal$ is a class of real-valued functions parameterizing the distance. The choice of an adequate function class $\Gcal$ whose generated IPM faithfully describes the ``right notion'' of discrepancy is not straightforward. One possibility is to choose $\Gcal$ based on the learning task, for example by considering functions $g \in \Gcal$ that depend on the loss and the hypothesis space. This produces \emph{task-specific} pseudo-metrics\footnote{A pseudo-metric $D$ satisfies all the axioms of a metric except (possibly) for separation. In other words, $D$ is symmetric $D(x,y)=D(y,x)$, non-negative $D(x,y) \geq 0$, satisfies the triangular inequality $D(x,y) \leq D(x,z)+D(z,y)$ and is such that $D(x,x)=0$ (but possibly $D(x,y)=0$ for some $x\neq y$). } between probability distributions, abreviated as $\operatorname{TaskMetric}$, that can be used, \textit{inter alia}, to obtain bounds on the generalization error of a learning task \citep{bendavid,JMLR:Reid}. Another possibility is to rely on \emph{task-agnostic} IPM and to choose $\Gcal$ based on the prior knowledge that this class is appropriate for the task at hand. Notable examples of task-agnostic IPMs include the popular Maximum Mean Discrepancies (MMD) (when $\Gcal$ is the unit ball in a \emph{Reproducible Kernel Hilbert Space} (RKHS), see \citealp{Berlinet}) and the $1$-Wasserstein distance $\W_1$ (when $\Gcal$ is the class of $1$-Lipschitz functions, see \citealp{Villani}). Both are gaining interest from the machine learning community due to their ability to handle the metric structure of the feature space (see \citealp{cot_peyre_cutu,Muandet_2017} and references therein).  

Our first contribution is to exhibit some relationships between task-specific metrics between probability distributions, MMD and optimal transport (OT) distances. We first give necessary and sufficient conditions, on the kernel that defines the RKHS, under which the MMD can be bounded by a Wasserstein distance. We study in a second step the other direction, more difficult to obtain, which corresponds to finding the conditions under which the Wasserstein distance $\W_p$ can be upper-bounded by an MMD with a ‘‘Hölder'' exponent, that is when 
\begin{equation}
\label{eq:eqintro}
\W_p(\pi,\pi') \lesssim \operatorname{MMD}^{\delta}(\pi,\pi') \text{ for some } \delta \in (0,1]\,.
\end{equation}
Especially, we are interested in MMDs associated to RKHSs generated by translation-invariant positive semi-definite kernels that are widely used in many machine learning applications and are at the core of many large-scale learning algorithms \citep{rhaimi2,Rahimi}. Despite some connections between MMDs and \emph{regularized} OT distances, such as the Sinkhorn divergences \citep{pmlr-v89-feydy19a} or Gaussian smoothed OT \citep{nietert2021smooth,zhang2021convergence}, little is known regarding the relationships between non-regularized $\W_p$ and such MMDs. We show that the bound \eqref{eq:eqintro} can not hold in full generality and that one needs to find additional constraints on the distributions $\pi,\pi'$. This will be formalized by the means of a \emph{model set} of distributions $\Sfrak$, so that \eqref{eq:eqintro} applies for every $\pi,\pi' \in \Sfrak$. We shed light on several controls of the type \eqref{eq:eqintro} depending on the properties of this model set $\Sfrak$ and the TI kernel (see Section \ref{sec:wass_mmd}).

\begin{figure}[t!]
\centering
\includegraphics[width=\linewidth]{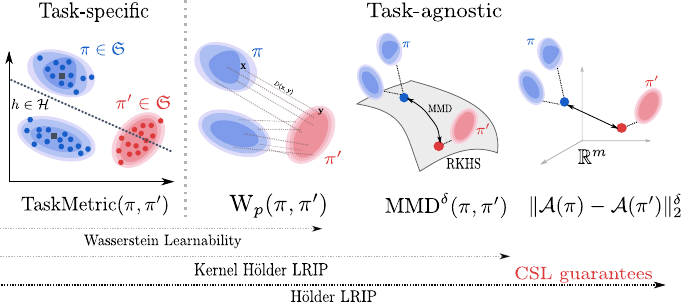}
\caption{\label{fig:fantasticfig} The reasoning used in the paper to obtain compressive statistical learning guarantees. \emph{(left)} Given two distributions $\pi,\pi'$ on a model set $\Sfrak$, our goal is to control some task-specific metric $\operatorname{Task Metric}(\pi,\pi')$ that depends on the learning task. \emph{(middle left)} In Section \ref{sec:wasserstein_learnability}, we use an upper-bound $\operatorname{Task Metric}(\pi,\pi')\lesssim \W_p(\pi,\pi')$ by introducing the notion of Wasserstein regularity of the task. \emph{(middle right)} In Section \ref{sec:wass_mmd}, we first show how to control the MMD by the Wasserstein distance, then we study the other direction that is controling $\W_p$ by an MMD with a Hölder exponent $\delta \in (0,1]$: $\W_p(\pi,\pi')\lesssim \operatorname{MMD}^{\delta}(\pi,\pi')$. \emph{(right)} In Section \ref{sec:compress_section} we discuss how to control the MMD by the distance between the finite dimensional sketches of the distributions $\Acal(\pi),\Acal(\pi')$ in $\R^{m}$. The whole pipeline gives the Hölder LRIP property which allows us to derive CSL guarantees.}
\end{figure}

This study is motivated by the compressive statistical learning (CSL) framework whose aim is to provide resource-efficient large-scale learning algorithms \citep{gribonval2020compressive,gribonval2020statistical,keriven:hal-01329195} and which heavily relies on MMDs with TI kernels. Large-scale ML faces nowadays a number of computational challenges, due to the high dimensionality of data and, often, very large training collections. Compressive statistical learning is one remedy to this situation. Its objective is: 1) to summarize a large dataset $\X \in \R^{d \times n}$, where $d$ is the dimension and $n$ the number of samples, into a single vector $\sbf \in \R^{m} \text{ or } \mathbb{C}^{m}$ with $m \ll nd$; and 2) to rely \emph{solely} on $\sbf$ to solve the learning task, such as finding centroids in K-means or learning mixture models \citep{compressivekmeans,keriven:hal-01329195,gribonval2020statistical}. The generic idea behind compressive learning is that, for many tasks, we only need to have access to informations from a ‘‘low-dimensional'' subspace, captured by a well-designed sketch vector $\sbf$. 

This framework requires specific statistical tools for establishing learning guarantees compared to standard machine learning approaches. One of the main notion in this context is found in the \emph{Lower Restricted Isometric Property} (LRIP) which is a condition on the sketching operator that maps a dataset to a sketch. However, this property is far from trivial to prove and is usually obtained by: 1) carefully designing a model set of distributions $\Sfrak$; 2) finding a kernel whose MMD dominates $\operatorname{Task Metric}$, a property being known as the \emph{Kernel LRIP}; and 3) approximating this MMD using random features \citep{gribonval2020compressive}. 

Based on the relationships between the MMD and the Wasserstein distance discussed above we will show that a slightly different property, namely the \emph{Kernel Hölder LRIP}, can be proved for a wide range of tasks where it is natural to control $\operatorname{Task Metric}$ by a Wasserstein distance (\emph{Wasserstein regularity}). In particular we prove that many unsupervised learning tasks such as \emph{compression-type tasks} (K-means/medians, PCA, see \citealp{gribonval2020compressive}) or supervised learning tasks, such as regression and binary classification with Lipschitz regressors/classifiers, fall into this category. From this study we will propose a property which generalizes the LRIP, namely the \emph{Hölder LRIP}, and we will show that this property also comes with interesting compressive statistical learning guarantees. Figure \ref{fig:fantasticfig} summarizes the whole reasoning used in this paper to establish these CSL guarantees.

\paragraph{Organization of the paper} We start by presenting in Section \ref{sec:wass_mmd} the relations between the Wasserstein distance and the MMD. We provide conditions so that $\W_p \lesssim \MMD^{\delta}$ holds for some $\delta \in (0,1]$. In Section \ref{sec:wasserstein_learnability} we study the relations between task-specific metrics between probability distributions and the Wasserstein distance. For this, we introduce the concept of \emph{Wasserstein regularity} of the learning task. In Section \ref{sec:compress_section} we introduce the compressive statistical learning framework which motivates our study. We study a generalization of the LRIP, namely the Hölder LRIP, and we show that this property has many advantages for CSL.

\subsection{Notations and Definitions}

We first detail the different usual notations and definitions used in this article.

\subsubsection{Metric Spaces} In this article the space $\Xcal$ will always be a complete, separable metric space. The relation $d(\xbf,\ybf) \lesssim d'(\xbf,\ybf)$ hides a multiplicative constant, \ie\ $d(\xbf,\ybf) \leq  C d'(\xbf,\ybf)$ with $C>0$ that \emph{does not depend} on $\xbf,\ybf$.  The class of $L$-Lipschitz continuous functions from a metric space $(\Xcal,d_\Xcal)$ to $(\Ycal,d_\Ycal)$ is denoted by $\operatorname{Lip}_{L}((\Xcal,d_\Xcal),(\Ycal,d_\Ycal))$ or simply by $\operatorname{Lip}_{L}(\Xcal,\Ycal)$ when it is clear from  context. If $f\in \operatorname{Lip}_{L}((\Xcal,d_\Xcal),(\Ycal,d_\Ycal))$ we have $\forall \xbf,\xbf' \in \Xcal, d_{\Ycal}(f(\xbf),f(\xbf'))\leq L d_{\Xcal}(\xbf,\xbf')$. In the following $\|\cdot\|_2$ denotes the $\ell_2$ norm, and vectors and matrices are written in bold. On a normed space $(\Xcal,\|\cdot\|)$, the ball centered at $\xbf_0 \in \Xcal$ and with radius $R>0$ is denoted $B_{\|\cdot\|}(\xbf_0,R)$ or simply by $B(\xbf_0,R)$ when it is clear from context. 

\subsubsection{Measures and Probability Distributions} We note $\P(\Xcal)$ the set of probability measures on $\Xcal$. $\Mcal(\Xcal)$ is the space of finite signed measures on $\Xcal$. For the sake of brevity, for a probability distribution $\pi \in \P(\R^{d})$ that admits a density $f$ \textit{w.r.t.} the Lebesgue measure on $\R^{d}$ we adopt the notation $\pi = f \dr \xbf$. Given a probability distribution $\pi \in \P(\Xcal)$ and a measurable function $T:\Xcal \rightarrow \Ycal$ the pushforward operator $\#$ defines a probability distribution $T\#\pi \in \P(\Ycal)$ \textit{via} the relation $T\#\pi(A)=\pi(T^{-1}(A))$ for every measurable set $A$ in $\Ycal$. In other words, if $X \sim \pi$ is a random variable then $Y=T(X)$ has the law $T\#\pi$. The support of a probability distribution is denoted as $\supp(\pi)$ and it is defined as the smallest closed set $S$ such that $\pi(S)=1$.

\subsubsection{Integrability, Fourier Transform and Sobolev Space} For a measurable space $\Xcal$ and a Borel measure $\mu$ on $\Xcal$ we note $L_p(\mu)$ the space of real-valued $p$-integrable functions \textit{w.r.t} $\mu$, \ie\ that satisfy $\int_{\Xcal} |f(\xbf)|^{p} \dr \mu(\xbf) <+\infty$. When $\Xcal=\R^{d}$ we note $L_p(\R^{d})$ the space of $p$-integrable functions with respect to the Lebesgue measure. For an integrable function $f \in L_1(\R^{d})$ we adopt the convention of the Fourier transform $\hat{f}(\w)=\mathcal{F}[f](\w):=\int_{\R^{d}} e^{-i\w^{\top}\xbf} f(\xbf) \dr \xbf$. The Fourier transform of a non-negative finite measure $\mu \in \Mcal_{+}(\R^{d})$ is defined for $\w \in \R^{d}$ by $\widehat{\mu}(\w):= \int_{\R^{d}} e^{-i\w^{\top} \xbf} \dr \mu(\xbf)$. For $s \geq 0$, we define the Sobolev space of order $s$ as \citep{adams2003sobolev}:
\begin{equation*}
H^{s}(\R^{d}):=\left\{f \in L_2(\R^{d}): \w \rightarrow (1+\|\w\|_2^2)^{s/2}\mathcal{F}[f](\w) \in L_2(\R^{d})\right\}\,.
\end{equation*}
It is a Hilbert space whose corresponding norm is $\|f\|_{H^{s}(\R^{d})} := \left(\int_{\R^d} (1+\|\w\|_2^2)^{s} |\mathcal{F}[f](\w)|^{2} \dr \w\right)^{1/2}$. It corresponds to the space of functions whose weak derivatives up to order $s$ are squared-integrable.

\section{Controlling Wasserstein Distances by Kernel Norms}
\label{sec:wass_mmd}

We focus in this section on the first main contributions of this paper, that is the comparison of optimal transport distances and maximum mean discrepancies. We begin by describing the main notions related to these two metrics.

The interest of optimal transport lies in both its ability to provide correspondences between sets of points and its ability to induce a geometric notion of distance between probability distributions thanks to the popular Wasserstein distances \citep{Villani,San15a,cot_peyre_cutu}. Considering a complete and separable metric space $(\Xcal,D)$ and $p \in [1,+\infty)$, the Wasserstein distance of order $p$ between two probability distributions $\pi, \pi' \in \P(\Xcal)$ is defined as
\begin{equation}
\label{eq:wassdef}
\W_p(\pi,\pi'):=\left(\inf_{\gamma \in \Pi(\pi,\pi')} \int_{\Xcal \times \Xcal} D(\xbf,\ybf)^{p} \dr \gamma(\xbf,\ybf)\right)^{1/p}\,,
\end{equation}
where $\Pi(\pi,\pi')$ is the set of couplings of $\pi$ and $\pi'$ \ie\ the set of joint distributions $\gamma \in \P(\Xcal \times \Xcal)$ such that both marginals of $\gamma$ are respectively $\pi$ and $\pi'$. More formally $\Pi(\pi,\pi')=\{ \gamma \in \P(\Xcal \times \Xcal): \forall A,B \subseteq \Xcal, \gamma(A \times \Xcal)=\pi(A), \ \gamma(\Xcal \times B)=\pi'(B)\}$. This quantity satisfies all the axioms of a distance and endows the space 
\begin{equation*}
\label{eq:pqdef}
\P_p(\Xcal):=\{\pi \in \P(\Xcal): \int_{\Xcal} D(\xbf_0,\ybf)^{p} \dr \pi(\ybf)<+\infty \text{ for some arbitrary } \xbf_0 \in \Xcal\}\,,
\end{equation*}
with a metric structure\footnote{The space $\P_p(\Xcal)$ is here to formalize that $\W_p$ is finite and thus defines a proper distance.}  \citep{Villani}. When $(\Xcal,D)$ is a normed space such as $(\R^{d},\|\cdot\|_2)$ the space $\P_p(\Xcal)$ is the space of probability distributions with finite $p$-th moment $\int_{\Xcal} \|\xbf\|_2^{p} \dr \pi(\xbf)<+\infty$. More generally, we can define OT problems by using a cost function $c: \Xcal \times \Xcal \rightarrow \R$ instead of a distance $D$ and by minimizing the quantity $\int c(\xbf,\ybf) \dr \gamma(\xbf,\ybf)$ over $\gamma \in \Pi(\pi,\pi')$.
With a slight abuse of terminology we will denote the optimal value of both problems by the term \emph{Wasserstein distance} and we will specify, when necessary, the choice of the cost function. A coupling $\gamma^{*}$ minimizing \eqref{eq:wassdef} is called \emph{optimal coupling} and it provides a probabilistic matching of the points in the support of the distributions $\pi,\pi'$. As such, computing an OT distance equals to finding the most cost-efficient way to ‘‘match'' one distribution to the other. An important property of the Wasserstein distance relies on its dual formulation. It allows, among others, to characterize $\W_1$ by considering the maximization problem 
\begin{equation*}
\label{eq:duality}
\W_1(\pi,\pi')=\sup_{f \in \operatorname{Lip}_1(\Xcal,\R)} |\int f(\xbf) \dr \pi(\xbf)-\int f(\ybf) \dr \pi'(\ybf)|\,,
\end{equation*}  
where $\operatorname{Lip}_1(\Xcal,\R)$ is the set of $1$-Lipschitz function from $(\Xcal,D)$ to $\R$ \citep{San15a}. 

The other important technical ingredient of this section, the theory of kernels, has a long history when it comes to learning problems or more generally to probability and statistics \citep{aronszajn50reproducing,Berlinet,Muandet_2017}. In the rest of the paper $\kappa$ will denote a \emph{positive semi-definite} (PSD) kernel\footnote{A function $\kappa:\Xcal \times \Xcal \rightarrow \C$ is a PSD kernel if it is \emph{Hermitian} \ie\ $\kappa(\xbf,\ybf)=\overline{\kappa(\ybf,\xbf)}$ and for all $\xbf_1,\cdots,\xbf_n \in \Xcal $ and any $c_1,\cdots,c_n \in \C$ we have $\sum_{i,j=1}^{n} c_i\overline{c_j} \kappa(\xbf_i,\xbf_j)\geq 0$.} on a space $\Xcal$. It defines a Hilbert space of functions from $\Xcal$ to $\C$ denoted by $\Hcal_\kappa$ endowed with an inner product $\langle\cdot,\cdot\rangle_{\Hcal_\kappa}$. This space is called a reproducing kernel Hilbert space and is characterized by the property $\forall \xbf \in \Xcal, \kappa(\cdot, \xbf) \in \Hcal_\kappa$ and the reproducing property: each $f \in \Hcal_\kappa$ can be evaluated as $f(\xbf)=\langle f, \kappa(\cdot,\xbf)\rangle_{\Hcal_\kappa}$ for any $\xbf \in \Xcal$. A PSD kernel also defines the so-called \emph{Maximum Mean Discrepancy} (MMD) which can be used to compare two probability distributions $\pi \in \P(\Xcal)$ and $\pi' \in \P(\Xcal)$ with the formula\footnote{When the kernel $\kappa$ is bounded, the MMD $\|\pi-\pi'\|_{\kappa}$ is finite for any probability distributions $\pi,\pi'$.}
\begin{equation*}
\MMD_{\kappa}(\pi,\pi'):=\left(\underset{\xbf,\xbf'\sim \pi}{\E}[\kappa(\xbf,\xbf')] +\underset{\ybf,\ybf'\sim \pi'}{\E}[\kappa(\ybf,\ybf')]-2 \operatorname{Re}(\underset{\xbf \sim \pi, \ybf \sim \pi'}{\E}[\kappa(\xbf,\ybf)])\right)^{1/2}\,.
\end{equation*}
This quantity defines a pseudo-metric on the space of probability distributions and is a true metric when the kernel is \emph{characteristic}: $\MMD_{\kappa}(\pi,\pi')=0 \iff \pi=\pi'$ \citep{simongabriel2020metrizing,Sriperumbudur}. The MMD is also characterized by the relation $\MMD_{\kappa}(\pi,\pi')=\sup_{\|f\|_{\Hcal_{\kappa}\leq 1}} |\int f(\xbf) \dr\pi(\xbf)-\int f(\xbf) \dr\pi'(\xbf)|$. Moreover, it can be extended to any finite signed measure $\mu \in \Mcal(\Xcal)$ by defining a semi-norm\footnote{A semi-norm $\|\cdot\|$ on a vector space is non-negative, satisfies the triangle inequality, is such that: a) if $\xbf=0$ then $\|\xbf\|=0$ (but not necessarily the converse); and b) for $\lambda \in \R$, $\|\lambda \xbf\|=|\lambda| \|\xbf\|$.} on $\Mcal(\Xcal)$ with the formula 
\begin{equation}
\label{eq:mmdnormdef}
\|\mu\|_{\kappa}:=\left(\int \int \kappa(\xbf,\ybf)\dr \mu(\xbf)\dr \mu(\ybf)\right)^{1/2}\,.
\end{equation}
When $\kappa$ is a PSD kernel this quantity is well defined, \ie\ the integral in \eqref{eq:mmdnormdef} is non-negative, and we have $\forall \pi,\pi' \in \P(\Xcal), \MMD_{\kappa}(\pi,\pi') = \|\pi-\pi'\|_{\kappa}$. In the rest of the paper we informally denote $\|\cdot\|_{\kappa}$ by the term \emph{kernel norm} or \emph{MMD norm}. An important family of kernels, namely \emph{translation-invariant (TI), PSD kernels}, are particularly interesting in our context. They are defined for $\Xcal=\R^{d}$ and when $\kappa(\xbf,\ybf)=\kappa_{0}(\xbf-\ybf)$ for some \emph{continuous} PSD function\footnote{A function $\kappa_{0}: \R^{d} \rightarrow \C$ is PSD if for all $\xbf_1,\cdots,\xbf_n \in \R^{d}$ and $c_1,\cdots,c_n \in \mathbb{C}$ we have $\sum_{i,j=1}^{n}c_{i} \overline{c_j}\kappa_{0}(\xbf_i-\xbf_j) \geq 0$. Such function is bounded $|\kappa_{0}(\xbf)| \leq \kappa_{0}(0)$ and  satisfies $\kappa_0(-\xbf)=\overline{\kappa_0(\xbf)}$ \citep[Theorem 6.2]{Wendland}. When $\kappa_0$ is even ($\kappa_0(-\xbf)=\kappa_0(\xbf)$) then $\kappa_0$ and thus $\kappa$ are real-valued.}  $\kappa_0:\R^{d} \rightarrow \C$ . This family encompasses many popular kernels such as Gaussian or Laplacian kernels, or kernels of the Matèrn class \citep{Sriperumbudur}. The following characterization of such kernels is due to the celebrated Bochner's theorem (see Theorem 6.6 and Theorem 6.11 in \citealp{Wendland}):
\begin{restatable}[Bochner]{theorem}{bochner}
\label{theo:bochner}
Let $\kappa_0: \R^{d} \rightarrow \C$. A function $\kappa$ of the form $\kappa(\xbf,\ybf)=\kappa_{0}(\xbf-\ybf)$, where $\kappa_0$ is continuous, is a PSD kernel if and only if there exists a probability distribution $\Lambda\in \P(\R^{d})$ such that
\begin{equation*}
\forall \xbf \in \R^{d}, \ \kappa_0(\xbf)= \kappa_0(0)\int_{\R^{d}} e^{-i\w^{\top}\xbf}\dr \Lambda(\w)\,.
\end{equation*}
If $\kappa_0$ is continuous \emph{and} in $L_1(\R^{d})$ then $\kappa(\xbf,\ybf)=\kappa_{0}(\xbf-\ybf)$ is a PSD kernel if and only if $\forall \w \in \R^{d}, \widehat{\kappa_{0}}(\w) \geq 0$.
\end{restatable}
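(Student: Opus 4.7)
The plan is to split the proposition into two claims and handle the easy direction of the Bochner equivalence first. If $\kappa_0(x) = \kappa_0(0)\int_{\R^d} e^{-i\omega^\top x}\,\mathrm{d}\Lambda(\omega)$ with $\Lambda \in \P(\R^d)$, then for any $x_1,\dots,x_n \in \R^d$ and $c_1,\dots,c_n \in \C$ I would compute directly
\begin{equation*}
\sum_{i,j=1}^n c_i \overline{c_j}\,\kappa_0(x_i-x_j) = \kappa_0(0)\int_{\R^d} \Bigl|\sum_{i=1}^n c_i\, e^{-i\omega^\top x_i}\Bigr|^2 \mathrm{d}\Lambda(\omega) \ge 0,
\end{equation*}
together with the observation that $\kappa_0$ is continuous as an inverse Fourier transform of a finite measure. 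This closes the "if" part in the first claim.

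For the "only if" part, I would use a regularization scheme to reduce to the integrable case. Since $|\kappa_0| \le \kappa_0(0)$ by the standard argument applied to two points, the function $\kappa_0^\varepsilon(x) := \kappa_0(x)\,e^{-\varepsilon\|x\|^2/2}$ is continuous, p.s.d.\ (as the product of two p.s.d.\ translation-invariant functions, the Gaussian being p.s.d.\ by direct verification), and lies in $L_1(\R^d)$. For such an integrable p.s.d.\ $\kappa_0^\varepsilon$, I would show $\widehat{\kappa_0^\varepsilon} \ge 0$ pointwise by approximating the p.s.d.\ inequality with Riemann sums applied to test functions $f(x) = e^{i\omega^\top x} g(x)$ with $g$ smooth and compactly supported: the resulting quantity
\begin{equation*}
\iint \kappa_0^\varepsilon(x-y)\,e^{i\omega^\top(x-y)}\,g(x)\overline{g(y)}\,\mathrm{d}x\,\mathrm{d}y \;\geq\; 0
\end{equation*}
rewrites via convolution as an integral of $\widehat{\kappa_0^\varepsilon}$ against $|\widehat{g}(\cdot-\omega)|^2$ (Plancherel), and letting $g$ concentrate around $0$ yields $\widehat{\kappa_0^\varepsilon}(\omega) \ge 0$. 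Applying Fourier inversion to $\kappa_0^\varepsilon$ (which is valid when both $\kappa_0^\varepsilon$ and $\widehat{\kappa_0^\varepsilon}$ are in $L_1$, ensured by the Gaussian decay) then gives a non-negative density, so $\mathrm{d}\Lambda_\varepsilon(\omega) := \widehat{\kappa_0^\varepsilon}(\omega)/((2\pi)^d \kappa_0(0))\,\mathrm{d}\omega$ is a probability measure with characteristic function $\kappa_0^\varepsilon(x)/\kappa_0(0)$.

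The main obstacle is to pass to the limit $\varepsilon \to 0$. I would use Lévy's continuity theorem: the family $\{\Lambda_\varepsilon\}_\varepsilon$ has characteristic functions $\kappa_0^\varepsilon/\kappa_0(0)$ converging pointwise to $\kappa_0/\kappa_0(0)$, which is continuous at $0$; this forces tightness and produces a weak limit $\Lambda \in \P(\R^d)$ whose characteristic function is $\kappa_0/\kappa_0(0)$, establishing the required representation. For the corollary under the additional assumption $\kappa_0 \in L_1(\R^d)$, the "if" direction (from $\widehat{\kappa}_0 \ge 0$ to p.s.d.) follows by noting that Bochner's representation together with Fourier inversion forces $\mathrm{d}\Lambda = \widehat{\kappa}_0\,\mathrm{d}\omega/((2\pi)^d\kappa_0(0))$, while the "only if" direction is precisely the fact already extracted above that the Fourier transform of an integrable continuous p.s.d.\ function is non-negative. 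I expect the delicate steps to be the tightness argument and the careful handling of Fourier inversion when $\widehat{\kappa}_0$ is not assumed integrable; the rest is standard manipulation.
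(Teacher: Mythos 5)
The paper does not actually prove this proposition: it is stated with a direct citation to Wendland's textbook (Theorems~6.6 and~6.11), so there is no in-paper proof to compare against. Your sketch is a self-contained classical proof of Bochner's theorem, and the overall architecture is sound: direct computation for the easy implication, Gaussian regularization $\kappa_0^\varepsilon = \kappa_0\, e^{-\varepsilon\|\cdot\|^2/2}$ to reduce to the integrable case, positivity of $\widehat{\kappa_0^\varepsilon}$ via Riemann sums / Plancherel against concentrating test functions, Fourier inversion, and L\'evy's continuity theorem to remove the regularization. This is genuinely more informative than what the paper offers (a citation), though of course for a known textbook theorem a citation is the appropriate choice in the paper itself.

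One step is misattributed and deserves care: you write that $\widehat{\kappa_0^\varepsilon} \in L_1(\R^d)$ is ``ensured by the Gaussian decay.'' Gaussian decay of $\kappa_0^\varepsilon$ in the spatial domain gives no decay of $\widehat{\kappa_0^\varepsilon}$ in frequency --- that would require smoothness of $\kappa_0^\varepsilon$, which is not assumed. The correct reason is the classical lemma (used implicitly elsewhere in the paper, cf.\ Lemma~\ref{lemma:mmdform} with the Stein--Weiss reference): if $f \in L_1(\R^d)$ is continuous at $0$ and $\widehat f \ge 0$, then $\widehat f \in L_1(\R^d)$, because applying Gaussian/Abel summability in the \emph{frequency} variable and monotone convergence gives $\int \widehat f \le (2\pi)^d f(0) < \infty$. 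This is exactly the point where non-negativity of $\widehat{\kappa_0^\varepsilon}$ is essential, so it is worth stating it as such rather than appealing to spatial decay. The same lemma is also needed in your argument for the $L_1$ corollary in the direction $\widehat\kappa_0 \ge 0 \Rightarrow$ p.s.d.: to define $\dr\Lambda = \widehat\kappa_0\,\dr\w/((2\pi)^d\kappa_0(0))$ as a probability measure and invoke Fourier inversion, you must first know $\widehat\kappa_0 \in L_1(\R^d)$, which again comes from that lemma and not from the hypotheses directly. With these justifications filled in, the proof is correct.
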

Bochner's theorem shows that a translation invariant PSD kernel $\kappa$ (when properly scaled to ensure $\kappa_0(0)=1$) can be written as an expectation $\kappa(\xbf,\ybf)=\E_{\w \sim \Lambda}[\phi(\xbf, \w)\overline{\phi(\ybf, \w)}]$ where $\Lambda \in \P(\R^{d})$ and $\phi(\xbf, \w)=e^{-i \w^{\top}\xbf}$. An interesting property of such kernels is that they can be approximated using finite dimensional vectors by sampling from the frequencies $\w \sim \Lambda$ and approximating $\E_{\w \sim \Lambda}[\phi(\xbf, \w)\overline{\phi(\ybf, \w)}]$ using a Monte-Carlo algorithm  \citep{Zhu,sutherland2015error,NIPS2015_d14220ee}. This property is at the core of methods that rely on \emph{random Fourier features} to accelerate kernel learning algorithms \citep{Rahimi,rhaimi2}. 

\subsection{Controlling MMDs by Wasserstein distances} 

When it comes to comparing $\MMD$ and $\W_p$, one direction is easier: controlling $\MMD$ by $\W_p$. More precisely we have the following result (the proof can be found in Appendix \ref{sec:proof:prop:mmdboundedwass}): 
\begin{restatable}{proposition}{mmdboundedwass}
\label{prop:mmdboundedwass}
Let $(\Xcal,D)$ be a complete separable metric space, $\kappa: \Xcal \times \Xcal \rightarrow \R$ a PSD kernel, $\Hcal_{\kappa}$ the associated RKHS and $B_{\kappa}:=\{f \in \Hcal_{\kappa}: \|f\|_{\Hcal_{\kappa}} \leq 1\}$ the unit ball in $\Hcal_{\kappa}$. Consider the Wasserstein distances computed with the metric $D$. For any $C>0$ the following statements are equivalent:
\begin{enumerate}[label=(\roman*)]
\item 
\begin{equation}
B_\kappa \subseteq \operatorname{Lip}_{C}( (\Xcal,D), \R)
\end{equation}
\item \begin{equation}
\label{eq:wass_control_mmd}
\forall p \in [1,+\infty), \forall \pi,\pi' \in \P_p(\Xcal), \|\pi-\pi'\|_{\kappa} \leq C \W_{p}(\pi,\pi')
\end{equation}
\item 
\begin{equation}
\exists p \in [1,+\infty), \forall \pi,\pi' \in \P_p(\Xcal), \|\pi-\pi'\|_{\kappa} \leq C \W_{p}(\pi,\pi')
\end{equation}
\item 
\begin{equation}
\label{eq:eqcond}
 \forall \xbf,\ybf \in \Xcal, \  \kappa(\xbf,\xbf)+\kappa(\ybf,\ybf)-2\kappa(\xbf,\ybf) \leq C^{2}  D^{2}(\xbf,\ybf)
\end{equation}
\end{enumerate}
\end{restatable}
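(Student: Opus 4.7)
The plan is to prove the cyclic chain of implications (i) $\Rightarrow$ (ii) $\Rightarrow$ (iii) $\Rightarrow$ (iv) $\Rightarrow$ (i). Each step relies on a standard tool that has already been recalled in the paper: the variational characterization of the MMD in terms of the unit ball $B_\kappa$, the Kantorovich duality \eqref{eq:duality} for $W_1$, the monotonicity $W_1 \leq W_p$ for $p \geq 1$ (Jensen's inequality), and the reproducing property of $\Hcal_\kappa$.

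For (i) $\Rightarrow$ (ii), I would start from the representation $\|\pi-\pi'\|_\kappa = \sup_{f \in B_\kappa}|\int f \, \dr\pi - \int f \, \dr\pi'|$ already stated in the paper. Under (i), every $f \in B_\kappa$ is $C$-Lipschitz, so $f/C \in \operatorname{Lip}_1(\Xcal, \R)$, and the dual formulation \eqref{eq:duality} yields $|\int f \, \dr\pi - \int f \, \dr\pi'| \leq C \, W_1(\pi,\pi')$. Taking the supremum over $f\in B_\kappa$ gives $\|\pi-\pi'\|_\kappa \leq C W_1(\pi,\pi') \leq C W_p(\pi,\pi')$ for any $p \geq 1$, where the last inequality is by Jensen applied to the optimal coupling.

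The implication (ii) $\Rightarrow$ (iii) is immediate. For (iii) $\Rightarrow$ (iv), I would instantiate the inequality at Dirac masses: choose $\pi = \delta_\xbf$ and $\pi' = \delta_\ybf$, which both belong to $\P_p(\Xcal)$. A direct computation from the definition of the MMD gives $\|\delta_\xbf-\delta_\ybf\|_\kappa^2 = \kappa(\xbf,\xbf) + \kappa(\ybf,\ybf) - 2\kappa(\xbf,\ybf)$, while the only admissible coupling of Dirac measures is $\delta_{(\xbf,\ybf)}$, yielding $W_p(\delta_\xbf,\delta_\ybf) = D(\xbf,\ybf)$. Squaring the inequality produces \eqref{eq:eqcond}.

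The closing step (iv) $\Rightarrow$ (i) is the most conceptually interesting one and is where the reproducing kernel structure enters critically. For $f \in B_\kappa$ and $\xbf,\ybf \in \Xcal$, the reproducing property gives $f(\xbf) - f(\ybf) = \langle f, \kappa(\cdot,\xbf) - \kappa(\cdot,\ybf)\rangle_{\Hcal_\kappa}$. Cauchy--Schwarz then yields
\begin{equation}
|f(\xbf) - f(\ybf)| \leq \|f\|_{\Hcal_\kappa} \cdot \|\kappa(\cdot,\xbf) - \kappa(\cdot,\ybf)\|_{\Hcal_\kappa} \leq \|\kappa(\cdot,\xbf) - \kappa(\cdot,\ybf)\|_{\Hcal_\kappa},
\end{equation}
and expanding the RKHS norm with the reproducing property gives $\|\kappa(\cdot,\xbf) - \kappa(\cdot,\ybf)\|_{\Hcal_\kappa}^2 = \kappa(\xbf,\xbf) + \kappa(\ybf,\ybf) - 2\kappa(\xbf,\ybf)$, which is upper-bounded by $C^2 D^2(\xbf,\ybf)$ under (iv). Thus $|f(\xbf)-f(\ybf)| \leq C\,D(\xbf,\ybf)$, proving (i). I do not anticipate a hard step: the only delicate point is making sure the variational representation of $\|\cdot\|_\kappa$ over $B_\kappa$ is valid for the probability measures considered (which requires enough integrability, automatic for bounded kernels and otherwise verifiable from the Cauchy--Schwarz argument above applied to $f = \kappa(\cdot,\xbf_0)/\sqrt{\kappa(\xbf_0,\xbf_0)}$ together with the growth bound derived from (iv)).
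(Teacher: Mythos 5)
Your proposal follows exactly the same cyclic chain (i)$\Rightarrow$(ii)$\Rightarrow$(iii)$\Rightarrow$(iv)$\Rightarrow$(i) as the paper, using the same tools at each step: the variational characterization of the MMD over $B_\kappa$ and Kantorovich duality for (i)$\Rightarrow$(ii) with the monotonicity $W_1\leq W_p$, Dirac masses for (iii)$\Rightarrow$(iv), and the reproducing property with Cauchy--Schwarz for (iv)$\Rightarrow$(i). The argument is correct and matches the paper's proof.
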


For the sake of clarity, we restrict ourselves to the case where $D$ is a proper metric but extensions of this result are possible by considering an OT problem with a more general cost. In particular, this type of bound has already been considered in \citet{Arbel,Sriperumbudur} with the pseudo-metric $D(\xbf,\ybf)=\|\kappa(\xbf,\cdot)-\kappa(\ybf,\cdot)\|_{\Hcal_{\kappa}}$ which gives $C=1$ and an equality in \eqref{eq:eqcond}.   As a corollary of this proposition we have the following result (see Appendix \ref{sec:proof:prop:mmdboundedwass} for a proof):

\begin{restatable}{corollary}{corrmmdboundedwass}
\label{corr:mmdboundedwass}
Consider $\Xcal=\R^{d}$ equipped with the Euclidean distance $D(\xbf,\ybf)=\|\xbf-\ybf\|_2$ and a PSD kernel $\kappa: \R^{d} \times \R^{d} \rightarrow \R$ that is normalized, \ie\ $\kappa(\xbf,\xbf)=1$ for every $\xbf \in \Xcal$. Assume that for each $\xbf \in \Xcal$ the function $\phi_{\xbf} : \ybf  \mapsto \kappa(\xbf,\ybf)$ is $C^{2}$ in a neighborhood of $\xbf$, and denote $\mathbf{H}_{\xbf} = -\nabla^{2} [\phi_{\xbf}](\xbf)$ its negative Hessian matrix evaluated at $\xbf$. Then the following holds:
\begin{enumerate}[label=(\roman*)]
\item Any of the four equivalent properties of Proposition \ref{prop:mmdboundedwass} implies
\begin{equation}
\label{eq:HessianCondition}
\sup_{\xbf \in \R^{d}} \lambda_{\max}(\mathbf{H}_{\xbf}) \leq C^{2}\,,
\end{equation}
where $\lambda_{\max}(\mathbf{H}_{\xbf})$ denotes the largest eigenvalue of $\mathbf{H}_{\xbf}$.
\item If $\kappa$ is translation invariant, \ie\  $\kappa(\xbf,\ybf)=\kappa_{0}(\xbf-\ybf)$ for every $\xbf,\ybf \in \Xcal$,
then conversely, \eqref{eq:wass_control_mmd} holds with $C:= \sqrt{\sup_{\xbf} \lambda_{\max}(\mathbf{H}_{\xbf})} = \sqrt{\lambda_{\max}(-\nabla^{2}[\kappa_{0}](0))}$.
\end{enumerate}
\end{restatable}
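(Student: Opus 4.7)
The plan is to use condition (iv) of Proposition \ref{prop:mmdboundedwass} as the bridge in both directions, namely
\[
\kappa(\xbf,\xbf)+\kappa(\ybf,\ybf)-2\kappa(\xbf,\ybf)\leq C^{2}\|\xbf-\ybf\|_{2}^{2}\qquad \forall \xbf,\ybf\in\R^{d},
\]
and to connect its left-hand side with the local Hessian of $\kappa$ at the diagonal.

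For (i), I would start from the normalization $\kappa(\xbf,\xbf)=\kappa(\ybf,\ybf)=1$, which rewrites the inequality above as $2(1-\phi_{\xbf}(\ybf))\leq C^{2}\|\xbf-\ybf\|_{2}^{2}$. The key preliminary observation is that $\xbf$ is a global maximum of $\phi_{\xbf}$: Cauchy--Schwarz in $\Hcal_{\kappa}$ gives $\phi_{\xbf}(\ybf)=\langle \kappa(\cdot,\xbf),\kappa(\cdot,\ybf)\rangle_{\Hcal_{\kappa}}\leq \sqrt{\kappa(\xbf,\xbf)\kappa(\ybf,\ybf)}=1=\phi_{\xbf}(\xbf)$, so the $C^{2}$ hypothesis forces $\nabla \phi_{\xbf}(\xbf)=0$. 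A second-order Taylor expansion along $\ybf=\xbf+t\vbf$ for a unit vector $\vbf$ then gives
\[
2\bigl(1-\phi_{\xbf}(\xbf+t\vbf)\bigr)=t^{2}\,\vbf^{\top}\mathbf{H}_{\xbf}\vbf+o(t^{2}),
\]
since $\mathbf{H}_{\xbf}=-\nabla^{2}[\phi_{\xbf}](\xbf)$. Dividing condition (iv) by $t^{2}$ and letting $t\to 0^{+}$ yields $\vbf^{\top}\mathbf{H}_{\xbf}\vbf\leq C^{2}$; taking the supremum over unit $\vbf$ and then over $\xbf$ finishes (i).

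For (ii), under translation invariance $\mathbf{H}_{\xbf}=-\nabla^{2}[\kappa_{0}](0)$ is independent of $\xbf$, and it suffices to establish condition (iv) with the proposed $C^{2}=\lambda_{\max}(-\nabla^{2}[\kappa_{0}](0))$. I would invoke Bochner (Proposition \ref{theo:bochner}): since $\kappa_{0}$ is real-valued with $\kappa_{0}(0)=1$, there exists $\Lambda\in\P(\R^{d})$ with $\kappa_{0}(\ubf)=\int_{\R^{d}}\cos(\w^{\top}\ubf)\,\dr\Lambda(\w)$. The elementary estimate $1-\cos t\leq t^{2}/2$ integrated against $\Lambda$ yields
\[
2\bigl(1-\kappa_{0}(\ubf)\bigr)\leq \int_{\R^{d}}(\w^{\top}\ubf)^{2}\,\dr\Lambda(\w)=\ubf^{\top} M\,\ubf,\qquad M:=\int_{\R^{d}}\w\w^{\top}\,\dr\Lambda(\w).
\]
Differentiating the cosine representation twice at the origin identifies $M=-\nabla^{2}[\kappa_{0}](0)=\mathbf{H}_{\xbf}$, hence $2(1-\kappa_{0}(\xbf-\ybf))\leq \lambda_{\max}(\mathbf{H}_{\xbf})\,\|\xbf-\ybf\|_{2}^{2}$, which is exactly condition (iv). Proposition \ref{prop:mmdboundedwass} then delivers \eqref{eq:wass_control_mmd}.

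The only non-routine point is the justification of differentiation under the integral when identifying $M$ with $-\nabla^{2}[\kappa_{0}](0)$: one needs the second moment of $\Lambda$ to be finite. This is guaranteed by the assumed $C^{2}$ regularity of $\kappa_{0}$ near $0$ (via dominated convergence applied to the symmetric difference quotients of $\kappa_{0}$), since otherwise $\lambda_{\max}(\mathbf{H}_{\xbf})=+\infty$ and the corollary is vacuous. Everything else is a Taylor expansion or a one-line application of the previous proposition.
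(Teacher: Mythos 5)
Your proof follows essentially the same route as the paper: part (i) via the Taylor expansion of $\phi_{\xbf}$ at its maximizer $\xbf$ combined with property (iv) of Proposition \ref{prop:mmdboundedwass}, and part (ii) via Bochner's representation $\kappa_{0}(\ubf)=\E_{\w\sim\Lambda}[\cos(\w^{\top}\ubf)]$, the bound $1-\cos t\leq t^{2}/2$, and the identification $-\nabla^{2}[\kappa_{0}](0)=\E_{\w\sim\Lambda}[\w\w^{\top}]$. Your closing remark about finiteness of $\Lambda$'s second moment matches the paper's observation that the $C^{2}$ hypothesis forces $\Lambda\in\P_{2}(\R^{d})$, so nothing is missing.
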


The second point of the previous result shows that under mild assumptions on a TI kernel the MMD is bounded by a constant times a Wasserstein distance, for \emph{any} distributions $\pi,\pi'$ for which these quantities are well-defined. In particular it holds for popular kernels such as the Gaussian kernel, or kernels of the Matérn class with parameter\footnote{In this case $\kappa_0$ is $C^{2}$ in a neighbourhood of $0$ since $\kappa_0 \in L_1(\R^{d})$ and $\w \rightarrow \|\w\|_2^{2} \widehat{\kappa_0}(\w) \in L_1(\R^{d})$ when $\nu >1$} $\nu > 1$:
\begin{myexample}\label{ex:matern}
An important family of TI kernels is the Matérn class \citep[Section 4.2.1]{Rasmussen}, given in any dimension by the relation $\kappa(\xbf,\ybf)\coloneqq \frac{2^{1-\nu}}{\Gamma(\nu)}(\frac{\sqrt{2\nu}\|\xbf-\ybf\|_2}{\sigma})^{\nu}K_{\nu}(\frac{\sqrt{2\nu}\|\xbf-\ybf\|_2}{\sigma})$ for $\nu >0, \sigma >0$ where $\Gamma$ is the gamma function, and $K_{\nu}$ is the modified Bessel function of the second kind of order $\nu$. This family of kernel admits the following Fourier transform\footnote{See \citet[Section 4.2.1]{Rasmussen} with slightly modified conventions on Fourier transforms.} :
\begin{equation}
\label{eq:matern_equation}
\widehat{\kappa_0}(\w)=\frac{2^{d+\nu} \pi^{d/2} \Gamma(\nu+d / 2) \nu^{\nu}}{\Gamma(\nu) \sigma^{2 \nu}}\left(\frac{2 \nu}{\sigma^{2}}+\|\w\|_{2}^{2}\right)^{-(\nu+d / 2)}\,.
\end{equation}
Interestingly, $\nu=\frac{1}{2}$ corresponds to the Laplacian kernel $\kappa(\xbf,\ybf)=\exp(-\|\xbf-\ybf\|_2/\sigma)$ whose Fourier transform is $\frac{2^{d}\pi^{\frac{d-1}{2}}\Gamma(\frac{d+1}{2})}{\sigma}(\frac{1}{\sigma^{2}} +\|\w\|_2^{2})^{-\frac{d+1}{2}}$ while $\nu \rightarrow +\infty$ recovers the RBF kernel see \citet[Section 4.2.1]{Rasmussen}\footnote{Likewise, with adapted conventions on Fourier transforms.}.
\end{myexample}

Note that when the kernel is TI but is not normalized the second point of Corollary \ref{corr:mmdboundedwass}  holds also with $C=\kappa_{0}(0)\sqrt{\lambda_{\max}(-\nabla^{2}[\kappa_{0}](0))}$.  For other types of \emph{normalized} kernels, condition \eqref{eq:eqcond} is a necessary and sufficient condition that amounts to checking if there is a constant $C>0$ such that $1-\kappa(\xbf,\ybf) \leq \frac{C^{2}}{2} D^{2}(\xbf,\ybf)$ for all $\xbf,\ybf \in \Xcal$. Interestingly, it echoes the ‘‘$C$-strongly locally characteristic'' property of the kernel as in \citet[Definition~5.14]{gribonval2020statistical} but with the reverse inequality. When the kernel is $C^{2}$ a necessary condition is given by the maximum eigenvalue of the negative Hessian as in \eqref{eq:HessianCondition}. 

Overall Proposition \ref{prop:mmdboundedwass} shows that it is not too difficult to find necessary and sufficient conditions under which the MMD can be controlled by a Wasserstein distance. What is more difficult to characterize is the inequality in the other direction. 

\subsection{Controlling Wasserstein distances by MMDs ?}

Thereafter, the objective is thus to find reasonable conditions on a subset of probability distributions $\Sfrak \subseteq \P(\Xcal)$ and on a PSD kernel $\kappa$ such that the Wasserstein distance can be controlled with the MMD with kernel $\kappa$ uniformly on $\Sfrak$. We adopt the following definition:
\begin{restatable}{definition}{wasslearn_emb}
\label{def:kappaembedable}
Let $\Sfrak \subseteq \P(\Xcal)$ be a subset of probability distributions, $p \in [1,+\infty)$, $\kappa$ a real-valued PSD kernel on $\Xcal$ and $\delta \in (0,1]$. We say that the space $(\Sfrak,\W_p)$ is $(\kappa,\delta)$-embeddable with error $\eta \geq 0$ if
\begin{equation}
\label{eq:eqtoprove}
\exists C >0, \forall \pi,\pi' \in \Sfrak, \W_p(\pi,\pi')\leq C \|\pi-\pi'\|^{\delta}_{\kappa}+\eta\,.
\end{equation}
When $\eta=0$ we simply say that $(\Sfrak,\W_p)$ is $(\kappa,\delta)$-embeddable.
\end{restatable}

Note that the constants $C,\eta,\delta$ in \eqref{eq:eqtoprove} \emph{do not depend} on the probability distributions $\pi,\pi'$: we want to bound uniformly on the whole subset $\Sfrak$. In the following, we will call \emph{model set} this subset $\Sfrak$. As discussed later in Section \ref{sec:compress_section}, introducing $\Sfrak$ will also be crucial in order to obtain compressive statistical learning guarantees. Moreover, we are particularly interested in establishing such an inequality for translation-invariant PSD kernels that at the core of the CSL theory since they admit a random Fourier feature expansion useful to find a sketching operator based on random Fourier features \citep{gribonval2020compressive}. 

\begin{remark}
An immediate consequence of Definition~\ref{def:kappaembedable} is that when $(\Sfrak,\W_p)$ is $(\kappa,\delta)$-embeddable (\ie\ with no error) then the kernel $\kappa$ is necessarily characteristic to $\Sfrak$ 
\citep[Section 1.2]{simongabriel2020metrizing},
in other words $\|\pi-\pi'\|_{\kappa} = 0 \iff \pi= \pi'$ for all $\pi,\pi' \in \Sfrak$ (indeed when the MMD vanishes then the Wasserstein distance also vanishes which implies equality of the distributions). Moreover, if $(\Sfrak,\W_p)$ is $(\kappa,\delta=1)$-embeddable and if the condition \eqref{eq:eqcond} is also fulfilled, then $\W_p$ and $\|\cdot\|_{\kappa}$ induce the same topology on $\Sfrak$ and define equivalent metrics on $\Sfrak$.
\end{remark}

\begin{remark}
\label{rem:monotonicityembedability}
If $\Sfrak \subseteq \Sfrak'$ where $(\Sfrak',\W_{p})$ is $(\kappa,\delta)$-embeddable then $(\Sfrak,\W_{p})$ is also $(\kappa,\delta)$-embeddable. In other words, if $\Sfrak$ is contained in a space that is $(\kappa,\delta)$-embeddable it is also $(\kappa,\delta)$-embeddable. On the other hand, if $\Sfrak'$ contains a subspace $\Sfrak$ for which there is a necessary condition to the $(\kappa,\delta)$-embeddability property then the same condition applies to $\Sfrak'$.
\end{remark}

In the following we focus on property~\eqref{eq:eqtoprove} with no error $\eta=0$. First we consider necessary conditions, that is, we argue that property~\eqref{eq:eqtoprove} with no error can only be expected to hold for a kernel $\kappa$ and a model set $\Sfrak$ if certain appropriate assumptions are made. 
Conversely, we then derive some sufficient conditions on $\Sfrak$ and $\kappa$ such that $(\Sfrak,\W_p)$ is $(\kappa,\delta)$-embeddable.

\subsection{Necessary Conditions}
Let us first review some necessary conditions for property~\eqref{eq:eqtoprove} with no errror.

\subsubsection{Boundedness of the Model Set is Necessary.}  Consider a model set $\Sfrak \subseteq \P_{1}(\R^{d})$ and denote by
\begin{equation*}
\label{eq:DefMean}
\operatorname{m}(\pi):=\int \xbf \dr \pi(\xbf)
\end{equation*} the mean of $\pi \in \P_{1}(\R^{d})$. On the one hand, simple calculus (Lemma \ref{lemma:wass_bound_mean} in Appendix \ref{sec:simple_bound_mean_wass}) shows that for any $\pi,\pi' \in \P(\R^{d})$ and $p\in [1,+\infty)$,
if $\W_{p}$ is defined based on some norm $\|\cdot\|$ and $\|\cdot\|_{\star}$ denotes the dual norm defined by $\|\zbf\|_{\star} = \sup_{\|\xbf\| \leq 1} \langle \xbf, \zbf \rangle$, then 
\begin{equation*}
\W_p(\pi,\pi') \geq  \| \operatorname{m}(\pi)-\operatorname{m}(\pi')\|_\star.
\end{equation*}
On the other hand, if $\kappa$ is a bounded PSD kernel (\ie,\ $\sup_{\xbf} \kappa(\xbf,\xbf) \leq K<+\infty$) then, by the Cauchy-Schwarz inequality for kernels we have $\forall \xbf, \ybf, \ |\kappa(\xbf,\ybf)| \leq \sqrt{\kappa(\xbf,\xbf)} \sqrt{\kappa(\ybf,\ybf)} \leq K$. Hence, for any $(\pi,\pi') \in \Sfrak, \ \|\pi-\pi'\|_{\kappa} \leq \sqrt{2K}$. As a result, if $\Sfrak$ is unbounded in the sense that
$\sup_{\pi,\pi' \in \Sfrak} \| \operatorname{m}(\pi)-\operatorname{m}(\pi')\|_{\star} = +\infty$, then for each $\delta>0$,
\begin{equation}
\label{eq:NoEmbedability}
\sup_{(\pi,\pi') \in \Sfrak} \frac{\W_p(\pi,\pi')}{\|\pi-\pi'\|^{\delta}_{\kappa}}=+\infty\,.
\end{equation}
Consequently, we can not have \eqref{eq:eqtoprove} for any $\delta>0$. Since all norms are equivalent in finite dimension the following lemma holds:
\begin{restatable}{lemma}{BoundedModelNecessary}
\label{lem:BoundedModelNecessary}
Consider $\Xcal = \R^{d}, p \in [1,+\infty)$ and assume that $\W_{p}$ is based on a norm on $\R^{d}$.
If $\kappa$ is bounded and $(\Sfrak,\W_p)$ is $(\kappa,\delta)$-embeddable for some $\delta > 0$ then $\Sfrak$ is 
bounded:  $$\operatorname{m-diam}(\Sfrak) : = \sup_{\pi,\pi' \in \Sfrak} \| \operatorname{m}(\pi)-\operatorname{m}(\pi')\|_2 < +\infty\,.$$
\end{restatable}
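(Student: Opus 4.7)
The plan is to argue by contrapositive, combining the two ingredients already assembled in the discussion just above the lemma statement. First I would invoke the lower bound $W_p(\pi,\pi') \geq \|\operatorname{m}(\pi) - \operatorname{m}(\pi')\|_2$ from Lemma~\ref{lemma:wass_bound_mean}, which turns a large separation of means into a large Wasserstein distance. Second, I would use boundedness of the kernel: if $\sup_{\xbf} \kappa(\xbf,\xbf) \leq K$ then, by the Cauchy–Schwarz inequality for kernels, $|\kappa(\xbf,\ybf)| \leq K$, and expanding
\begin{equation}
\|\pi-\pi'\|_{\kappa}^{2} = \underset{(\xbf,\xbf')\sim \pi}{\E}[\kappa(\xbf,\xbf')] + \underset{(\ybf,\ybf')\sim \pi'}{\E}[\kappa(\ybf,\ybf')] - 2\underset{(\xbf,\ybf)\sim \pi \otimes \pi'}{\E}[\kappa(\xbf,\ybf)]
\end{equation}
yields $\|\pi-\pi'\|_{\kappa} \leq \sqrt{2K}$ uniformly over all probability distributions, hence $\|\pi-\pi'\|_{\kappa}^{\delta} \leq (2K)^{\delta/2}$.

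Then I would argue by contradiction. Assume $\operatorname{m-diam}(\Sfrak) = +\infty$ and extract a sequence $(\pi_n,\pi'_n) \in \Sfrak \times \Sfrak$ with $\|\operatorname{m}(\pi_n)-\operatorname{m}(\pi'_n)\|_2 \to +\infty$. The first bound gives $W_p(\pi_n,\pi'_n) \to +\infty$, while the second keeps $\|\pi_n-\pi'_n\|_{\kappa}^{\delta}$ bounded by $(2K)^{\delta/2}$. Injecting this into the assumed $(\kappa,\delta)$-embedability inequality $W_p(\pi,\pi') \leq C \|\pi-\pi'\|_{\kappa}^{\delta}$ (with $\eta=0$) yields $+\infty \leq C (2K)^{\delta/2} < +\infty$, a contradiction. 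Hence $\operatorname{m-diam}(\Sfrak) < +\infty$, which is the desired conclusion. The argument is in fact sharper than needed: it shows that whenever $\operatorname{m-diam}(\Sfrak) = +\infty$ one has $\sup_{\pi,\pi'\in \Sfrak} W_p(\pi,\pi')/\|\pi-\pi'\|_{\kappa}^{\delta} = +\infty$ for every $\delta>0$, matching~\eqref{eq:NoEmbedability}.

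There is essentially no obstacle here: the proof reduces to reading off the two displayed bounds from the paragraph preceding the lemma and combining them. The only mild care needed is to phrase the conclusion so that it is valid for any $\delta > 0$ (not just $\delta=1$) and to remember that the bound on $\|\pi-\pi'\|_{\kappa}$ relies only on boundedness of $\kappa$, not on any property of the model set itself.
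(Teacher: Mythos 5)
Your argument is correct and is essentially identical to the paper's: it combines the mean-separation lower bound $W_p(\pi,\pi')\geq\|\operatorname{m}(\pi)-\operatorname{m}(\pi')\|_2$ from Lemma~\ref{lemma:wass_bound_mean} with the uniform bound on the MMD of a bounded kernel, and concludes by contradiction exactly as in the paragraph preceding the lemma, reproducing the blow-up in~\eqref{eq:NoEmbedability}. The specific numerical constant $\sqrt{2K}$ that you (and the paper) quote for the MMD bound is not the sharpest possible (elementary triangle inequality in the RKHS only gives $2\sqrt{K}$, and the cross-term need not be non-negative), but any finite uniform bound suffices, so this has no bearing on the validity of the proof.
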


\subsubsection{Bounds on $\delta$ due to the Convergence Rate of Empirical Measures.}
Another obstacle to \eqref{eq:eqtoprove} concerns the samples rate of convergence of both terms with empirical measures : it is known that the Wasserstein distance suffers from the curse of dimensionality while the MMD does not. More precisely if $\pi \in \P_{1}(\R^{d})$ is absolutely continuous with respect to the Lebesgue measure on $\R^{d}$ then it is known that $\E[\W_1(\pi,\pi_n)]\gtrsim n^{-1/d}$ where $\pi_n=\frac{1}{n} \sum_{i=1}^{n} \delta_{\xbf_i}$, $\xbf_i \sim \pi$ and the expectation is taken \textit{w.r.t.} the draws of $\xbf_i $ \citep{dudley1969,weedbach2017}. By monotonicity of $\W_p$ in $p$ this is also true for $\W_p$ with $p\geq 1$ (since for $p\leq q, \W_p(\pi,\pi)\leq \W_q(\pi,\pi')$ for any\footnote{This is a consequence of Jensen inequality \citep[Section 5.1]{San15a}.} $\pi,\pi'$).  On the contrary, it is not difficult to see that if the PSD kernel $\kappa$ is bounded by $K$ then $\E[\|\pi-\pi_n\|_{\kappa}^{\delta}]\leq (2K)^{\delta/2} n^{-\delta/2}$ (see Lemma \ref{lemma:convergence_finite_sample_lemma_mmd} in Appendix \ref{sec:conv_finite_samples}). 
Consequently, even when the model set $\Sfrak \subseteq \P_{1}(\R^{d})$ satisfies $\operatorname{m-diam}(\Sfrak)<+\infty$ (to avoid the obstacles to \eqref{eq:eqtoprove}  already identified in Lemma~\ref{lem:BoundedModelNecessary}), if $\Sfrak$
is rich enough to contain a distribution $\pi$ that is absolutely continuous \textit{w.r.t.} the Lebesgue measure, as well 
as its empirical distributions $\pi_{n}$ for every $n$, then \eqref{eq:eqtoprove} implies $n^{-1/d} \lesssim n^{-\delta/2}$, so necessarily $\delta \leq 2/d$. An example of such a model set is the set of all probability distributions producing almost surely vectors in a prescribed ball, leading to the following result:
\begin{restatable}{lemma}{conditionond}
Consider $R>0$, $\Omega=B(0,R) \subseteq \Xcal = \R^{d}$,  $\Sfrak:=\{\pi \in \P(\Xcal): \pi(\Omega) =1\}$, $\kappa$ a bounded PSD kernel, and $\W_{p}$ based on a norm in $\R^{d}$ with $p\in [1,+\infty)$. If $(\Sfrak,\W_p)$ is $(\kappa,\delta)$-embeddable then $\delta \leq 2/d$.
\end{restatable}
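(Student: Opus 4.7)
The plan is to apply the hypothesized $(\kappa,\delta)$-embeddability to a distribution and its empirical version, then exploit the known gap between the sample convergence rates of $W_p$ and of the MMD.

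First, I would specialize to a convenient choice inside $\Sfrak$: take $\pi$ to be the uniform distribution on $\Omega = B(0,R)$. This $\pi$ is absolutely continuous with respect to Lebesgue and is supported in $\Omega$, so $\pi \in \Sfrak$. Given an i.i.d.\ sample $\xbf_1,\ldots,\xbf_n \sim \pi$, the empirical measure $\pi_n = \frac{1}{n}\sum_{i=1}^n \delta_{\xbf_i}$ also satisfies $\pi_n(\Omega) = 1$ almost surely, so $\pi_n \in \Sfrak$ with probability one. The embeddability assumption (with $\eta=0$) then yields, almost surely,
\begin{equation*}
W_p(\pi,\pi_n) \leq C\,\|\pi-\pi_n\|_\kappa^{\delta}.
\end{equation*}
Taking expectations gives $\E[W_p(\pi,\pi_n)] \leq C\,\E[\|\pi-\pi_n\|_\kappa^{\delta}]$.

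Next, I would sandwich both sides by known rates. For the lower bound, since $p \geq 1$ implies $W_p \geq W_1$ by monotonicity, and since $\pi$ is absolutely continuous with respect to the $d$-dimensional Lebesgue measure, the Dudley / Weed--Bach lower bound gives $\E[W_p(\pi,\pi_n)] \geq \E[W_1(\pi,\pi_n)] \gtrsim n^{-1/d}$. For the upper bound, boundedness of $\kappa$ (say $\kappa(\xbf,\xbf) \leq K$) gives the classical MMD variance estimate $\E[\|\pi-\pi_n\|_\kappa^2] \leq 2K/n$, hence by Jensen's inequality applied to the concave map $t \mapsto t^{\delta/2}$ (valid since $\delta \leq 1$),
\begin{equation*}
\E[\|\pi-\pi_n\|_\kappa^{\delta}] \leq \bigl(\E[\|\pi-\pi_n\|_\kappa^2]\bigr)^{\delta/2} \leq (2K)^{\delta/2}\,n^{-\delta/2}.
\end{equation*}
Chaining the two estimates produces constants $c_1,c_2>0$ (independent of $n$) with $c_1\,n^{-1/d} \leq c_2\,n^{-\delta/2}$ for every $n \geq 1$. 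Letting $n \to \infty$ forces $\delta/2 \leq 1/d$, i.e.\ $\delta \leq 2/d$.

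I do not anticipate a real obstacle: both the Wasserstein lower bound and the MMD upper bound are essentially recalled in the paragraph preceding the lemma, so the proof amounts to verifying that the chosen $\pi$ and the random $\pi_n$ both lie in $\Sfrak$ (trivial from $\supp \pi \subseteq \Omega$) and then combining the rates. The only mildly delicate point is bounding $\E[\|\pi-\pi_n\|_\kappa^\delta]$ from $\E[\|\pi-\pi_n\|_\kappa^2]$, which is handled by Jensen since $\delta \in\, ]0,1]$; this is presumably what is packaged in Lemma~\ref{lemma:convergence_finite_sample_lemma_mmd} of the appendix referenced just above the statement.
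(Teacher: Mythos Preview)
Your proposal is correct and follows essentially the same argument as the paper: the paragraph preceding the lemma already sketches exactly this proof (pick an absolutely continuous $\pi$ supported in $\Omega$, note that $\pi_n \in \Sfrak$, invoke the Dudley/Weed--Bach lower bound $\E[W_1(\pi,\pi_n)] \gtrsim n^{-1/d}$, the MMD upper bound of Lemma~\ref{lemma:convergence_finite_sample_lemma_mmd}, and conclude $n^{-1/d} \lesssim n^{-\delta/2}$). Your identification of the Jensen step and the appendix lemma is spot on.
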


In the context of CSL, as described in Section \ref{sec:compress_section}, such $\delta\leq 2/d$ would imply a very slow convergence rate of the order of $O(n^{-\frac{1}{d}})$. In other words, if the strategy described in Section \ref{sec:compress_section} is followed we would require an exponential amount of samples in order to have reasonable CSL guarantees which is problematic for a large scale scenario where $d$ is usually large. This discussion suggests that we must find suitable constraints on $p, \delta, \kappa$ and $\Sfrak$ to avoid such a curse of dimensionality. Sufficient conditions to achieve this goal will be discussed later, but first we continue with some additional necessary conditions.

\subsubsection{Another Bound on $\delta$ for Certain Model Sets}
Another restriction comes from the type of distributions in the model set. We will prove that, as soon as $\Sfrak$ contains two distributions whose supports are disjoint, as well as the convex segment between these distributions, we cannot hope to have \eqref{eq:eqtoprove} with error $\eta=0$ when $p\cdot \delta >1$. \begin{restatable}{proposition}{weedprop}
\label{weedprop}
Let $(\Xcal,D)$ be a complete and separable metric space and consider the Wasserstein distances computed with the distance $D$. Let
$\kappa$ be any PSD kernel. Consider two arbitrary probability distributions $\pi_0,\pi_1 \in \P(\Xcal)$  such that 
 $\|\pi_0-\pi_1\|_{\kappa}<+\infty$ and $\supp(\pi_0)$ and $\supp(\pi_1)$ are disjoint\footnote{We recall that the support $\supp(\pi)$ of a probability distribution $\pi \in \P(\Xcal)$ is  the smallest closed set $S$ such that $\pi(S)=1$.}.  Consider $\Sfrak:=\{(1-t)\pi_0+t\pi_1: t \in [0,1]\}$. If $(\Sfrak,\W_p)$ is $(\kappa,\delta)$-embeddable then $\delta \leq 1/p$.
\end{restatable}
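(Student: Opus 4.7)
The strategy is to plug a carefully chosen pair of measures in $\Sfrak$ into the embeddability inequality and let one parameter tend to $0$. Specifically, for $t\in(0,1]$, I will compare $\mu_t:=(1-t)\pi_0+t\pi_1\in\Sfrak$ with the endpoint $\mu_0=\pi_0\in\Sfrak$. Since $\|\cdot\|_{\kappa}$ is a seminorm on $\Mcal(\Xcal)$ and $\mu_t-\pi_0=t(\pi_1-\pi_0)$, the MMD side is linear in $t$: $\|\mu_t-\pi_0\|_{\kappa}=t\,\|\pi_1-\pi_0\|_{\kappa}$. The embeddability assumption thus reads
\begin{equation*}
W_p(\mu_t,\pi_0)\ \leq\ C\,t^{\delta}\,\|\pi_1-\pi_0\|_{\kappa}^{\delta},\qquad t\in(0,1].
\end{equation*}

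The heart of the proof is the matching lower bound $W_p(\mu_t,\pi_0)\gtrsim t^{1/p}$, obtained from the disjoint-support hypothesis. Set $A_0:=\supp(\pi_0)$, $A_1:=\supp(\pi_1)$, which are disjoint closed sets. Any coupling $\gamma\in\Pi(\mu_t,\pi_0)$ has second marginal supported in $A_0$, so $\gamma(\Xcal\times A_1)=0$ and $\gamma$ concentrates on $(A_0\cup A_1)\times A_0$. A direct mass-balance computation using $\mu_t(A_1)=t$ shows that $\gamma(A_1\times A_0)=t$ exactly, that the restriction $\gamma^{10}:=\gamma|_{A_1\times A_0}$ has first marginal $t\pi_1$, and that its second marginal $\tau$ is a sub-measure of $\pi_0$ of total mass $t$. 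Rescaling, $\gamma^{10}/t$ is a coupling of $\pi_1$ with the probability measure $\tau/t\in\P(A_0)$, whence
\begin{equation*}
\int D^p\,d\gamma\ \geq\ \int D^p\,d\gamma^{10}\ =\ t\!\int D^p\,d(\gamma^{10}/t)\ \geq\ t\,W_p^p(\pi_1,\tau/t)\ \geq\ t\inf_{\sigma\in\P(A_0)} W_p^p(\pi_1,\sigma).
\end{equation*}
Lemma \ref{lemma:rosaco2} applied with $S=A_0$ and a measurable nearest-point projection $P_{A_0}$ identifies the last infimum as $\E_{\pi_1}[D(\cdot,A_0)^p]$. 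Because $A_0$ is closed and $\pi_1(A_0)=0$, the function $D(\cdot,A_0)$ is strictly positive $\pi_1$-almost surely, so $c:=\E_{\pi_1}[D(\cdot,A_0)^p]>0$. Taking the infimum over $\gamma$ gives $W_p(\mu_t,\pi_0)\geq c^{1/p}\,t^{1/p}$.

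Combining the upper and lower bounds yields $c^{1/p}\,t^{1/p}\leq C\,t^{\delta}\,\|\pi_1-\pi_0\|_{\kappa}^{\delta}$, i.e.\ $t^{1/p-\delta}$ is bounded by a constant independent of $t$ for every $t\in(0,1]$. If $\delta>1/p$, the left-hand side diverges as $t\to 0^{+}$, contradicting the finiteness of the right-hand side (guaranteed by the assumption $\|\pi_1-\pi_0\|_{\kappa}<+\infty$). Hence $\delta\leq 1/p$.

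The main obstacle is extracting the sharp $t^{1/p}$ rate rather than the naive linear rate $t$: using $W_p\geq W_1$ would only give $W_1(\mu_t,\pi_0)=t\,W_1(\pi_0,\pi_1)$ and ultimately $\delta\leq 1$, which is weaker than $\delta\leq 1/p$ as soon as $p>1$. The improvement comes from working directly at the level of $p$-th powers: exactly $t$ units of mass must be routed from $A_1$ to $A_0$, and the associated sub-coupling cost scales as $t$ times a positive Wasserstein-projection constant, producing the $t^{1/p}$ behaviour of $W_p$ itself.
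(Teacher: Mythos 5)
Your proof is correct, and it takes a genuinely different route from the paper's. The paper compares the two-parameter family $\pi_\lambda = \tfrac{1}{2}((1+\lambda)\pi_0 + (1-\lambda)\pi_1)$ and $\pi'_\lambda = \tfrac{1}{2}((1-\lambda)\pi_0 + (1+\lambda)\pi_1)$, and lower-bounds $W_p(\pi_\lambda,\pi'_\lambda)$ by quoting a separation lemma of Niles-Weed and Berthet (Lemma~\ref{lemma:weedlemma}): if the supports lie in sets $S,T$ with $d(S,T)\geq c>0$ then $W_p \geq c\,|\pi_\lambda(S)-\pi'_\lambda(S)|^{1/p} = c\lambda^{1/p}$. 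You instead compare $\mu_t = (1-t)\pi_0+t\pi_1$ with the endpoint $\pi_0$, and produce the $t^{1/p}$ lower bound by a direct coupling/mass-balance argument: exactly $t$ units of mass must flow from $A_1=\supp(\pi_1)$ to $A_0=\supp(\pi_0)$, and the per-unit cost is at least $\E_{\pi_1}[D(\cdot,A_0)^p]>0$. The two approaches buy different things. The paper's lemma-based route is shorter on the page but, as written, relies on deducing a \emph{positive} separation $d(S,T)\geq c>0$ from mere disjointness of the supports — which does not hold in general for disjoint closed sets in a Polish space. Your argument needs only disjointness: $\pi_1(A_0)=0$ and $A_0$ closed already give $D(\cdot,A_0)>0$ $\pi_1$-a.s., so $\E_{\pi_1}[D(\cdot,A_0)^p]>0$ even when $\inf_{A_1}D(\cdot,A_0)=0$. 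In that sense your proof is the more careful one relative to the hypotheses actually stated in the proposition. One small technical remark: you invoke Lemma~\ref{lemma:rosaco2} via a ``measurable nearest-point projection $P_{A_0}$'', but such a projection need not exist for an arbitrary closed subset of a general complete separable metric space (the infimum defining $D(x,A_0)$ may not be attained). This is harmless here because you only need the lower bound $\inf_{\sigma\in\P(A_0)} W_p^p(\pi_1,\sigma)\geq \E_{\pi_1}[D(\cdot,A_0)^p]$, which follows directly: any coupling $\rho$ of $\pi_1$ with a measure supported in $A_0$ satisfies $D(x,y)\geq D(x,A_0)$ on $\supp(\rho)$, so $\int D^p\,d\rho \geq \E_{\pi_1}[D(\cdot,A_0)^p]$ without appealing to a selection of nearest points. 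With that substitution your proof is self-contained and fully rigorous.
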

The result is mostly based on \citet{nilesweed2020minimax}. Its proof in Appendix \ref{sec:proof_weed_prop} essentially amounts to showing~\eqref{eq:NoEmbedability} as soon as $p\cdot\delta>1$. Following Remark~\ref{rem:monotonicityembedability}, the same conclusion holds if $\Sfrak$ only \emph{contains} the convex combinations of distributions $\pi_0,\pi_1$ as in the above proposition. For a bounded kernel, since $\|\pi_0-\pi_1\|_{\kappa}$ is always finite, the same result is thus valid in particular when the model set $\Sfrak$ contains a segment whose extreme points have disjoint supports. This is notably the case when $\Sfrak$ is convex and contains two distributions with disjoint supports. As a consequence, given any  PSD kernel $\kappa$, $(\Sfrak,\W_p)$ is \emph{not} $(\kappa,\delta)$-embeddable for $\delta >1/p$ when $\Sfrak$ contains for example mixtures of two Diracs or more generally mixtures of two compactly supported distributions. We emphasize that this result does not depend on the dimension of the ambient space and is true for any PSD kernel. 

\subsubsection{Bound on $\delta$ for Mixture Models and Smooth TI Kernels}
In most concrete applications, one often has to compare \emph{discrete} distributions. We show in this section that the regularity of the kernel plays an important role when trying to control the Wasserstein distance with an MMD for model sets made of discrete distributions. In the following we define, for $K \in \mathbb{N}^{*}$ and $\Omega \subseteq \Xcal = \R^{d}$, the space of mixtures of $K$ diracs located in $\Omega$:
\begin{equation*}
\Sfrak_{K}(\Omega) := \Big\{\sum_{i=1}^{K} a_i \delta_{\xbf_i}: a_{i} \in \R_{+}, \sum_{i=1}^{K} a_i = 1, \forall i \in \integ{K}, \xbf_i \in \Omega\Big\}\,.
\end{equation*}
This type of model with $\Omega = B(0,R)$ for some $R>0$ plays a central role in compressive learning theory and is used to show that the LRIP (Section \ref{sec:compress_section}) does not hold for tasks such as K-means without separability assumptions on the diracs \citep{gribonval2020statistical}. We show in the next theorem (proof in Appendix \ref{sec:proof_theo_regularitykerneltheo}) that there is a trade-off between the exponent $\delta$ and the regularity of the kernel provided that the model set is rich enough to contain discrete distributions with enough diracs. 
\begin{restatable}{theorem}{regularitykerneltheo}
\label{theo:regularitykerneltheo}
Consider a TI, PSD kernel $\kappa(\xbf,\ybf)=\kappa_{0}(\xbf-\ybf)$ on $\R^{d}$ such that $\kappa_{0}$ is $k$ times differentiable at $0$ with $k \in \mathbb{N}^{*}$. Consider $p \in [1,+\infty)$,  a Wasserstein distance $\W_{p}$ based on a norm in $\R^{d}$, a vector $\xbf_{0} \in \R^{d}$, $R>0$ and $\Omega = B(\xbf_{0},R)$. 
If $(\Sfrak_{\lfloor \frac{k}{2} \rfloor +1}(\Omega),\W_p)$ is $(\kappa,\delta)$-embeddable then $\delta \leq 2/k$.
\end{restatable}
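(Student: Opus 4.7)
The plan is to exhibit two one-parameter families $(\pi_t)_{t\in(0,1]}, (\pi'_t)_{t\in(0,1]} \subset \Sfrak_{\lfloor k/2\rfloor +1}(\Omega)$ whose atoms collapse toward $\xbf_0$ as $t \to 0^+$, such that $W_p(\pi_t,\pi'_t) = c_0 \cdot t$ for some constant $c_0 > 0$, while $\|\pi_t - \pi'_t\|_\kappa = o(t^{k/2})$. Inserting these into the $(\kappa,\delta)$-embeddability inequality will force $\delta \leq 2/k$ (otherwise the right-hand side would decay faster than the left-hand side as $t \to 0^+$). The mechanism is that if $\pi_t - \pi'_t$ has vanishing $\R^d$-moments up to order $r$, the Taylor expansion of $\kappa_0$ at $0$ kills the polynomial part of $\|\pi_t-\pi'_t\|_\kappa^2$ up to order $t^{2r+1}$; taking $r := \lfloor k/2 \rfloor$ so that $2r+1\geq k$ then kills everything up to order $t^k$.

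\paragraph{Construction.} Set $r := \lfloor k/2 \rfloor$ and $N := r+1$. Fix a unit vector $\ubf \in \R^d$ and $r+2$ distinct scalars $s_0 < s_1 < \cdots < s_{r+1}$ in $[-R,R]$. Let $c_i := \prod_{j\neq i}(s_i-s_j)^{-1}$ be the divided-difference weights, which classically satisfy $\sum_{i} c_i s_i^{j}=0$ for $j=0,\ldots,r$ and whose signs alternate in $i$. With $I^{\pm} := \{i : \pm c_i > 0\}$ and $Z := \sum_{i\in I^+} c_i = \sum_{i\in I^-} |c_i| > 0$, define
\[
\pi_t := \tfrac{1}{Z}\sum_{i\in I^+} c_i\, \delta_{\xbf_0 + t s_i \ubf}, \qquad \pi'_t := \tfrac{1}{Z}\sum_{i\in I^-} |c_i|\, \delta_{\xbf_0 + t s_i \ubf}.
\]
The alternation of signs gives $|I^{\pm}| \leq \lceil (r+2)/2 \rceil \leq \lfloor k/2 \rfloor + 1 = N$, so $\pi_t,\pi'_t \in \Sfrak_N(\Omega)$ for every $t \in (0,1]$. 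The affine bijection $T_t(\xbf) := \xbf_0 + t(\xbf-\xbf_0)$ satisfies $T_t\#\pi_1 = \pi_t$ and $T_t\#\pi'_1 = \pi'_t$, and the standard homogeneity of $W_p$ under affine rescalings yields $W_p(\pi_t,\pi'_t) = t\cdot W_p(\pi_1,\pi'_1) = c_0\,t$ with $c_0 > 0$ since $\pi_1 \neq \pi'_1$.

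\paragraph{MMD estimate and conclusion.} By translation invariance of $\kappa_0$,
\[
\|\pi_t - \pi'_t\|_\kappa^2 = \int\!\!\int \kappa_0(t(u-v))\,d\tilde\mu(u)\,d\tilde\mu(v), \qquad \tilde\mu := \tfrac{1}{Z}\sum_i c_i \delta_{s_i \ubf}.
\]
Since $\kappa_0$ is $k$-times differentiable at $0$, Peano's form of Taylor's theorem gives $\kappa_0(z) = \sum_{|\alpha|\leq k} \tfrac{\partial^{\alpha}\kappa_0(0)}{\alpha!} z^{\alpha} + R_k(z)$ with $R_k(z)/\|z\|^k \to 0$ as $z\to 0$. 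The binomial expansion of $(u-v)^{\alpha}$ produces only terms of the form $M_{\beta}(\tilde\mu)\,M_{\alpha-\beta}(\tilde\mu)$, where $M_{\gamma}(\tilde\mu) = \ubf^{\gamma} \sum_i c_i s_i^{|\gamma|}/Z$; by construction $M_{\gamma}(\tilde\mu) = 0$ for $|\gamma|\leq r$, so every such product vanishes whenever $|\alpha|\leq 2r+1$. Since $2r+1 \geq k$, the entire polynomial contribution vanishes, leaving only the remainder, which is $o(t^k)$ because $\|t(u-v)\|\leq 2Rt$ on $\supp\tilde\mu$ and the relative Peano error $R_k(z)/\|z\|^k$ goes to $0$ uniformly on bounded sets as $z\to 0$. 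Hence $\|\pi_t - \pi'_t\|_\kappa = o(t^{k/2})$. If $(\Sfrak_N(\Omega),W_p)$ is $(\kappa,\delta)$-embeddable with constant $C$, then $c_0 \,t \leq C\cdot o(t^{k\delta/2})$, i.e., $c_0 \leq C \cdot o(t^{k\delta/2-1})$; for $\delta > 2/k$ the right-hand side tends to $0$ as $t\to 0^+$, contradicting $c_0 > 0$, and we conclude $\delta \leq 2/k$.

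\paragraph{Main obstacle.} The delicate combinatorial point is that divided-difference coefficients furnish a non-trivial signed measure on $r+2$ atoms with $r+1$ vanishing moments (orders $0,\ldots,r$) whose positive and negative parts each contain at most $\lceil (r+2)/2 \rceil \leq \lfloor k/2 \rfloor + 1$ atoms --- this explains exactly why the count $N$ in the statement fits the argument. The other delicate point is the uniform control of the Taylor remainder after double integration against $\tilde\mu$: one must pass from the pointwise Peano estimate $R_k(z) = o(\|z\|^k)$ to a uniform-in-$(u,v)$ estimate, which is legitimate only because $\supp \tilde\mu$ is compact and $\|t(u-v)\|$ shrinks uniformly as $t\to 0$.
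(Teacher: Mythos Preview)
Your proof is correct and follows the same overall strategy as the paper: construct a one-parameter family of pairs of discrete measures along a line through $\xbf_0$, use a Taylor expansion of $\kappa_0$ at the origin, and arrange for enough moment cancellations in the signed measure $\pi_t-\pi'_t$ so that the polynomial part of $\|\pi_t-\pi'_t\|_\kappa^2$ vanishes through order $k$, leaving only an $O(t^k)$ (in your case even $o(t^k)$) remainder while $W_p$ scales exactly like $t$.

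The difference lies in the explicit construction of the weights. The paper takes $k+1$ nodes with alternating binomial weights $\beta_i=(-1)^{i-1}\binom{k}{i-1}$ and invokes a Stirling-number identity to show that $\sum_i\beta_i\alpha_i^s=0$ for $s=0,\dots,k-1$; this gives many more vanishing moments than strictly necessary, and it then bounds the sizes of $T_\pm(\betab)$ by $\lfloor k/2\rfloor+1$. You instead take only $r+2=\lfloor k/2\rfloor+2$ nodes with divided-difference weights, which automatically annihilate polynomials of degree $\le r$ and whose signs alternate by construction, so the cardinality bound $|I^\pm|\le\lceil(r+2)/2\rceil\le\lfloor k/2\rfloor+1$ is immediate. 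Your route is more economical (fewer nodes, no Stirling numbers) and yields the slightly sharper estimate $\|\pi_t-\pi'_t\|_\kappa^2=o(t^k)$ rather than $O(t^k)$, since the product $M_\beta M_{\alpha-\beta}$ vanishes even for $|\alpha|=k$ under your moment conditions; in fact your argument would also exclude $\delta=2/k$, though you only state and need $\delta\le 2/k$. Two cosmetic remarks: your phrase ``uniformly on bounded sets'' for the Peano remainder is slightly off --- what you actually use (and what suffices) is that the double integral is a \emph{finite} sum over $(r+2)^2$ pairs, so the pointwise Peano estimate is automatically uniform; and since all atoms lie on the line $\R\ubf$, the one-dimensional Taylor expansion of $t\mapsto\kappa_0(t\ubf)$ (as in the paper) would spare you the multivariate multi-index bookkeeping.
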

Following Remark~\ref{rem:monotonicityembedability}, the same conclusion holds if $\Sfrak$ only \emph{contains} all mixtures of Dirac supported in some arbitrary Euclidean ball. Theorem~\ref{theo:regularitykerneltheo} proves that if the kernel is $k$ times differentiable and if $\Sfrak$ is rich enough to contain $\lfloor \frac{k}{2} \rfloor +1$ diracs then we can not control the Wasserstein distance with $\operatorname{MMD}^{\delta}$ \emph{uniformly} over $\Sfrak$ when $\delta > 2/k$. As an immediate consequence we have the following corollary when the kernel is smooth:

\begin{restatable}{corollary}{corrolaryregularity}
\label{corr:corrolary_regularity}
Consider a TI, PSD kernel $\kappa(\xbf,\ybf)=\kappa_{0}(\xbf-\ybf)$ on $\R^{d}$ such that $\kappa_{0} \in C^{\infty}(\R^{d},\R)$ and a model set $\Sfrak \subseteq \P(\R^{d})$. Assume that $\Sfrak_{K}(\Omega) \subseteq \Sfrak$ with $K \geq 2$ where $\Omega \subseteq \R^{d}$ is an open set. \\
If $(\Sfrak,\W_p)$ is $(\kappa,\delta)$-embeddable, where  $\W_{p}$ is based on a norm in $\R^{d}$ and $p \in [1,+\infty)$, then $\delta \leq 2/K$.
\end{restatable}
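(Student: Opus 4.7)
The plan is to reduce the corollary to Theorem~\ref{theo:regularitykerneltheo} via the monotonicity of embeddability stated in Remark~\ref{rem:monotonicityembedability}, exploiting the fact that $\kappa_{0} \in C^{\infty}$ lets us freely choose the differentiability order $k$ in the theorem.

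First, since $\Omega$ is open and nonempty, I would pick any $\xbf_{0} \in \Omega$ together with a radius $R > 0$ small enough that $\Omega' := B(\xbf_{0}, R) \subseteq \Omega$. By construction $\Sfrak_{M}(\Omega') \subseteq \Sfrak_{M}(\Omega)$ for every $M \in \mathbb{N}^{*}$. Moreover, for any $M' \leq M$ a mixture of $M'$ Diracs can be written as a mixture of $M$ Diracs (by duplicating atoms or assigning zero weight to the extra ones), hence $\Sfrak_{M'}(\Omega') \subseteq \Sfrak_{M}(\Omega')$.

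Next, I would set $k := K$. A direct check gives $\lfloor K/2 \rfloor + 1 \leq K$ whenever $K \geq 2$, so combining the inclusions above,
\[
\Sfrak_{\lfloor K/2 \rfloor + 1}(\Omega') \subseteq \Sfrak_{K}(\Omega') \subseteq \Sfrak_{K}(\Omega) \subseteq \Sfrak.
\]
Since $(\Sfrak, W_{p})$ is $(\kappa, \delta)$-embeddable by assumption, Remark~\ref{rem:monotonicityembedability} transfers this property to the subset, so $(\Sfrak_{\lfloor K/2 \rfloor + 1}(\Omega'), W_{p})$ is $(\kappa, \delta)$-embeddable as well. Because $\kappa_{0} \in C^{\infty}$, it is in particular $K$-times differentiable at $0$, so the hypotheses of Theorem~\ref{theo:regularitykerneltheo} hold with this choice of $k$ and with $\Omega'$ playing the role of the Euclidean ball in the theorem. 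Its conclusion yields $\delta \leq 2/k = 2/K$.

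No real obstacle is anticipated: the corollary is essentially a repackaging of Theorem~\ref{theo:regularitykerneltheo} combined with the subset-inheritance of embeddability. The only mild decision is the value of $k$: taking $k = K$ produces exactly the stated bound, while taking the maximal admissible $k = 2K-1$ (still compatible with $\lfloor k/2 \rfloor + 1 \leq K$) would in fact give the strictly sharper $\delta \leq 2/(2K-1)$, but the simpler expression $2/K$ suffices for the statement at hand.
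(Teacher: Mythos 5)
Your proof is correct and matches what the paper leaves implicit: the corollary is an immediate consequence of Theorem~\ref{theo:regularitykerneltheo}, obtained by restricting to a ball $\Omega' \subseteq \Omega$, using the nested inclusions $\Sfrak_{\lfloor K/2 \rfloor + 1}(\Omega') \subseteq \Sfrak_K(\Omega) \subseteq \Sfrak$, invoking Remark~\ref{rem:monotonicityembedability}, and applying the theorem with $k = K$ (since $\lfloor K/2 \rfloor + 1 \leq K$ for $K \geq 2$ and $\kappa_0 \in C^\infty$ is certainly $K$ times differentiable at $0$). Your aside that choosing $k = 2K-1$ would yield the strictly sharper $\delta \leq 2/(2K-1)$ is also correct, so the paper's $2/K$ is just a clean, slightly weaker consequence of the same theorem.
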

These results have many consequences. First it shows that when $\kappa$ is smooth and $\Sfrak$ \emph{contains} mixtures of arbitrarily many diracs located in some open set, $(\Sfrak,\W_p)$ is \emph{not} $(\kappa,\delta)$-embeddable for any $\delta>0$. In other words, it proves that finding a absolute constant $C>0$ such that $\W_p(\pi,\pi') \leq C \operatorname{MMD}_{\kappa}^{\delta}(\pi,\pi')$ for all discrete distributions $\pi,\pi'$ is hopeless when the kernel $\kappa$ is smooth \emph{even if} these distributions lie also in some fixed ball of $\R^{d}$ (to take care of the necessary condition associated to Lemma~\ref{lem:BoundedModelNecessary}). It suggest that finding suitable constraints  on the model set $\Sfrak$ \emph{and} on the kernel $\kappa$ is required in order to have the control \eqref{eq:eqtoprove}. We will show in the next sections how to obtain these types of control with additional hypotheses on the regularity of the distributions in $\Sfrak$. The Figure \ref{fig:diagram} summarizes the necessary conditions established in the previous sections.

\begin{figure}[t!]
\begin{center}
\includegraphics[width=1\linewidth]{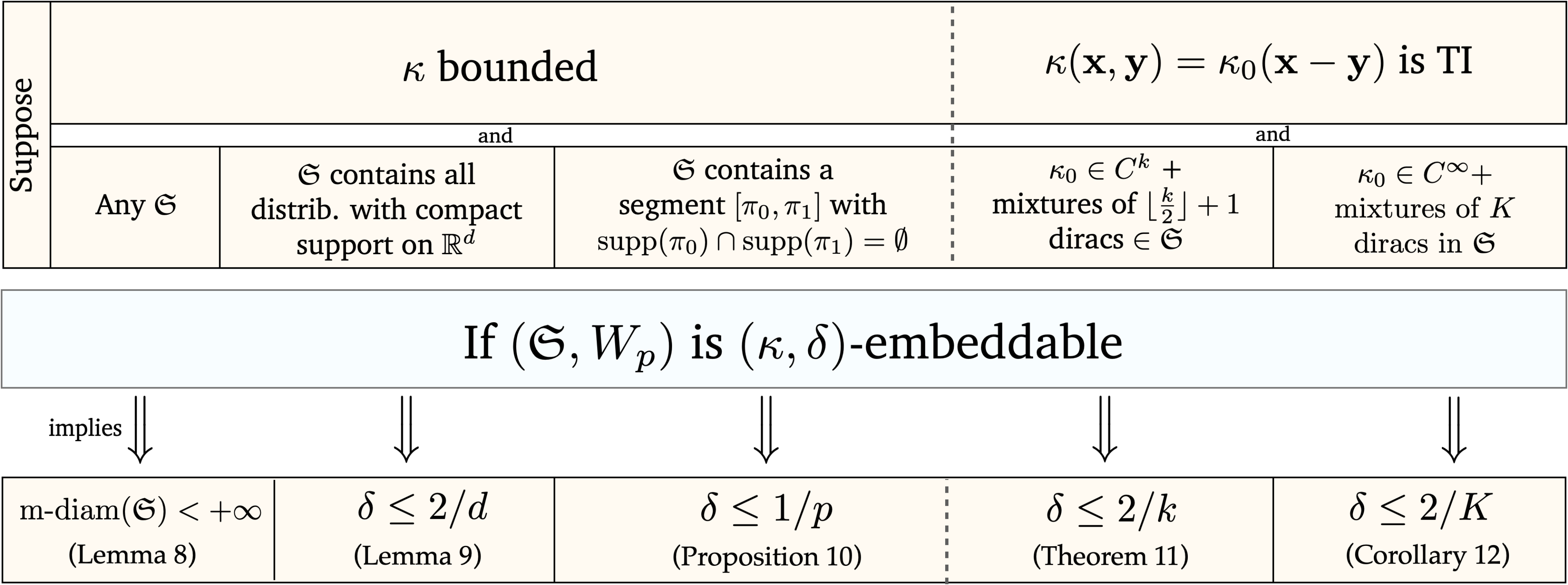}
\end{center}
\caption{\label{fig:diagram} Summary of the established necessary conditions to the $(\kappa,\delta)$-embedabbility property.}
\end{figure}

\subsection{Sufficient Conditions: Regular Distributions \label{sec:thecaseofregular}}

We are now interested in sufficient conditions allowing to uniformly control the Wasserstein distance by $\MMD^{\delta}$ on a subset of distributions $\Sfrak \subset \P(\R^{d})$. In the following we consider Wasserstein distances defined with respect to the Euclidean norm $\|\cdot\|_2$, and denote $$\moment_{\expo}[\pi]: = \left(\E_{\xbf \sim \pi}[\|\xbf\|^{\expo}_2]\right)^{1/\expo}$$
the moment of order $\expo$ of $\pi \in \P(\R^d)$. At first we restrict to the case of ``regular'' distributions, in the sense that probability distributions in $\Sfrak$ are assumed to admit densities with respect to the Lebesgue measure (non-regular distributions will be studied in the next section). We recall that the shorthand $\pi = f \dr \xbf$ indicates that $\pi$ has density $f$ with respect to the Lebesgue measure. 

Our first Lemma (proved in Appendix \ref{proof:lemma:bounding_wass}) controls $\W_p$ by a distance $L_2$ between densities, under the assumption that distributions in the model set $\Sfrak$ have a certain number of bounded moments:

\begin{restatable}{proposition}{boundingwass}
\label{lemma:bounding_wass}
Consider $\pi,\pi' \in \P(\R^{d})$ with densities $f,g$ with respect to the Lebesgue measure, \ie\ $\pi = f \dr \xbf, \pi' = g \dr \xbf$. If $\max\{\moment_{\expo}[\pi], \moment_{\expo}[\pi']\} \leq \cte$, where $r>1$, then
for each $1\leq p< \expo$ we have
\begin{equation}\label{eq:first_bounding_wass_L2}
\W_p(\pi,\pi') \leq C \left(\int_{\R^{d}}|f(\xbf)-g(\xbf)|^{2} \dr \xbf \right)^{\frac{\expo-p}{(d+2\expo)p}}\,,
\end{equation}
with $C = 2(\max\{V_d,1\})^{\frac{1}{2p}} \cte^{\frac{(d+2p)\expo}{(d+2\expo)p}}$ with $V_{d}=\pi^{d / 2}/ \Gamma(d / 2+1)$ the volume of the $d$-dimensional unit sphere.
\end{restatable}
The $L_2$ distance between densities that appears in the right hand side of~\eqref{eq:first_bounding_wass_L2} can be further bounded by an MMD with an appropriate kernel. Indeed, using Plancherel's formula 
and  introducing the Fourier transform $\widehat{\kappa_0}$ of a TI, PSD kernel, Cauchy-Schwarz inequality yields
\[
\int_{\R^{d}}|f(\xbf)-g(\xbf)|^{2} \dr \xbf \propto \int_{\R^{d}}|\hat{f}(\w)-\hat{g}(\w)|^{2} \dr \w \leq 
\Big(\int_{\R^{d}}\frac{|\hat{f}(\w)-\hat{g}(\w)|^{2}}{\widehat{\kappa_0}(\w)} \dr \w\Big)^{\frac{1}{2}}
\Big(\int_{\R^{d}} \widehat{\kappa_0}(\w)|\hat{f}(\w)-\hat{g}(\w)|^{2} \dr \w\Big)^{\frac{1}{2}}.
\]
where  $\hat{f}, \hat{g}$ denote the Fourier transforms of $f,g$. The second integral of the right hand side of this expression being proportional to the MMD (Lemma \ref{lemma:mmdform}) one can transform the bound~\eqref{eq:first_bounding_wass_L2} into a bound involving an MMD if we can control the integral $\int_{\R^{d}} \widehat{\kappa_0}(\w)^{-1}|\hat{f}(\w)-\hat{g}(\xbf)|^{2} \dr \w$ by a constant. Moreover, we also have the following relation (see\footnote{With adapted conventions on Fourier transforms.} \citealt[Theorem 10.12]{Wendland}):
\begin{equation*}
(2\pi)^{-d} \int_{\R^{d}} \frac{|\hat{f}(\w)-\hat{g}(\w)|^{2}}{\widehat{\kappa_0}(\w)} \dr \w = \|f-g\|^2_{\Hcal_{\kappa}}\,,
\end{equation*}
where $\Hcal_{\kappa}$ is the RKHS associated to the kernel $\kappa$ and $\|\cdot\|_{\Hcal_\kappa}$ is the corresponding RKHS norm. Consequently, when the distributions in $\Sfrak$ have densities in some RKHS ball, we can bound $\int_{\R^{d}} \widehat{\kappa_0}(\w)^{-1}|\hat{f}(\w)-\hat{g}(\xbf)|^{2} \dr \w$ by a constant:
\begin{restatable}{theorem}{regularcaserkhs}
\label{theo:regularcase_rkhs}
Let $\kappa(\xbf,\ybf)=\kappa_0(\xbf-\ybf)$ be a TI, PSD kernel on $\R^{d}$ such that $\kappa_{0} \in L_1(\R^{d})$, $\widehat{\kappa_{0}}(\w) > 0$ for every $\w$. For $B, \cte, \expo  \geq 0$, denote
\begin{equation}
\label{eq:modelsetforrkhs}
\Sfrak_{B, \cte, \expo, \kappa} :=  \left\{\pi \in \P(\R^d): \ \pi= f\dr \xbf,\ \|f\|_{\Hcal_{\kappa}}\leq B \text{ and } \ \moment_{\expo}[\pi] \leq \cte\right\} \subset \P_{r}(\R^{d})\,.
\end{equation}
If $\expo > 1$ then for each $1\leq p < \expo$ we have
\begin{equation*}
\forall \pi,\pi' \in \Sfrak_{B, \cte, \expo,\kappa},\ \W_p(\pi,\pi') \leq  C'\|\pi-\pi'\|_{\kappa}^{\frac{\expo-p}{p(d+2\expo)}}\,,
\end{equation*}
where  $C' = 8(\max\{V_d,1\})^{\frac{1}{2p}} B^{\frac{\expo-p}{(d+2\expo)p}} \cte^{\frac{(d+2p)\expo}{(d+2\expo)p}}$.
\end{restatable}

The proof is given in Appendix \ref{proof:theo:regularcase}. With the model set $\Sfrak = \Sfrak_{B, \cte, \expo, \kappa}$, this theorem implies that $(\Sfrak,\W_p)$ is $(\kappa,\delta = \frac{\expo-p}{p(d+2\expo)})$-embeddable for every $1 \leq p < \expo$ as soon as $\kappa$ is a TI, PSD kernel with very few assumptions. A limitation of this result is that the model set $\Sfrak$ depends on the kernel $\kappa$ so that it is not clear which family of distributions belongs to $\Sfrak$. In the next theorem we decouple the assumptions on the kernel from those on the model set. Assuming that the distributions have densities that are sufficiently regular (Sobolev), a certain number of bounded moments and with some assumptions on the kernel $\kappa$ the following holds:
\begin{restatable}{theorem}{regularcase}
\label{theo:regularcase}
Let $\kappa(\xbf,\ybf)=\kappa_0(\xbf-\ybf)$ be a TI, PSD kernel on $\R^{d}$ such that $\kappa_{0} \in L_1(\R^{d})$, $\widehat{\kappa_{0}}(\w) > 0$ for every $\w$, and assume there is $s_{\kappa}>0$ such that
\begin{equation}
\label{eq:regularity_kernel}
\frac{1}{\widehat{\kappa_0}(\w)}=O(\|\w\|_2^{s_{\kappa}}) \text{ as } \|\w\|_2 \rightarrow +\infty\,.
\end{equation}
For $\expo, B, \cte,s \geq 0$, denote
\begin{equation}
\label{eq:modelsetforsobol}
\Sfrak_{B, \cte, \expo,s} :=  \left\{\pi \in \P(\R^d): \ \pi= f\dr \xbf,\ \|f\|_{H^{s}(\R^d)}\leq B \text{ and } \ \moment_{\expo}[\pi] \leq \cte\right\} \subset \P_{r}(\R^{d})\,.
\end{equation}
If $s\geq s_{\kappa}/2$ and $\expo > 1$ then for each $1\leq p < \expo$ there exists  $C = C(B,\cte, \expo, s, d, \kappa, p) >0$ such that
\begin{equation*}
\forall \pi,\pi' \in \Sfrak_{B, \cte, \expo,s},\ \W_p(\pi,\pi') \leq  C\|\pi-\pi'\|_{\kappa}^{\frac{\expo-p}{p(d+2\expo)}}\,.
\end{equation*}
\end{restatable}

The proof is given in Appendix \ref{proof:theo:regularcase}. With the model set $\Sfrak = \Sfrak_{B, \cte, \expo,s}$, this theorem implies that $(\Sfrak,\W_p)$ is $(\kappa,\delta = \frac{\expo-p}{p(d+2\expo)})$-embeddable for every $1 \leq p < \expo$ as soon as $\kappa$ is a TI, PSD kernel with some regularity, and the distributions in $\Sfrak$ are sufficiently regular with bounded $\expo$-moments. This latter hypothesis is not very limiting in practice since it is also required in order to have finite Wasserstein distances. The Sobolev condition on the densities requires that densities are in $L_2$ and have at least $s \geq s_{\kappa}/2$ (weak)derivatives in $L_2$. In particular this is the case for the classical model sets considered in compressive statistical learning literature such as Gaussian mixtures \citep{gribonval2020statistical}. 

\begin{remark}
Since the distributions in $\Sfrak$ admit a density, the constraints of Theorem \ref{theo:regularitykerneltheo} (mixtures of Diracs) do not apply here and, as such, the kernel is allowed to be smooth. 
\end{remark}
An important family of TI kernels satisfying the hypothesis of Theorem \ref{theo:regularcase} is the Matérn class \citep[Section 4.2.1]{Rasmussen}, with parameter $\nu$, as detailed in Example~\ref{ex:matern}. The limit of a Matèrn kernel when the parameter $\nu \to \infty$ is the RBF kernel, which is too regular: its Fourier transform decays too fast to satisfy the assumption \eqref{eq:regularity_kernel} of Theorem \ref{theo:regularcase}. In the context of compressive learning, translation invariant kernels are most useful if they can be approximated with random Fourier features with good concentration properties (see Section~\ref{sec:compress_section}). An interesting question for future work is thus whether the ‘‘slow decay'' of the Fourier transform needed to apply Theorem \ref{theo:regularcase} appears as a strong constraint in such a context. 

Observe that for fixed $p$ and large $r$ the exponent $\delta = \frac{r-p}{p(d+2r)}$ tends to $\frac{1}{2p}$. Another consequence of Theorem \ref{theo:regularcase} is for distributions that have infinitely many bounded moments. In this case the exponent $\delta$ can be \emph{independent of the dimension}, as shown in the following two examples:

\begin{myexample}[Uniformly bounded moments]
\label{example:unif_bounded_moments}
Consider a kernel $\kappa$ and an exponent $s$ with the same assumptions as in Theorem \ref{theo:regularcase} and a function $m : \R \rightarrow \R_{+}^{*}$ along with the following model set:
\begin{equation}
\Sfrak_{B,m,s}:=  \left\{
\pi \in \P(\R^{d}): \pi = f\dr \xbf,\ \|f\|_{H^s(\R^d)}\leq B \text{ and } \forall \expo > 1, \ \moment_{\expo}[\pi] \leq m(\expo)\right\}\,.
\end{equation}
i.e., the intersection of the model sets $\Sfrak_{B,m(r),r,s}$, $r>1$. For any $p \in [1,+\infty)$ and $0< \delta < \frac{1}{2p}$ we can find a constant\footnote{It suffices to apply Theorem \ref{theo:regularcase} with $\Sfrak_{B, m(\expo), \expo,s}$ where $\expo = \frac{(1+\delta d)p}{1-2 \delta p} > p$ since $\delta = \frac{\expo-p}{p(d+2\expo)} $.} $C = C(B,m(\cdot),\delta,s,d,\kappa,p) > 0$ such that $\forall \pi,\pi' \in \Sfrak_{B,m,s}, \W_p(\pi,\pi') \leq C \|\pi-\pi'\|_{\kappa}^{\delta}$. In other words 
$(\Sfrak_{B,m,s},\W_{p})$ is $(\kappa,\delta)$-embeddable for 
an exponent that is as close as we want to $\delta^{*} = \frac{1}{2p}$.

A notable example where such a model is relevant is in compressive statistical learning, where the model set associated to Gaussian mixtures with bounded parameters fits into this framework \citep{gribonval2020compressive}. 
More generally one can also consider a model set made of sub-Gaussian variables with smooth densities and bounded sub-Gaussiannity parameter $\sigma$.
  In this case $m(\expo) = c \sigma_{\max} \sqrt{\expo}$ for some constant $c> 0$ since, by the sub-Gaussian property, we have $\forall \expo \geq 1, \moment_{\expo}[\pi] \leq c \sigma \sqrt{\expo} \leq c \sigma_{\max} \sqrt{\expo}$ (see \textit{e.g.} \citealt[Section 7.4]{foucart13}).
\end{myexample}

\begin{myexample}[Compactly supported distributions]
With the same assumptions of $\kappa$ and $s$, when all the distributions in $\Sfrak$ are smooth and have the same compact support, they can be shown to belong to $\Sfrak_{B,m,s}$ where the function $m : \R \rightarrow \R_{+}^{*}$ can be chosen as constant. Indeed if $\supp(\pi) \subseteq B(0,\cte)$ for some ball of radius $\cte$ then $\forall \expo >1, \moment_{\expo}[\pi] \leq \cte$. In this case the exponent $\delta = \frac{1}{2p}$ is \emph{exactly} attainable as shown in Appendix \ref{proof:compactly_supported_case}. 
\end{myexample}

\begin{remark}
We recall that, due to the constraints of Proposition \ref{weedprop}, the best possible rate achievable is $\delta = 1/p$ since the model set $\Sfrak_{B, \cte, \expo,s}$ in \eqref{eq:modelsetforsobol} contains a convex combination of two probability distributions whose support are disjoint. Indeed, it is not difficult to construct two measures in the model set $\pi_1 = f_1 \dr \xbf$ and $\pi_2 = f_2 \dr \xbf$ with $\|f_1\|_{H^s(\R^{d})}, \|f_2\|_{H^s(\R^{d})} \leq B$ and such that $\supp(\pi_1) \cap \supp(\pi_2) = \emptyset$. Then for any $t \in [0,1], (1-t) \pi_1+ t\pi_2 \in \Sfrak_{s, B, \cte, \expo}$ since it has density $(1-t)f_1+tf_2$ such that $\|(1-t)f_1+tf_2\|_{H^s(\R^{d})} \leq B$ and $\moment_{\expo}^{\expo}[(1-t) \pi_1+ t\pi_2] = (1-t) \moment_{\expo}^{\expo}[\pi_1] +t\moment_{\expo}^{\expo}[\pi_2]$ by linearity (with respect to the distribution) thus $\moment_{\expo}^{\expo}[(1-t) \pi_1+ t\pi_2] \leq \cte^{\expo}$ which implies $\moment_{\expo}[(1-t) \pi_1+ t\pi_2] \leq M$. It remains open whether exponents $\delta \in (1/2p,1/p)$ are actually achievable on $\Sfrak_{B, \cte, \expo,s}$.
\end{remark}

\subsection{Sufficient Conditions: Non-Regular Distributions}

The case of measures on $\R^{d}$ and that do not admit a density is more delicate to study. We will however prove that, at the price of an arbitrary small additive term $\eta >0$, we have the control \eqref{eq:eqtoprove} under mild assumptions on the model set $\Sfrak$. The core idea is to regularize the probability distributions $\pi,\pi'$ and to obtain bounds between the true Wasserstein and the ‘‘smoothed'' Wasserstein distance which is easier to relate to an MMD. We adopt the following definition:
\begin{restatable}[Regularizer]{definition}{regularizer}
\label{def:regularizer}
We say that a function $\alpha: \R^{d} \rightarrow \R_{+}$ is a \emph{regularizer} if it is a \emph{non-negative, continuous, even} and \emph{bounded} function such that $\int_{\R^{d}} \alpha(\zbf)\dr \zbf=1$ and $\alpha \in L_2(\R^{d})$. We say that the regularizer has $\expo$-finite moments if $\int \|\zbf\|_2^{\expo} \alpha(\zbf) \dr \zbf<+\infty$ for some $\expo\geq 1$.
\end{restatable}
When considering a regularizer $\alpha$ and a probability distribution $\pi \in \P(\R^{d})$ (not necessarily regular) the convolution $\alpha*\pi$ defines a probability density function\footnote{Since $\alpha$ is a regularizer we have $\int \alpha=1$ and consequently $\int (\int \alpha(\xbf-\ybf) \dr \pi(\ybf))\dr \xbf=\int (\int \alpha(\xbf-\ybf) \dr \xbf)\dr \pi(\ybf)=1$ by using Fubini's theorem ($\alpha$ is non-negative) and the fact that the Lebesgue measure is invariant by translation.} on $\R^{d}$ \textit{via} $\alpha*\pi(\xbf)=\int_{\R^{d}} \alpha(\xbf-\ybf) \dr \pi(\ybf)$. In the following we will note $\pi_\alpha$ the  probability distribution associated to the density $\alpha*\pi$. Note that $\pi_\alpha$ is usually regular by imposing that $\alpha$ is (such as when $\alpha$ is the Gaussian density). The interpretation behind $\pi_\alpha$ is the following: if $X \sim \pi$ and $Y_\alpha$ is a random variable independant of $X$ and whose distribution has density $\alpha$ then the random variable $X+Y_\alpha$ has distribution $\pi_\alpha$. The idea of regularizing the measure to derive properties on the Wasserstein distance is not new and was used in various contexts \citep{dedecker2013minimax,nilesweed2020minimax,Goldfeld2020GaussianSmoothedOT,XuanLong}. We have the following lemma which relates the Wasserstein distance $\W_p$ to its regularized counterpart:
\begin{restatable}{lemma}{regwasscompared}
\label{lemma:regwass_compared}
Consider a regularizer $\alpha$ with $p$-finite moments where $p \geq 1$. Then
\begin{equation*}
\forall \pi,\pi' \in \P(\R^{d}), \ \W_p(\pi,\pi') \leq \W_p(\pi_\alpha,\pi'_\alpha)+2 \left(\int 
\|\zbf\|_2^{p}
 \alpha(\zbf) \dr \zbf\right)^{1/p}\,.
\end{equation*}
\end{restatable}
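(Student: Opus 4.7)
The plan is to apply the triangle inequality for $W_p$ (which holds since $W_p$ is a distance on $\P_p(\R^d)$) to the chain $\pi \to \pi_\alpha \to \pi'_\alpha \to \pi'$, obtaining
\[
W_p(\pi,\pi') \leq W_p(\pi,\pi_\alpha) + W_p(\pi_\alpha,\pi'_\alpha) + W_p(\pi'_\alpha,\pi').
\]
It then suffices to show that each of the outer terms is bounded by $\left(\int \|\zbf\|_2^p \alpha(\zbf)\dr \zbf\right)^{1/p}$.

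To control $W_p(\pi,\pi_\alpha)$, I will exploit the probabilistic interpretation already given just before the lemma: if $X \sim \pi$ and $Y_\alpha$ is independent of $X$ with density $\alpha$, then $X + Y_\alpha \sim \pi_\alpha$. Consequently the joint law of $(X, X+Y_\alpha)$ is a valid coupling $\gamma \in \Pi(\pi,\pi_\alpha)$, whose transport cost is
\[
\int \|\xbf-\ybf\|_2^p \dr \gamma(\xbf,\ybf) = \E[\|X - (X+Y_\alpha)\|_2^p] = \E[\|Y_\alpha\|_2^p] = \int \|\zbf\|_2^p \alpha(\zbf) \dr \zbf,
\]
which is finite by assumption on $\alpha$. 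Taking the infimum over couplings yields $W_p(\pi,\pi_\alpha) \leq \left(\int \|\zbf\|_2^p \alpha(\zbf)\dr \zbf\right)^{1/p}$. The identical argument applied to $\pi'$ handles the third term, and combining the three estimates gives the claimed bound.

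There is essentially no obstacle: the only subtlety is to ensure the triangle inequality is legitimately used, which requires $\pi_\alpha, \pi'_\alpha \in \P_p(\R^d)$, or at least that the Wasserstein distances appearing are well-defined (possibly taking value $+\infty$, in which case the inequality is trivial). Since the bound we just derived shows in particular $W_p(\pi,\pi_\alpha)<+\infty$ and $W_p(\pi',\pi'_\alpha)<+\infty$, the inequality $W_p(\pi,\pi') \leq W_p(\pi_\alpha,\pi'_\alpha) + 2\left(\int \|\zbf\|_2^p \alpha(\zbf)\dr \zbf\right)^{1/p}$ is meaningful (and trivially true if $W_p(\pi,\pi') = +\infty$ forces $W_p(\pi_\alpha,\pi'_\alpha) = +\infty$ by the reverse triangle inequality), so the proof reduces to the two lines above.
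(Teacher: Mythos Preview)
Your proof is correct and follows essentially the same approach as the paper: apply the triangle inequality for $W_p$, then bound $W_p(\pi,\pi_\alpha)$ (and symmetrically $W_p(\pi',\pi'_\alpha)$) by using the explicit coupling $(X,X+Y_\alpha)$, which gives cost $\E[\|Y_\alpha\|_2^p]=\int\|\zbf\|_2^p\alpha(\zbf)\dr\zbf$. Your additional remark about well-definedness is a nice touch that the paper leaves implicit.
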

\begin{proof}
Using the triangle inequality we have $\W_p(\pi,\pi')\leq \W_p(\pi,\pi_\alpha)+\W_p(\pi_\alpha,\pi'_\alpha)+\W_p(\pi',\pi'_\alpha)$. Let $X \sim \pi$ and $Y_\alpha$ be a random variable independent of $X$ and whose distribution has density $\alpha$ so that $X+Y_\alpha \sim \pi_\alpha$. By definition of $\W_p$ we have $\W^{p}_p(\pi,\pi_\alpha)=\inf_{\gamma \in \Pi(\pi,\pi_\alpha)} \E_{(Z_1,Z_2)\sim \gamma}[\|Z_1-Z_2\|_2^{p}] $ hence taking $(Z_{1},Z_{2}) = (X,X+Y_{\alpha})$ we obtain $\W^{p}_p(\pi,\pi_\alpha)\leq \E[\|X-(X+Y_\alpha)\|_2^{p}]=\E[\|Y_\alpha\|_2^{p}]$. Consequently $\W^{p}_p(\pi,\pi_\alpha)\leq \int  \|\ybf\|_2^{p} \alpha(\ybf) \dr \ybf$. The same applies for the term $\W_p(\pi',\pi'_\alpha)$. 

\end{proof}

When $\alpha$ is the density of the Gaussian $\mathcal{N}(0,\sigma^{2} \mathbf{I})$ the distance $\W_p(\pi_\alpha,\pi'_\alpha)$ is usually called the Gaussian-smoothed OT and enjoys good properties in terms of sample-complexity and topological properties \citep{Goldfeld2020GaussianSmoothedOT,Nietert2021}. Our formalism is more general as it considers any type of regularizers. The main idea now is to show that, given the regularizer, $\W_p(\pi_\alpha,\pi'_\alpha)$ can be controlled by the MMD associated to a TI kernel. Since $\pi_{\alpha}, \pi'_{\alpha}$ admit a density we will use the same idea as in the Proposition \ref{lemma:bounding_wass} to control $\W_p(\pi_\alpha,\pi'_\alpha)$ by $\|\alpha*\pi-\alpha*\pi'\|_{L_2(\R^{d})}^{\delta}$ for some $\delta \in (0,1)$. To connect with the MMD we will rely on the following result whose proof is given in Appendix \ref{proof:prop:regu_wass}:

\begin{restatable}{lemma}{characterizationmmd}
\label{lemma:characterization_mmd}
Let $\alpha$ be a regularizer and $\kappa_{0}:=\alpha *\alpha$. Then $\kappa_{0} \in L_1(\R^{d})$ is even, bounded, continuous and has non-negative Fourier transform. Consider the kernel $\kappa(\xbf,\ybf):=\kappa_{0}(\xbf-\ybf)$. Then $\kappa$ defines a TI, PSD kernel. Moreover, for $\pi,\pi' \in \P(\R^{d})$,
\begin{equation*}
\|\pi-\pi'\|_{\kappa}=\|\alpha*\pi-\alpha*\pi'\|_{L_2(\R^{d})}\,.
\end{equation*}
\end{restatable}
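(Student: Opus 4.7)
The plan is to verify the listed analytic properties of $\kappa_0 = \alpha*\alpha$ one by one, then invoke Bochner's theorem (Proposition \ref{theo:bochner}) for positive semi-definiteness, and finally prove the isometry identity via a Fubini argument together with the evenness of $\alpha$.

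First I would collect the elementary properties. Integrability of $\kappa_0$ follows from Young's inequality: $\|\alpha*\alpha\|_{L_1(\R^d)} \leq \|\alpha\|_{L_1(\R^d)}^2 = 1$. Evenness of $\kappa_0$ is a direct consequence of the evenness of $\alpha$, via the change of variable $\ybf \mapsto -\ybf$ in the convolution integral. Since $\alpha \in L_2(\R^d)$ by definition of a regularizer, Cauchy--Schwarz gives $|\kappa_0(\xbf)| \leq \|\alpha\|_{L_2(\R^d)}^2$, so $\kappa_0$ is bounded. Continuity of $\kappa_0$ follows from continuity of translations in $L_2(\R^d)$ combined with Cauchy--Schwarz. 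For the Fourier transform, since $\alpha \in L_1$ the convolution theorem yields $\widehat{\kappa_0}(\w) = \widehat{\alpha}(\w)^2$; because $\alpha$ is real-valued and even, $\widehat{\alpha}$ is real-valued, and hence $\widehat{\kappa_0}\geq 0$. Since $\kappa_0$ is continuous and in $L_1(\R^d)$ with non-negative Fourier transform, Proposition \ref{theo:bochner} gives that $\kappa(\xbf,\ybf)=\kappa_0(\xbf-\ybf)$ is a TI p.s.d. kernel.

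The substantive step is the MMD identity. Set $\mu := \pi - \pi'$, a finite signed measure with $|\mu|(\R^d) \leq 2$. Since $\alpha \in L_2$ and $|\mu|$ is finite, Young's inequality for measures guarantees $\alpha*\mu \in L_2(\R^d)$, and in particular $\alpha*\pi, \alpha*\pi' \in L_2(\R^d)$. Expanding the squared $L_2$ norm of $\alpha*\mu$ and applying Fubini (justified by absolute integrability, using $\alpha \in L_1 \cap L_2$ and $|\mu|$ finite), I get
\begin{equation*}
\|\alpha*\mu\|_{L_2(\R^d)}^2 = \int\int \Big(\int \alpha(\xbf-\ybf)\alpha(\xbf-\zbf)\,\dr \xbf\Big)\,\dr \mu(\ybf)\,\dr \mu(\zbf).
\end{equation*}
The inner integral, via the substitution $\ubf=\xbf-\ybf$ together with $\alpha$ even, equals $\int \alpha(\ubf)\,\alpha(\ubf+(\ybf-\zbf))\,\dr\ubf = (\alpha*\alpha)(\ybf-\zbf)=\kappa_0(\ybf-\zbf)=\kappa(\ybf,\zbf)$. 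Plugging this back in gives $\|\alpha*\mu\|_{L_2(\R^d)}^2 = \|\mu\|_{\kappa}^2$, which is exactly the claimed identity once we recall $\alpha*\mu = \alpha*\pi - \alpha*\pi'$.

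The only potential obstacle is the bookkeeping for Fubini and for the convolution--correlation swap, but both are routine given $\alpha \in L_1 \cap L_2$, the finiteness of $|\mu|$, and the evenness of $\alpha$; no deeper tools beyond Bochner's theorem are needed.
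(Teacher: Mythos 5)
Your proof is correct, but for the key identity $\|\pi-\pi'\|_{\kappa}=\|\alpha*\pi-\alpha*\pi'\|_{L_2(\R^d)}$ you take a genuinely different route from the paper. The paper passes to the Fourier side: it applies Plancherel to $\|\alpha*\mu\|_{L_2}^2$ (where $\mu=\pi-\pi'$), uses the convolution theorem to get $(2\pi)^{-d}\int |\widehat{\alpha}(\w)|^2|\widehat{\pi}(\w)-\widehat{\pi'}(\w)|^2 \dr\w$, identifies $\widehat{\kappa_0}=|\widehat{\alpha}|^2$, and then invokes its Lemma \ref{lemma:mmdform} (the Fourier representation of the MMD for TI kernels in $L_1$) to recognize the right-hand side as $\|\pi-\pi'\|_\kappa^2$. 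You instead stay in physical space: you expand $\|\alpha*\mu\|_{L_2}^2$, swap integrals by Fubini--Tonelli (justified by $\alpha\geq 0$, $\alpha\in L_1\cap L_2$, and $|\mu|(\R^d)\leq 2$), and recognize the inner integral $\int \alpha(\xbf-\ybf)\alpha(\xbf-\zbf)\dr\xbf$ as $(\alpha*\alpha)(\ybf-\zbf)=\kappa_0(\ybf-\zbf)$ using the evenness of $\alpha$, directly recovering $\iint \kappa(\ybf,\zbf)\dr\mu(\ybf)\dr\mu(\zbf)=\|\mu\|_\kappa^2$. Your argument is more self-contained — it avoids both Plancherel and Lemma \ref{lemma:mmdform} — and makes the role of the evenness of $\alpha$ transparent; the paper's Fourier route is slightly slicker and re-uses machinery (Lemma \ref{lemma:mmdform}) that is needed elsewhere anyway. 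For the preliminary properties of $\kappa_0$ (integrability, boundedness, continuity, non-negative Fourier transform, Bochner) the two arguments are essentially the same, with minor cosmetic differences in which standard inequality is cited.
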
 

Based on these results we have the following upper-bound on $\W_p(\pi_\alpha,\pi'_\alpha)$ using the MMD associated to a TI, PSD kernel (the proof can be found in Appendix \ref{proof:prop:regu_wass}):
\begin{restatable}{proposition}{reguwass}
\label{prop:regu_wass}
Let $\expo>1$. Consider a regularizer $\alpha$ with $\expo$-finite moments and the kernel $\kappa(\xbf,\ybf)=\kappa_{0}(\xbf-\ybf)$ where $\kappa_{0}:=\alpha*\alpha$. It defines a TI, PSD kernel by Lemma \ref{lemma:characterization_mmd}. Moreover, for any $\pi,\pi' \in \P_\expo(\R^{d})$ and $1\leq p <\expo$, $\W_{p}$ defined with the Euclidean norm on $\R^{d}$ satisfies
\begin{equation*}
\W_p(\pi_\alpha,\pi'_\alpha) \leq C_{d,\expo,p} \left(\E_{\xbf \sim \pi_\alpha}[\|\xbf\|_2^{\expo}]+\E_{\ybf \sim \pi_\alpha'}[\|\ybf\|_2^{\expo}]\right)^{\frac{2p+d}{(d+2\expo)p}}\|\pi-\pi'\|_{\kappa}^{\frac{2(\expo-p)}{(d+2\expo)p}}\,,
\end{equation*}
for some constant $C_{d,\expo,p}> 0$. 
\end{restatable}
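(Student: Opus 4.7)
The plan is to apply Lemma \ref{lemma:bounding_wass} to the smoothed measures $\pi_\alpha,\pi'_\alpha$ and then convert the resulting $L_2$-norm of densities into an MMD via Lemma \ref{lemma:characterization_mmd}. Both of these tools have been prepared in the excerpt; the proposition is essentially their assembly, so the work is in checking the hypotheses rather than in a new argument.

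First I would verify that $\pi_\alpha,\pi'_\alpha \in \P_s(\R^d)$ and admit densities in $L_2(\R^d)$, as required by Lemma \ref{lemma:bounding_wass}. By definition $\pi_\alpha$ is the law of $X+Y_\alpha$ where $X\sim\pi$ and $Y_\alpha$ is independent with density $\alpha$. Using the elementary convexity bound $\|x+y\|_2^s\leq 2^{s-1}(\|x\|_2^s+\|y\|_2^s)$ valid for $s\geq 1$, together with $\pi \in \P_s(\R^d)$ and the $s$-finite moments assumption on $\alpha$, I obtain $\E_{\xbf\sim\pi_\alpha}[\|\xbf\|_2^s]<+\infty$, and likewise for $\pi'_\alpha$. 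The density of $\pi_\alpha$ is $\alpha*\pi$, and a straightforward Cauchy--Schwarz / Fubini computation gives $\|\alpha*\pi\|_{L_2(\R^d)}\leq \|\alpha\|_{L_2(\R^d)}<+\infty$ since $\alpha$ is a regularizer.

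Next I would invoke Lemma \ref{lemma:bounding_wass} with densities $f=\alpha*\pi$, $g=\alpha*\pi'$ and the prescribed $1\leq p<s$, yielding
\begin{equation*}
W_p(\pi_\alpha,\pi'_\alpha) \leq 2^{\frac{1}{p}+1-\frac{1}{s}} V_d^{\frac{s-p}{(d+2s)p}}\Bigl(\E_{\xbf \sim \pi_\alpha}[\|\xbf\|_2^{s}]+\E_{\ybf \sim \pi'_\alpha}[\|\ybf\|_2^{s}]\Bigr)^{\frac{2p+d}{(d+2s)p}}\|\alpha*\pi-\alpha*\pi'\|_{L_2(\R^d)}^{\frac{2(s-p)}{(d+2s)p}}.
\end{equation*}
Finally I would substitute the identity $\|\alpha*\pi-\alpha*\pi'\|_{L_2(\R^d)}=\|\pi-\pi'\|_{\kappa}$ from Lemma \ref{lemma:characterization_mmd}, which also guarantees that $\kappa_0=\alpha*\alpha$ induces a TI p.s.d.\ kernel. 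Reading off the prefactor gives the constant $C_{d,s,p}=2^{\frac{1}{p}+1-\frac{1}{s}} V_d^{\frac{s-p}{(d+2s)p}}$ and concludes the proof.

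There is no real obstacle here beyond bookkeeping: the moment verification is a one-line convexity estimate, the $L_2$ density bound uses only $\alpha\in L_2$, and the remaining inequalities are quoted verbatim from the two lemmas. If any subtlety arises, it would be ensuring that the exponents produced by Lemma \ref{lemma:bounding_wass} match those claimed in the proposition, but this is a direct algebraic identification.
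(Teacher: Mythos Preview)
Your proposal is correct and follows essentially the same route as the paper: apply Lemma~\ref{lemma:bounding_wass} to the smoothed measures $\pi_\alpha,\pi'_\alpha$ after checking their $s$-moments are finite, then replace the $L_2$ difference of densities by the MMD via Lemma~\ref{lemma:characterization_mmd}. The only cosmetic difference is that you use the sharper constant $2^{s-1}$ in the moment bound where the paper uses $2^s$, and you additionally verify $\alpha*\pi\in L_2(\R^d)$, which the paper leaves implicit.
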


As a corollary of Proposition \ref{prop:regu_wass} and Lemma \ref{lemma:regwass_compared} we are now able to prove the main theorem of this section (the proof is in Appendix \ref{proof:prop:regu_wass}):

\begin{restatable}{theorem}{maintheononcompact}
\label{theo:maintheo_noncompact}
Let $\expo>1$. Consider a regularizer $\alpha$ with $\expo$-bounded moments. Consider the kernel $\kappa(\xbf,\ybf)=\kappa_{0}(\xbf-\ybf)$ where $\kappa_{0}:=\alpha*\alpha$. It defines a TI, PSD kernel by Lemma \ref{lemma:characterization_mmd}. We consider the model set
\begin{equation*}
\Sfrak_{\cte} := \{\pi \in \P(\R^{d}): \moment_{\expo}[\pi]\leq \cte\} \subset \P_{\expo}(\R^{d})\,.
\end{equation*}
Then for any $1\leq p <\expo$ there exists a constant $C'=C'_{d,\expo,p}>0$ such that
\begin{equation*}
\forall \pi,\pi' \in \Sfrak, \W_p(\pi,\pi') \leq C'\left(\cte^{\expo}+ \int \|\zbf\|_2^{\expo}\alpha(\zbf)\dr \zbf\right)^{\frac{2p+d}{p(d+2\expo)}}  \|\pi-\pi'\|^{\frac{2(\expo-p)}{(d+2\expo)p}}_{\kappa}+2 \left(\int \|\zbf\|_2^{p} \alpha(\zbf) \dr \zbf\right)^{1/p}\,.
\end{equation*}
\end{restatable}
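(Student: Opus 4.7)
The plan is to chain together the two ingredients already assembled in the excerpt: the triangle-inequality estimate of Lemma~\ref{lemma:regwass_compared} that compares $W_{p}(\pi,\pi')$ to its regularized version $W_{p}(\pi_{\alpha},\pi'_{\alpha})$, and the key bound of Proposition~\ref{prop:regu_wass} that controls $W_{p}(\pi_{\alpha},\pi'_{\alpha})$ by $\|\pi-\pi'\|_{\kappa}$ raised to a Hölder exponent. Concretely, I would first apply Lemma~\ref{lemma:regwass_compared} (which uses that $\alpha$ has $s$-finite moments, hence also $p$-finite moments for $1\leq p<s$ by Jensen) to write
\[
W_p(\pi,\pi') \;\leq\; W_p(\pi_{\alpha},\pi'_{\alpha}) \;+\; 2\!\left(\int \|\zbf\|_2^{p}\,\alpha(\zbf)\,\dr\zbf\right)^{\!1/p}.
\]
This already produces the additive term appearing on the right-hand side of the theorem. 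Then I would plug in Proposition~\ref{prop:regu_wass} to upper-bound the first term by
\[
C_{d,s,p}\,\bigl(\E_{\xbf\sim\pi_{\alpha}}[\|\xbf\|_2^{s}]+\E_{\ybf\sim\pi'_{\alpha}}[\|\ybf\|_2^{s}]\bigr)^{\frac{2p+d}{(d+2s)p}}\,\|\pi-\pi'\|_{\kappa}^{\frac{2(s-p)}{(d+2s)p}}.
\]

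The only remaining step is to bound the $s$-th moments of the regularized measures $\pi_{\alpha},\pi'_{\alpha}$ in terms of the model parameter $M$ and of $\int\|\zbf\|_2^{s}\alpha(\zbf)\dr\zbf$. Here I would use that a random variable with law $\pi_{\alpha}$ has the same distribution as $X+Y_{\alpha}$ with $X\sim\pi$ independent from $Y_{\alpha}$ of density $\alpha$; by the elementary convexity inequality $(a+b)^{s}\leq 2^{s-1}(a^{s}+b^{s})$ valid for $s\geq 1$,
\[
\E_{\xbf\sim\pi_{\alpha}}[\|\xbf\|_2^{s}] \;\leq\; 2^{s-1}\bigl(\E_{\xbf\sim\pi}[\|\xbf\|_2^{s}]+\textstyle\int\|\zbf\|_2^{s}\alpha(\zbf)\dr\zbf\bigr)\;\leq\; 2^{s-1}\bigl(M+\textstyle\int\|\zbf\|_2^{s}\alpha(\zbf)\dr\zbf\bigr),
\]
and the same bound holds for $\pi'_{\alpha}$. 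Summing these two moment estimates yields a factor of the form $2^{s}\bigl(M+\int\|\zbf\|_2^{s}\alpha(\zbf)\dr\zbf\bigr)$, which, after being raised to the power $\tfrac{2p+d}{(d+2s)p}$, is exactly what appears in the theorem (the numerical constant $2^{s\cdot\frac{2p+d}{(d+2s)p}}$ being absorbed into the new $C=C_{d,s,p}$).

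Putting the two bounds together and relabelling constants concludes the proof. I do not expect any substantial obstacle: all the heavy lifting (the smoothing-by-MMD bound and the regularization/triangle inequality) is already done in Lemma~\ref{lemma:regwass_compared} and Proposition~\ref{prop:regu_wass}; the proof essentially consists in assembling them and handling the moment term via the independence representation $X+Y_{\alpha}$. The only point requiring a bit of care is ensuring that $\pi\in\Sfrak$ (i.e.\ a control on the $s$-th moment of $\pi$) suffices to control the $s$-th moment of the smoothed measure $\pi_{\alpha}$, which is exactly what the convexity inequality above delivers, using crucially that $\alpha$ is assumed to have $s$-finite moments.
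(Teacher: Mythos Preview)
Your proposal is correct and follows essentially the same approach as the paper: combine Lemma~\ref{lemma:regwass_compared} with Proposition~\ref{prop:regu_wass}, and control the $s$-th moment of $\pi_{\alpha}$ via the representation $X+Y_{\alpha}$ together with the convexity inequality $\|\xbf+\ybf\|_2^{s}\lesssim \|\xbf\|_2^{s}+\|\ybf\|_2^{s}$. The only cosmetic differences are the order in which the two results are invoked and your slightly sharper constant $2^{s-1}$ (the paper uses $2^{s}$), both of which are immaterial since the constants are absorbed into $C_{d,s,p}$.
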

This theorem has multiple implications. First it shows that, for a wide range of TI, PSD kernels, and under mild assumptions, $(\Sfrak,\W_p)$ is $(\kappa,\delta=\frac{2(\expo-p)}{p(d+2\expo)})$-embeddable with error $\eta > 0$. Note that the exponent $\delta$ is twice the exponent found in Section \ref{sec:thecaseofregular} for regular distributions, which is due to the fact that we directly regularize the distributions using the kernel associated to the MMD. Consequently, it leads to a slightly better better exponent (closer to $1$) than the one of the regular case, but at a price of an additive error term. We will also see in Example \ref{sec:example_rbf} how this error term $\eta > 0$ can be controlled. We emphasize that few assumptions on $\Sfrak$ are required: the distributions in the model set must have uniformly bounded $\expo$-moment, \ie\ $\sup_{\pi \in \Sfrak} \E_{\xbf \sim \pi}[\|\xbf\|_2^{\expo}]<+\infty$. This assumption is verified when, for example, $\Sfrak$ is the space of Gaussian mixtures whose parameters are in a compact subspace as considered in compressive statistical learning \citep{gribonval2020statistical}. Interestingly, if $\expo$ is big compared to $d,p$ then we have $\delta \approx \frac{1}{p}$.

\begin{myexample}[RBF kernel]
\label{sec:example_rbf}
As an example of use of Theorem \ref{theo:maintheo_noncompact} consider the Gaussian density function $\varphi(\xbf):=(2\pi)^{-d/2}\exp(-\|\xbf\|_2^{2}/2)$. Define for $\sigma >0$ the regularizer $\alpha(\xbf):=\sigma^{-d}\varphi(\frac{\xbf}{\sigma})$. The function $\alpha$ is continuous, even, bounded, all $\expo$-moments are finite, $\int_{\R^{d}} \alpha=1$. The associated kernel is then defined by $\widehat{\kappa_0}(\w)=(\hat{\varphi}(\sigma\w))^{2}=(e^{-\frac{1}{2}\sigma^{2}\|\w\|^{2}_2})^{2}=e^{-\sigma^{2}\|\w\|^{2}_2}$, hence $\kappa(\xbf,\ybf)=\pi^{d/2} \sigma^{-d} \exp(-\frac{\|\xbf-\ybf\|_2^{2}}{4 \sigma^{2}})$. Consider the case $p=1$ and $\expo>1$ of Theorem \ref{theo:maintheo_noncompact}. The error term $2\int \|\zbf\|_2 \alpha(\zbf) \dr \zbf= 2\sigma \int \|\zbf\|_2 \varphi(\zbf) \dr \zbf$ can be controlled as $$2\sigma \int \|\xbf\|_2(2\pi)^{-d/2}\exp(-\|\xbf\|_2^{2}/2) \dr \xbf\leq 2\sigma (\int  \|\xbf\|^{2}_2(2\pi)^{-d/2}\exp(-\|\xbf\|_2^{2}/2) \dr \xbf)^{1/2}$$ by Jensen since $\xbf \rightarrow (2\pi)^{-d/2}\exp(-\|\xbf\|_2^{2}/2)$ is a probability density function. Thus, we can bound the error therm by $2\sigma (\E_{\xbf \sim \mathcal{N}(0,\mathbf{I})}[\|\xbf\|_2^{2}])^{1/2}=2\sigma \sqrt{d}$. Moreover, $\int \|\zbf\|_2^{\expo} \alpha(\zbf) \dr \zbf=\sigma^{\expo}\int \|\zbf\|_2^{\expo} \varphi(\zbf) \dr \zbf= \E_{\xbf \sim \mathcal{N}(0,\mathbf{I})}[\|\xbf\|_2^{\expo}]=2^{\expo/2} \frac{\Gamma(\frac{\expo+d}{2})}{\Gamma(\frac{\expo}{2})}$ (it is the $\expo$-th moment of a $\chi_2$ distribution). Then, using Theorem \ref{theo:maintheo_noncompact} we have
\begin{equation*}
\forall \pi,\pi' \in \Sfrak, \ \W_1(\pi,\pi') \leq C' \left(\cte^{\expo}+2^{\expo/2} \sigma^{\expo}\frac{\Gamma(\frac{\expo+d}{2})}{\Gamma(\frac{\expo}{2})} \right)^{\frac{d+2}{d+2\expo}}\|\pi-\pi'\|^{\frac{2(\expo-1)}{d+2\expo}}_{\kappa}+2\sigma \sqrt{d}\,.
\end{equation*}
Interestingly enough, the error term behaves as $O(\sigma)$ and can me made as small as possible at a price of a ‘‘sharper'' kernel (the bound is true for any $\sigma >0$). Implications of this result wil be discussed in the context of CSL in Section \ref{sec:compress_section}.
\end{myexample}

\begin{remark}
The condition $\kappa_{0} =\alpha * \alpha$ in Theorem \ref{theo:maintheo_noncompact} can be met in two ways. First, as done in Example \ref{sec:example_rbf}, fixing a regularizer $\alpha$ with $\expo$-bounded moments gives a TI, PSD kernel so that Theorem \ref{theo:maintheo_noncompact} holds. This can be achieved for example by considering a PSD function $\alpha \in L_1(\R^{d})$ with a sufficient number of bounded moments and that is even, continuous and positive (continuous, integrable and PSD functions are bounded \citealp{Wendland}). A simple normalization $\alpha \leftarrow \alpha/ \int \alpha$ will then produce a suitable $\alpha$. The second way is to fix the kernel $\kappa(\xbf,\ybf)=\kappa_{0}(\xbf-\ybf)$ and to check that it can be decomposed as $\kappa_{0}=\alpha * \alpha$ with $\alpha$ a regularizer with $\expo$-bounded moments and $\widehat{\alpha} \geq 0$. This problem is related to the one of finding a so-called \emph{convolution root}, or \emph{Boas–Kac root} of a positive definite function which can be shown to exist under certain assumptions on the function \citep{convolution_root,Akopyan2017BoasKacRO,Boas}.
\end{remark}

\subsection{Conclusion and Related Works}
\label{related_work_mmd_wass}

\begin{figure}[t!]
\begin{center}
\includegraphics[width=1\linewidth]{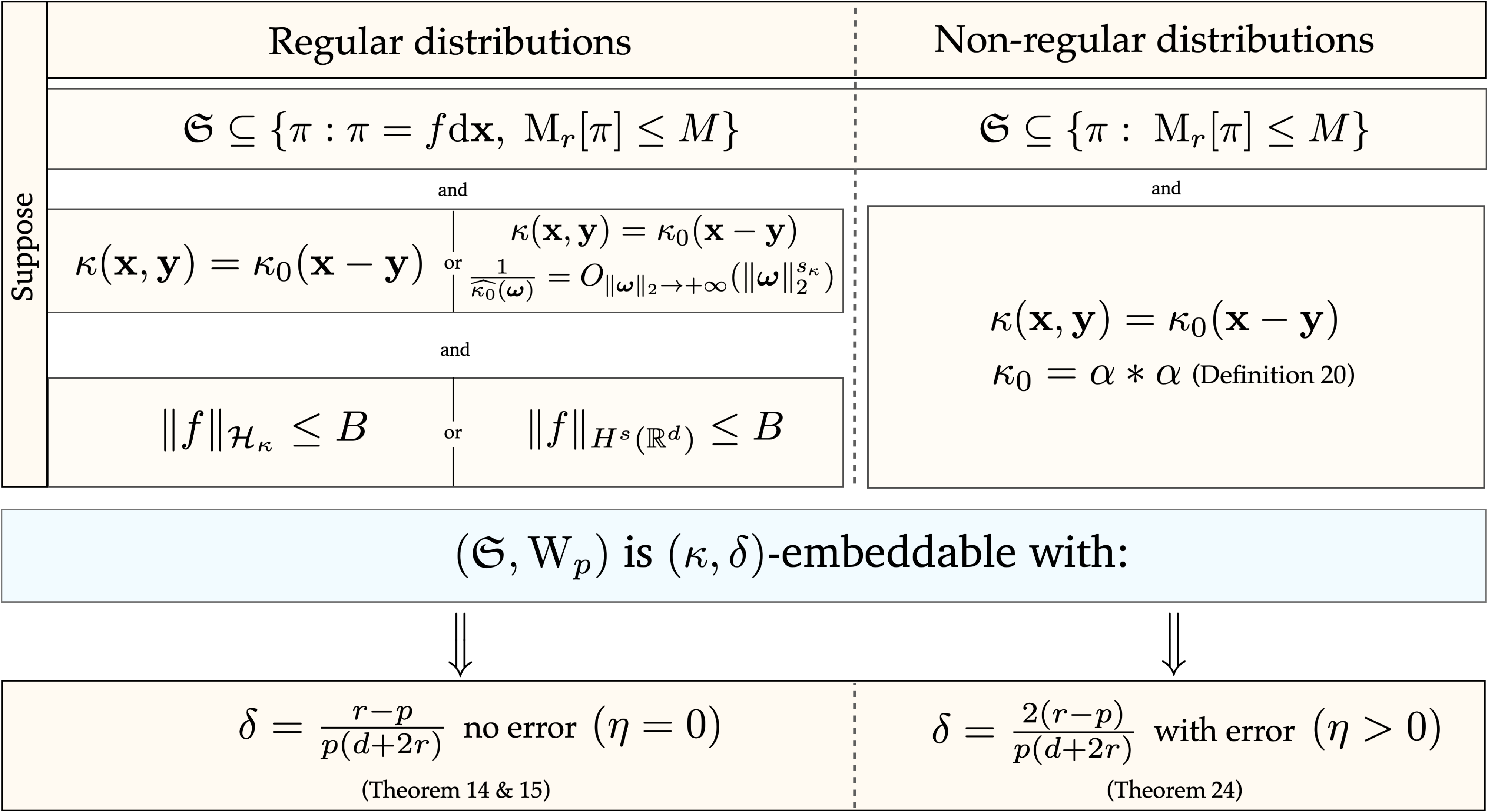}
\end{center}
\caption{\label{fig:summary} Summary of the different results of Section \ref{sec:wass_mmd}. The mention ‘‘with  error'' means that the relation holds when adding an error $\eta > 0$ that does not depends on $\Sfrak$. $\pi = f \dr \xbf$ means that the measure has density $f$ with respect to the Lebesgue measure.}
\end{figure}

We established in this section various controls of the form $\W_p \lesssim \operatorname{MMD}_{\kappa}^{\delta}$ that depend on $\delta \in (0,1]$, the properties of the model set and the kernel $\kappa$. All these results are summarized in Figure \ref{fig:summary}. Some other connections between MMDs and  Wasserstein distances have been explored in the literature.  The most simple one is when the metric $D$ used to define the Wasserstein distance is the metric in the RKHS corresponding the the kernel $\kappa$, \ie\ $D(\xbf,\ybf)=\|\kappa(\cdot,\xbf)-\kappa(\cdot,\ybf)\|_{\Hcal_\kappa}$. In this case it is known that we can control the Wasserstein distance $\W_1$ by $\sqrt{\operatorname{MMD}^{2}_{\kappa}+K}$ when $\kappa$ is bounded by $K$ \citep{Sriperumbudur}. 

\subsubsection{Relaxing the Translation-Invariance Property} Other interesting connections are based on the Gaussian-smoothed Wasserstein distance \citep{Goldfeld2020GaussianSmoothedOT} where authors consider $\alpha$ the probability density function of the Gaussian $\mathcal{N}(0,\sigma^{2} \mathbf{I})$ and the Wasserstein distance between the regularized distributions $\pi_{\alpha}= \alpha * \pi$. In \citet{zhang2021convergence} authors show that we can control the Gaussian-smoothed Wasserstein distance with the MMD, by considering a PSD kernel that is \emph{not} translation-invariant and \emph{not} bounded but defined as $\kappa(\xbf, \ybf)=\exp \left(-\frac{\|\xbf-\ybf\|_2^{2}}{4 \sigma^{2}}\right) I_{f}\left(\frac{\|\xbf+\ybf\|_2}{\sqrt{2} \sigma}\right)$  where $I_{f}$ is a function parametrized by some probability density function $f$ such as generalized beta-prime distributions. More precisely they prove 
\begin{equation*}
\forall \pi,\pi' \in \Sfrak_{\kappa}, \W_p(\pi_{\alpha},\pi'_{\alpha})\leq 2 \sigma \|\pi-\pi'\|^{1/p}_{\kappa}\,,
\end{equation*}
where $\Sfrak_{\kappa}:=\{\pi \in \P(\R^{d}): \int \sqrt{\kappa(\xbf,\xbf)} \dr \pi(\xbf) <+\infty\}$ \citep[Theorem 2]{zhang2021convergence}. With the same type of arguments as those presented in Lemma \ref{lemma:regwass_compared} we can prove that for any $\pi,\pi' \in \Sfrak_{\kappa}$ we have $\W_p(\pi,\pi') \leq  2 \sigma  \|\pi-\pi'\|_{\kappa}^{1/p}+\eta$ where $\eta = 2 \left(\int \|\zbf\|_2^{p} \alpha(\zbf) \dr \zbf\right)^{1/p}$ and $\W_p$ is computed with $\|\cdot\|_2$. As a corollary, for this kernel that is not TI we can use the result of \citet{zhang2021convergence} to prove that $(\Sfrak_{\kappa},\W_p)$ is $(\kappa,\frac{1}{p})$-embeddable with error $\eta=2 \left(\int \|\zbf\|_2^{p} \alpha(\zbf) \dr \zbf\right)^{1/p}$ that will behave as $O(\sigma)$ as shown in Example \ref{sec:example_rbf}. We can mention another line of works which draws connections between the Wasserstein distance and some specific dual Sobolev norms which can be related to the MMD. In \citet{nietert2021smooth} authors control the Wasserstein distance with an MMD whose kernel, which is not TI, is defined by $\kappa(\xbf,\ybf)=-\sigma^{2} \operatorname{Ein}(-\langle \xbf,\ybf\rangle/\sigma^{2})$ where $\operatorname{Ein}(z)=\int_{0}^{z}\frac{(1-e^{-t})}{t} \dr t$. Despite the fact that our two approaches are related our work differs from the Gaussian-smoothed OT in the sense that we do not want to estimate precisely the smoothed Wasserstein distance $\W_p(\pi_{\alpha},\pi'_{\alpha})$ by controlling it with an MMD based on a \emph{specific} kernel but instead to control $\W_p(\pi,\pi')$ by kernel norms for \emph{many} types of TI kernels.

\subsubsection{Relaxing the PSD Assumption on the Kernel}
Beyond PSD kernels other types of kernels can be used to define interesting divergences between probability distributions that can be linked with the Wasserstein distance. These divergences are not \emph{stricly speaking} MMD norms as defined in \eqref{eq:mmdnormdef} with PSD kernels but share similar topological properties. For example, by considering the \emph{conditionally} PSD\footnote{A conditionally PSD kernel on $\Xcal$ satisfies $\sum_{i,j=1}^{n} c_i c_j \kappa(\xbf_i,\xbf_j)\geq 0$ for any $\xbf_1,\cdots,\xbf_n \in \Xcal$ and $c_1,\cdots c_n \in \R$ such that $\sum_{i=1}^{n} c_i=0$ \citep{berg84harmonic}} kernel $\kappa(\xbf,\ybf)=-\|\xbf-\ybf\|^{\beta}_2$ for $\beta \in (0,2]$,  and $\pi, \pi' \in \P(\R^{d})$, the integral in \eqref{eq:mmdnormdef} is non-negative for $\mu = \pi-\pi'$ so that the term $\|\pi-\pi'\|_{\kappa}$ is well defined \citep[Example 15]{10.1214/13-AOS1140}. It is called the energy, or \emph{Cramér}, distance \citep{enrgy_distance,Szekely,10.1214/13-AOS1140} and it connects with OT distances in the sense that the Sinkhorn divergence (regularized OT) was shown to interpolate between this MMD and the Wasserstein distance \citep{pmlr-v89-feydy19a}. Another notable example is when one considers the so called $d$-dimensional \emph{Coulomb} kernel defined by $\kappa(\xbf,\ybf)=\kappa_{0}(\xbf-\ybf)$ where
\begin{equation*}
\kappa_{0}(\xbf):=\left\{\begin{array}{ll}
-\log \|\xbf\|_2 & \text { if } d=2 \\
\|\xbf\|_2^{2-d} & \text { if } d \geq 3
\end{array}\right.
\end{equation*}
In this case, for compactly supported $\pi,\pi' \in \P(\R^{d})$ with $\int \int \kappa(\xbf,\xbf') \dr \pi(\xbf) \dr \pi(\xbf') <+\infty $ and $\int \int \kappa(\ybf,\ybf') \dr \pi'(\ybf) \dr \pi'(\ybf') <+\infty$, the quantity $\|\pi-\pi'\|_{\kappa}$ is well defined, finite, and vanishes if and only if $\pi=\pi'$ \citep{CHAFA2016ConcentrationFC,saff2013logarithmic}. Consequently it defines a valid MMD that remarkably controls the $\W_1$ distance associated to an arbitrary norm in $\R^{d}$, as described in \citet{CHAFA2016ConcentrationFC}. More precisely consider, for $\Omega \subseteq \R^{d}$ \emph{compact}, the model set
$$ \Sfrak: = \{ \pi \in \P(\R^{d}): \supp(\pi) \subseteq \Omega, \int \int \kappa(\xbf,\xbf') \dr \pi(\xbf) \dr \pi(\xbf') <+\infty\} \,.$$
Then \citet[Theorem 1]{CHAFA2016ConcentrationFC} proves that there exists $C=C(\Omega)>0$ such that
\begin{equation*}
\forall \pi,\pi' \in \Sfrak, \W_1(\pi,\pi') \leq C \|\pi-\pi'\|_{\kappa}\,.
\end{equation*}
In particular, with the above $\Sfrak$, $(\Sfrak,\W_1)$ 
is $(\kappa,\delta=1)$-embeddable with no error. It is remarkable in the sense that few assumptions on the model set are required (the distributions can be even discrete). An important remark is that the kernel is TI \emph{but not PSD} and, consequently, this result is not in contradiction with Theorem \ref{theo:regularitykerneltheo}. Finally, other connections between $\W_p$ and the Cramér distance regarding asymptotic convergence in law can be found in \citep{modeste2022characterization}.

\section{Statistical Learning and Wasserstein Regularity}
\label{sec:wasserstein_learnability}

The bounds obtained previously allow us to control the Wasserstein distance by an MMD under certain conditions. These results will be at the heart of the theoretical guarantees of compressive learning (Section \ref{sec:compress_section}). These guarantees require, in addition, to control metrics related to the learning task (see the reasoning described in Figure \ref{fig:fantasticfig}). In this section we recall the statistical learning framework and introduce more formally these task metrics (referred as $\operatorname{Task Metric}$ in the introduction). We then show how to control them by a Wasserstein distance for various learning tasks.

\subsection{Statistical Learning \& Task Metrics \label{section:gentle}}

Statistical learning is a formalism that offers many tools to study the guarantees of learning algorithms. The problem is usually expressed as follows: given a collection of data $(\xbf_i)_{i \in \integ{n}}$, where $\xbf_i$ is a \emph{sample} in the data space $\Xcal$, how do we select a hypothesis $h \in \Hcal$ (where $\Hcal$ is called the \emph{hypothesis space}) that best performs the task at hand ? The ideal hypothesis minimizes a certain \emph{risk} which provides a performance measure and is derived from a certain loss function $\ell: \Xcal \times \Hcal \rightarrow \R$.

 For example, in the context of linear regression the loss is defined as $\ell(\xbf=(\zbf,y),h=\thetab)=(y-\thetab^{\top}\zbf)^{2}$ where $y \in \R$ is the value to predict, $h=\thetab \in \R^{d}$ is the parameters to choose and $\zbf\in \R^{d}$ is the vector of input features. Given a data-generating distribution $\pi \in \P(\Xcal)$, \ie\ the law under which our samples are produced, most of the machine learning algorithms attempt to minimize the so-called \emph{expected risk} (or generalization error):
\begin{equation*}
\Rcal(\pi,h)=\E_{\xbf \sim \pi}[\ell(\xbf,h)]\,.
\end{equation*}
This quantity reflects how effective is $h$ on average on the data-generating distribution. The optimal hypothesis $h^{*} \in \Hcal$, known as the \emph{Bayes prediction function} \citep{steinwart2008support}, is such that $h^{*}\in \arg\min_{h \in \Hcal} \Rcal(\pi,h)$.  The major difficulty is that the generating distribution $\pi$ is unknown and that we only have access to finitely many samples $(\xbf_i)_{i \in \integ{n}}$. Methods such as \emph{empirical risk minimization} (ERM) produce an estimated hypothesis $\hat{h}$ from the training dataset by minimizing the risk $\Rcal(\pi_n,\cdot)$ associated to the empirical probability distribution $\pi_n=\frac{1}{n} \sum_{i=1}^{n} \delta_{\xbf_i}$. One aims at guaranteeing, with high probability, the following bound on the \emph{excess risk}:
\begin{equation}
\label{eq:control_excess}
\Rcal(\pi,\hat{h})-\Rcal(\pi,h^{*}) \leq \eta_n\,,
\end{equation}
where $\eta_n$ decays as $1/\sqrt{n}$ or better. This simply reflects that we may expect a hypothesis that is close to the best one as the training set grows, \ie\ when we have access to enough data. To obtain a control of the excess risk by $\eta_n$ one often relies on the following bound\footnote{This can be proved by noting that $\Rcal(\pi,\hat{h})-\Rcal(\pi,h^{*})= \left\{\Rcal(\pi,\hat{h})-\Rcal(\pi_n,\hat{h})\right\}+\left\{\Rcal(\pi_n,\hat{h})-\Rcal(\pi_n,h^{*})\right\}+\left\{\Rcal(\pi_n,h^{*})-\Rcal(\pi,h^{*})\right\}$. Since $\Rcal(\pi_n,h^{*})-\Rcal(\pi_n,\hat{h})\leq 0$ by definition of $\hat{h}$ we have $\Rcal(\pi,\hat{h})-\Rcal(\pi,h^{*})\leq 2\sup_{h \in \Hcal}|\Rcal(\pi,h)-\Rcal(\pi_n,h)|$.}:
\begin{equation*}
\Rcal(\pi,\hat{h})-\Rcal(\pi,h^{*}) \leq 2\sup_{h \in \Hcal} | \Rcal(\pi,h)-\Rcal(\pi_n,h)|\,.
\end{equation*}
Consequently, being able to control the right term in the previous equation is a central problem in statistical learning and for example arguments involving Rademacher complexities can lead to the desired bound in \eqref{eq:control_excess} (see \citealp{bendavid}). The term $\sup_{h \in \Hcal} | \Rcal(\pi,h)-\Rcal(\pi_n,h)|$, that was reffered as $\operatorname{TaskMetric}(\pi,\pi')$ in the introduction, defines a central quantity for our analysis and we introduce the following notation for $\pi,\pi' \in \P(\Xcal)$:
\begin{equation}
\label{eq:lnorm}
\|\pi-\pi'\|_{\mathcal{L}(\Hcal)}:=\sup_{h\in \Hcal}|\Rcal(\pi,h)-\Rcal(\pi',h)|\,. 
\end{equation}
The quantity $\|\cdot\|_{\mathcal{L}(\Hcal)}$ defines a semi-norm on the space of finite signed measures $\Mcal(\Xcal)$ and an integral probability metric \eqref{eq:ipmdefinition} with $\Gcal=\mathcal{L}(\Hcal):=\{ \xbf \rightarrow \ell(\xbf,h); h \in \Hcal\}$. It is important to note that this semi-norm is \emph{task-specific} \ie\ that it depends on the learning task \textit{via} the family $\mathcal{L}(\Hcal)$. In the rest of the paper we will denote, as a language shortcut, $\mathcal{L}(\Hcal)$ as ‘‘the learning task''. As just described, when $\|\pi-\pi_n\|_{\mathcal{L}(\Hcal)}\leq \eta_n$ one can control the excess risk as in \eqref{eq:control_excess}. Consequently, controlling $\|\cdot\|_{\mathcal{L}(\Hcal)}$ with other metrics that are more easily computable is of certain interest. When the loss function is non-negative, $\ell:\Xcal \times \Hcal \rightarrow \R_{+}$, we introduce for $p\geq 1$ the semi-norm
\begin{equation}
\label{eq:lnorm2}
\|\pi-\pi'\|_{\mathcal{L}(\Hcal),p}:=\sup_{h\in \Hcal}|\Rcal^{1/p}(\pi,h)-\Rcal^{1/p}(\pi',h)|\,.
\end{equation}
A control of this semi-norm implies a slighlty different control of the excess risk as $\|\pi-\pi_n\|_{\mathcal{L}(\Hcal),p}\leq \eta_n$ implies that $\Rcal(\pi,\hat{h})^{1/p}-\Rcal(\pi,h^{*})^{1/p} \leq \eta_n$. In the following we often write $\|\pi-\pi_n\|_{\mathcal{L}(\Hcal),p}$ without specifying that the loss function is non-negative and that $p\geq 1$ (this will be implicitly assumed).
\begin{remark}
Controlling the quantity $\|\pi-\pi_n\|_{\mathcal{L}(\Hcal)}$ sometimes leads to pessimistic bounds on the excess risk. A sharper bound can be produced  by considering the following semi-norm $\|\pi-\pi'\|_{\D\mathcal{L}(\Hcal)}:=\sup_{h,h_0\in \Hcal} \left[\{\Rcal(\pi,h)-\Rcal(\pi,h_0)\}-\{\Rcal(\pi',h)-\Rcal(\pi',h_0)\}\right]$ which is related to $\|\pi-\pi'\|_{\mathcal{L}(\Hcal)}$ via the inequality $\|\pi-\pi'\|_{\D\mathcal{L}(\Hcal)}\leq 2\|\pi-\pi'\|_{\mathcal{L}(\Hcal)}$ \citep{gribonval2020compressive}. However in this work we focus on the quantities defined in \eqref{eq:lnorm} and \eqref{eq:lnorm2} and leave the analysis of $\|\cdot\|_{\D\mathcal{L}(\Hcal)}$ for further works.
\end{remark}

\subsection{Wasserstein Regularity}
The main question investigated in this section, which will find applications to compressive statistical learning in Section~\ref{sec:compress_section}, is to understand
when the task-specific norm $\|\pi-\pi'\|_{\mathcal{L}(\Hcal),p}$ can be bounded by the Wasserstein distance between $\pi$ and $\pi'$. We formalize this in the following definition:
\begin{restatable}[Wasserstein regularity]{definition}{wasslearn}
\label{def:wass_learn}
Given $p\in [1,+\infty)$, we say that a task $\mathcal{L}(\Hcal)$ is $p$-Wasserstein regular if there exists $C>0$, such that
\begin{equation*}
\label{eq:wass_learn}
\forall \pi,\pi \in \P_p(\Xcal), \ \|\pi-\pi'\|_{\mathcal{L}(\Hcal),p}=\sup_{h\in \Hcal}|\Rcal^{1/p}(\pi,h)-\Rcal^{1/p}(\pi',h)| \leq C \W_p(\pi,\pi')\,.
\end{equation*}
\end{restatable}

\bgroup
\def\arraystretch{1.5}
\newcolumntype{Y}{>{\centering\arraybackslash}X}

\begin{table}[t]
\centering
\begin{tabularx}{\textwidth}{|Y|Y|}
\hline
\multicolumn{2}{|c|}{When do we have $\forall \pi,\pi' \in \P_p(\Xcal), \|\pi-\pi'\|_{\mathcal{L}(\Hcal),p}\lesssim \W_p(\pi,\pi')$ for some $p \geq 1$ and task $\mathcal{L}(\Hcal)$?} \\ \hline
   \textbf{Condition on the task}   &  \textbf{Examples}     \\ \hline \hline
   \emph{Compression type-tasks.} Loss: $\ell(\xbf,h)=D(\xbf,P_{h}(\xbf))^{p}$, $P_{h}$ projection function   &  PCA, K-means, K-medians, NMF, dictionary learning (Section \ref{sec:compression_task})   \\ \hline
    \emph{Regression tasks.} Hypothesis: $h$ Lipschitz, loss: $\ell(\xbf=(\zbf,\ybf),h)=\|\ybf-h(\zbf)\|^p$  &  Linear regression, regression using MLP with bounded parameters (Section \ref{sec:regression_tasks})     \\ \hline
    \emph{Binary classification.} Hypothesis: $h$ Lipschitz, loss: convex surrogate $\ell(\xbf=(\zbf,y),h)=\varphi^{p}(y h(\zbf))$  &  MLP classifier with bounded parameters + Lipschitz ouput layer (Section \ref{sec:binary_classif}) \\ \hline
 \end{tabularx}

\caption{Summary of the differents results of Section \ref{sec:wasserstein_learnability}. \label{tab:summary_wass_task}}
\end{table}

At first sight the Wasserstein regularity seems a bit unexpected since the Wasserstein distance does not take into account the underlying learning task $\mathcal{L}(\Hcal)$. However we will show below that this property is quite natural for several learning tasks. We provide a summary of the different results of this section in Table \ref{tab:summary_wass_task}.

\begin{remark}
When the task is Wasserstein regular, we can show that the excess-risk is always bounded by a Wasserstein distance, \ie\ if $\pi \in \P_p(\Xcal)$ is any data generating distribution, and $\pi_n$ the empirical distribution, then
\begin{equation*}
\Rcal^{1/p}(\pi,\hat{h})-\Rcal^{1/p}(\pi,h^{*})\leq 2C \W_p(\pi,\pi_n)\,,
\end{equation*}
where $h^{*} \in \arg\min_{h \in \Hcal} \Rcal(\pi,h)$ is an optimal hypothesis and $\hat{h} \in \arg\min_{h \in \Hcal} \Rcal(\pi_n,h)$ the hypothesis found by empirical risk minimization. Therefore, the smaller the Wasserstein distance between $\pi_n$ and $\pi$, the better $\hat{h}$ is. 
\end{remark}
We start by showing that many unsupervised tasks, called \emph{compression-type tasks}, are Wasserstein regular. Then we focus on supervised tasks and demonstrate, under certain Lipschitz assumptions on the hypothesis class $\Hcal$, that these tasks are also Wasserstein regular. Unless stated otherwise, until the end of Section~\ref{sec:wasserstein_learnability}, Wasserstein distances are defined with respect to the metric $D$ associated to the ambient metric space $(\Xcal,D)$.

\subsection{Compression-type Tasks are Wasserstein Regular \label{sec:compression_task}}

The most straightforward case of Wasserstein regularity is when the risk \emph{itself} can be rewritten as a Wasserstein distance. Interestingly, a wide range of unsupervised learning tasks can be recast in this setting. For example, problems such as K-means or PCA can be shown to be performing exactly the task of estimating the data-generating distribution $\pi$ in the sense of a Wasserstein distance \citep{NIPS2012_c54e7837}. Such problems will be very connected with \emph{compression-type tasks} as defined below :
\begin{definition}[\citealp{gribonval2020compressive}]
\label{def:compresstypetask}
Consider a metric space $(\Xcal,D)$ and a hypothesis space $\Hcal$. A task $\mathcal{L}(\Hcal)$ is called a \emph{compression-type} task if the loss can be written as $\ell(\xbf,h)=D(\xbf,P_h(\xbf))^{p}$ where $p\geq 1$ and $P_h:\Xcal \rightarrow \Xcal$ is a \emph{measurable} projection function that satisfies $P_h \circ P_h=P_h$ and $D(\xbf,P_h(\xbf))\leq D(\xbf,P_h(\xbf'))$ for all $\xbf,\xbf' \in \Xcal$.
\end{definition}
Notable examples of such tasks are K-means and PCA. In the former, $\ell$ is defined by $\ell(\xbf,h=(\mathbf{c}_1,\cdots, \mathbf{c}_K))=\min_{i \in \integ{k}}\|\xbf-\mathbf{c}_i\|_2^{2} = \|\xbf-P_{h}(\xbf)\|_2^2$ where $P_h(\xbf)$ is the projection of $\xbf$ on its closest centroid. In the latter, $P_h(\xbf)$ is the projection of $\xbf$ on the linear subspace spanned by $h$.
These two problems are actually related to a wider class of problems, namely $k$-dimensional coding schemes which are particular types of compression-type tasks. As described in \citet{Pontil}, one encounters these problems when $\Xcal$ is a Hilbert space (with some norm $\|\cdot\|$) and when the loss can be written as $\ell(\xbf,h)=\min_{\ybf \in Y}\|\xbf-h \ybf\|^{2}$ for $Y \subseteq \R^{k}$ a prescribed set of \emph{codes} (or \emph{codebook}) and $h: \R^{k} \rightarrow \Xcal$ is a linear map. In particular, non-negative matrix factorization (NMF) \citep{Lee,Udell} and dictionary learning (also known as \emph{sparse coding} \citealp{Honglak,MairalJulien2009,10.1145/1553374.1553463}) are other well known unsupervised learning methods which correspond to projection-type tasks.
As described in \citet{NIPS2012_c54e7837} there are interesting connections between these problems and the Wasserstein distance. More precisely, we have the following lemma (see a proof in Appendix \ref{proof:lemma:rosaco2} adapted to our notational context):
\begin{restatable}[\citealp{NIPS2012_c54e7837}]{lemma}{rosaco}
\label{lemma:rosaco2}
Let $S\subseteq \Xcal$, $p\in [1,+\infty)$ and $\pi \in \P_p(\Xcal)$. Consider $P_S:\Xcal \rightarrow S$, measurable, such that $D(\xbf,P_S(\xbf))\leq D(\xbf,\ybf)$ for all $\xbf \in \Xcal$ and $\ybf \in S$. Then 
\begin{equation*}
\E_{\xbf \sim \pi}[D(\xbf,P_S(\xbf))^{p}]=\W_p^{p}(\pi,P_S\#\pi)\,.
\end{equation*}
Moreover for any $\nu \in \P_p(\Xcal)$ such that $\supp(\nu) \subseteq S$ we have $\W_p(\pi,P_S\#\pi)\leq \W_p(\pi,\nu).$
\end{restatable}
We recall that $P_{S}\# \pi$ is the probability measure defined by $P_{S}\# \pi(A) := \pi(P_{S}^{-1}(A))$ for every measurable set $A$. Based on this lemma we now prove that compression-type tasks are Wasserstein regular, \ie\ that the task-specific norm $\|\cdot\|_{\mathcal{L}(\Hcal),p}$ can be bounded by a Wasserstein distance.
\begin{restatable}[Compression-type tasks are Wasserstein regular]{proposition}{compressiontypetasks}
\label{prop:compressiontypetasks}
Consider a metric space $(\Xcal,D)$, a hypothesis space $\Hcal$, $p\in [1,+\infty[$, and a compression-type task $\mathcal{L}(\Hcal)$ as in Definition \ref{def:compresstypetask}. Then 
\begin{equation*}
\begin{split}
&\forall h \in \Hcal, \pi\in \P_p(\Xcal), \ \Rcal(\pi,h) = \W_{p}^{p}(\pi,P_{h}\#\pi) \text{ and }\\
&\forall \pi,\pi' \in \P_p(\Xcal), \|\pi-\pi'\|_{\mathcal{L}(\Hcal),p}\leq \W_p(\pi,\pi')\,.
\end{split}
\end{equation*}
\end{restatable}
\begin{proof}
Let $h \in \Hcal$ and $P_h$ be the projection function. We denote $S=\{P_h(\xbf); \xbf \in \Xcal\}$ the image of $P_h$. Using Lemma \ref{lemma:rosaco2} we have, for $\pi \in \P_p(\Xcal),$
\begin{equation*} 
\Rcal(\pi,h)=\E_{\xbf\sim \pi}[\ell(\xbf,h)]=\E_{\xbf\sim \pi}[D(\xbf,P_h(\xbf))^{p}]=\W_p^{p}(\pi,P_h\#\pi)\,.
\end{equation*} 
Hence, for $\pi,\pi' \in \P_{p}(\Xcal)$ and $h \in \Hcal,$ 
\begin{equation*}
\begin{split}
\Rcal(\pi,h)^{1/p}-\Rcal(\pi',h)^{1/p}&=\W_p(\pi,P_h\#\pi)-\W_p(\pi',P_{h}\#\pi') \leq \W_p(\pi,P_{h}\#\pi') -\W_p(\pi',P_{h}\#\pi') \\ 
&\leq \W_p(\pi,\pi')\,,
\end{split}
\end{equation*}
where we used $\W_p(\pi,P_{h}\#\pi)\leq \W_p(\pi,\nu)$ if $\supp(\nu) \subseteq S$ (Lemma \ref{lemma:rosaco2}) and applied it to $\nu=P_{h}\#\pi'$ (since $\supp(P_{h}\#\pi') \subseteq S$ by definition of $S$). The last inequality is due the the triangle inequality. By symmetry $|\Rcal(\pi,h)^{1/p}-\Rcal(\pi',h)^{1/p}|\leq \W_p(\pi,\pi')$. Taking the supremum over $h \in \Hcal$ concludes. 
\end{proof}

\begin{remark}
As described in Proposition \ref{prop:compressiontypetasks}, compression-type tasks can be interpreted as finding a ‘‘simple'' distribution $\pi_h = P_{h} \# \pi$ that bests describe the data distribution $\pi$ in the sense of the Wasserstein distance. In PCA this distribution $\pi_h$ is given by the best low dimensional projection of $\pi$, and in K-means $\pi_h$ by the best discrete distribution of $K$ centroids. This idea is also related to the problem of fitting densities, \ie\ estimating the parameters $h \in \Hcal \subseteq \R^{M}$ of a parametrized distribution $\pi_h$ that best fits $\pi$. Two notable examples of such a learning task are \emph{Gaussian Mixture Modeling} (GMM) \citep{Dasgupta} and generative adversarial netwoks \citep{Goodfellow}. In order to find $h\in \R^{M}$ a principled way is to consider the negative likehood loss function $\ell(\xbf,h)=-\log(\pi_h(\xbf))$ that corresponds to minimizing the risk $\KL(\pi||\pi_{h})$ where $\KL$ is the Kullback-Leibler divergence. However, this approach is sometimes flawed, \textit{e.g.} when the data distribution is supported on a low-dimensional space or does not admit a density so that $\KL(\pi||\pi_{h})$ is undefined or infinite \citep{arjovsky2017towards}. As described in many contexts such as generative modeling \citep{pmlr-v84-genevay18a,arjovsky17a} or deconvolution problems \citep{RIGOLLET20181228,dedecker2013minimax} the Wasserstein distance, or its entropic regularized counterpart, is an interesting alternative fitting criterion to $\KL$. It boils down to minimizing a different risk $\widetilde{\Rcal}(\pi,h):=\W_p(\pi,\pi_h)$ which is not based on a loss function but can also be written as a Wasserstein distance. In this context, we directly have the bound $\sup_{h \in \Hcal}|\widetilde{\Rcal}(\pi,h)-\widetilde{\Rcal}(\pi',h)| \leq \W_p(\pi,\pi')$ using the triangle inequality.
\end{remark}

\subsection{Loss Functions that are $p$-th Power of a Lipschitz Function \label{sec:lip_tasks}}

Compression-type tasks are special cases of loss functions that can be written as the $p$-th power of a Lipchitz continuous function. Indeed, if $P_{h}$ is a projection function then $D(\xbf, P_{h}(\xbf))-D(\ybf, P_{h}(\ybf)) \leq D(\xbf, P_{h}(\ybf))-D(\ybf, P_{h}(\ybf)) \leq  D(\xbf,\ybf)$, thus, by a symmetrical argument, $|D(\xbf, P_{h}(\xbf))-D(\ybf, P_{h}(\ybf))|\leq D(\xbf,\ybf)$. Interestingly, these more general tasks are also Wasserstein regular: 
\begin{restatable}{proposition}{villanisuperbound}
\label{theo:villani_super_bound}
Let $(\Xcal,D)$ be a complete separable metric space. Consider a loss function that can be written for $h \in \Hcal$ as $\ell(\cdot,h) = \phi_{h}^{p},$ where $p \geq 1$ and $\phi_{h} \in \operatorname{Lip}_{L}(\Xcal,\R_+),$ then
\begin{equation*}
\forall \pi,\pi' \in \P_p(\Xcal), \|\pi-\pi'\|_{\mathcal{L}(\Hcal),p}\leq L \W_p(\pi,\pi')\,.
\end{equation*} 
In other words, the task $\mathcal{L}(\Hcal)$ is $p$-Wasserstein regular with constant $L$.
\end{restatable}
\begin{proof}
Using \citet[Proposition 7.29]{Villani} we have 
\begin{equation*}
\label{eq:eqvillaniii}
\left|\left(\int \phi_h(\xbf)^{p}\dr \pi(\xbf)\right)^{1/p}-\left(\int \phi_h(\ybf)^{p}\dr \pi'(\ybf)\right)^{1/p}\right|
\leq L \W_p(\pi,\pi')\,,
\end{equation*}
since $\phi_{h} \in \operatorname{Lip}_{L}(\Xcal,\R_+)$. The conclusion follows by taking the supremum over $h \in \Hcal$.
\end{proof} 
As described previously, this argument can be used to recover Wasserstein regularity of compression-type tasks as $\ell(\xbf,h) = D(\xbf, P_{h}(\xbf))^p$ is the $p$-th power of a $1$-Lipschitz function. More importantly, the previous property allow us to prove that many \emph{supervised} learning tasks are also Wasserstein regular as described in the next example sections.

\subsubsection{Regression Tasks \label{sec:regression_tasks}}

The first example we consider is that of the regression tasks where $\Xcal=\R^{d+K}$ is endowed with the metric $D(\xbf=(\zbf,\ybf),\xbf'=(\zbf',\ybf')) = \|\zbf-\zbf'\|_{\R^{d}}+\|\ybf-\ybf'\|_{\R^{K}}$ for some norm $\|\cdot\|_{\R^{d}}$ (\textit{resp.} $\|\cdot\|_{\R^{K}}$) on $\R^{d}$ (\textit{resp.} $\R^{K}$). The loss function is given by $\ell(\xbf=(\zbf,\ybf),h)=\|\ybf-h(\zbf)\|^{p}_{\R^{K}}$ for some $p \geq 1$ and a regressor $h$ that belongs to the hypothesis space $\Hcal\subseteq \operatorname{Lip_{L}}(\R^{d}, \R^{K})$. In particular when $p = 2, \|\cdot\|_{\R^{K}} = \|\cdot\|_{2},$ the setting corresponds to a standard regression problem with the squared loss, and when $p=1, \|\cdot\|_{\R^{K}} = \|\cdot\|_{1},$ to the least absolute deviation regression problem. Then, for $\xbf=(\zbf,\ybf),\xbf'=(\zbf',\ybf')$, we have
\begin{equation*}
\begin{split}
\left|\|\ybf-h(\zbf)\|_{\R^{K}}-\|\ybf'-h(\zbf')\|_{\R^{K}}\right| &\leq \|\ybf-\ybf'-(h(\zbf)-h(\zbf'))\|_{\R^{K}} \\
&\leq \|\ybf-\ybf'\|_{\R^{K}}+ \|h(\zbf)-h(\zbf')\|_{\R^{K}} \\
&\leq \|\ybf-\ybf'\|_{\R^{K}}+ L\|\zbf-\zbf'\|_{\R^{d}} \\
&\leq \max\{L,1\} D(\xbf=(\zbf,\ybf),\xbf'=(\zbf',\ybf'))\,.
\end{split}
\end{equation*} 
Consequently the loss can be written as the $p$-th power of a Lipschitz function and the task is $p$-Wasserstein regular with constant $\max\{L,1\}$ using Proposition \ref{theo:villani_super_bound} (with $\W_p$ computed with the distance $D$).

This setting encompasses regressors such as multi-layer perceptron (MLP) $h(\zbf)=f_{\MLP}(\zbf)= T_{J} \circ \rho_{J-1} \circ \cdots \circ \rho_{1} \circ T_1(\zbf)$ where $T_j(\wbf)=\Mbf_j \wbf +\bbf_j$ is an affine function with bounded weights and $\rho_j$ is a non-linear activation function. Designing Lipschitz-continuous neural networks and computing precisely their Lipschitz constant is an (NP)hard problem and is an active line of research \citep{Bengio_lip,Fazlyab,Latorre2020Lipschitz,kim2021lipschitz}. However, for fully-connected networks such as MLP with $1$-Lipschitz activation functions (\textit{e.g.} ReLU, Leaky ReLU, SoftPlus, Tanh, Sigmoid, ArcTan or Softsign) a simple upper-bound of the Lipschitz constant of $f_{\MLP}$ is given by $L = \Pi_{j=1}^{J} \|\Mbf_j\|_{2 \rightarrow 2}$ \citep{Bengio_lip} where $\|\cdot\|_{2 \rightarrow 2}$ denotes the $2$-operator norm for matrices. This bound is not necessarily tight, however we can use it to prove that regression tasks using MLP with bounded parameters and with $1$-Lipschitz activation functions is Wasserstein regular as soon as $\forall j \in \integ{J}, \|\Mbf_j\|_{2 \rightarrow 2} \leq R$ for some $R>0$.

\subsubsection{Classification Tasks \label{sec:binary_classif}}

Binary classifications tasks can also be related to Wasserstein regularity. These problems corresponds to $\Xcal= \R^{d} \times \{+1,-1\}$ and often rely on \emph{convex surrogates} of the $0-1$ loss such as $\ell(\xbf=(\zbf,y),h)=\beta(y h(\zbf))$ where $y\in \{-1,+1\}$, $h:\R^{d} \rightarrow \R$ and $\beta:\R \rightarrow \R_+$ is convex \citep{bartlett2006convexity}. Well known examples include the logistic loss $\beta(t)=\log(1+e^{-t}),$ the hinge loss $\beta(t)=\max(1-t,0)$ or the squared hinge loss $\beta(t)=\max(1-t,0)^{2}$. In all of these cases $\beta$ can be written as $\varphi^{p}$ for some Lipschitz function $\varphi$ and $p \geq 1$. If the hypothesis space is made of uniformly bounded and Lipschitz classifiers then the previous reasoning also applies. Indeed if $h \in \Hcal \subseteq \operatorname{Lip}_{L}(\R^{d},\R),$ with $\|h\|_{\infty} \leq B$ then, for any $\xbf=(\zbf,y),\xbf'=(\zbf',y'),$ we have 
\begin{equation*}
\begin{split}
|\varphi(yh(\zbf))-\varphi(y'h(\zbf'))|&\leq |\varphi(yh(\zbf))-\varphi(yh(\zbf'))|+|\varphi(yh(\zbf'))-\varphi(y'h(\zbf'))|\\
&\leq L_\varphi (|yh(\zbf)-yh(\zbf')|+|yh(\zbf')-y'h(\zbf')|) \\
&\leq L_\varphi(|y| |h(\zbf)-h(\zbf')|+|h(\zbf')| |y-y'|) \\
&\leq L_\varphi\max(L,B)(\|\zbf-\zbf'\|_2+|y-y'|).
\end{split}
\end{equation*}
Consequently, by Proposition \ref{theo:villani_super_bound}, the task is $p$-Wasserstein regular with constant $L_\varphi\max(L,B)$ with $\W_p$ computed with the distance $D((\zbf,y),(\zbf',y'))=\|\zbf-\zbf'\|_2+|y-y'|$. In particular, this example includes classifiers of the type $h =\rho \circ f_{\MLP}$ where $f_{\MLP}: \R^{d} \rightarrow \R$ as in Section \ref{sec:regression_tasks} and $\rho:\R \rightarrow [-1,1]$ is an ‘‘output-layer'' function that is Lipschitz such as the $\operatorname{tanh}$ function (in this case $B = 1$).

\section{Application to Compressive Statistical Learning}
\label{sec:compress_section}

In the previous sections, we identified conditions allowing to 
1) upper bound task-specific metrics by a Wasserstein distance $\W_{p}$ (notion of Wasserstein regularity, Section \ref{sec:wasserstein_learnability});
2) control $\W_p$ by an MMD, modulo an exponent $\delta \in (0,1]$, under certain conditions on the model set of distributions $\Sfrak$ at stake and the kernel of the MMD (Section \ref{sec:wass_mmd}). We apply in this section these results to the theory of compressive statistical learning. The goal is to establish theoretical guarantees for CSL. This section is organized as follows: we first recall the main concepts and objectives of CSL, then we introduce a generalization of the existing framework (namely the Hölder LRIP) which we finally connect with the results of Section \ref{sec:wasserstein_learnability} and \ref{sec:wass_mmd} to establish the guarantees.

\subsection{Compressive Statistical Learning}

In contrast to the empirical risk minimization approach described in Section \ref{section:gentle} the principle of compressive statistical learning is to learn a hypothesis $\hat{h}$ by relying on a single \emph{sketch} vector $\sbf \in \R^{m}$ instead of the full dataset $(\xbf_i)_{i \in \integ{n}}$ (or equivalently the empirical distribution $\pi_n$). This sketch aims to summarize the properties of the empirical distribution that are essential for the learning task. The benefits of this approach are numerous. First, as a side effect of its definition, the sketching mechanism is adapted for distributed and streaming scenarios since the sketch of a concatenation of datasets is a simple average of the sketches of those datasets. More importantly, when $m\ll nd $ the data are drastically compressed, which facilitates their storage and transfer. Finally, it has be shown that sketching can preserve privacy \citep{chatalic:tel-03023287,balog} since the transformation which turns a dataset into a single vector discard the individual-user informations.

The compressive statistical learning framework requires two steps: 1) to compute a sketch vector $\sbf \in \R^{m}$ of size $m$ driven by the complexity of the learning task 2) to address a nonlinear least-squares optimization problem on this sketch to learn the hypothesis $\hat{h}$ that best solves our learning task. As described latter, this step is an inverse problem in the space of measures and can be related to the generalized method of moments \citep{hall2005generalized}. We summarize in the following the main concepts related to the CSL theory established in \citet{gribonval2020compressive,gribonval2020statistical} that will be useful to describe our contributions.

\begin{figure}[t!]
\begin{center}
\includegraphics[width=1\linewidth]{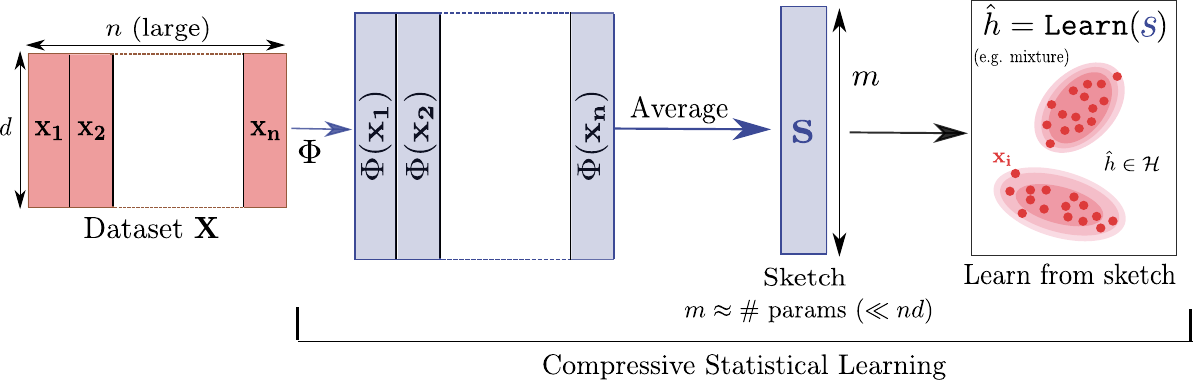}
\end{center}
\caption{\label{fig:sketch}The principle of CSL (when $\Xcal=\R^{d}$). From a dataset $\X$ with $n$ samples (usually $n$ is large) we push each sample $\xbf_i \in \R^{d}$ to either $\R^{m}$ or $\C^{m}$ using a well-chosen feature function $\Phi(\xbf_i)$. The second step is to average all the $\Phi(\xbf_i)$ to form a \emph{sketch} of of the dataset $\sbf=\frac{1}{n} \sum_{i=1}^{n} \Phi(\xbf_i)$ (which is convenient for distributed data and data streams). We finally learn a hypothesis $\hat{h} \in \Hcal$ based only the sketch whose size is driven by the learning task and is usually of the order of the number of parameters to learn. }
\end{figure}

\subsubsection{The Sketching Operator} Given a collection of data points $\X=(\xbf_i)_{i\in \integ{n}}$ where $\xbf_i \in \Xcal$, the CSL procedure relies on an operator $\Phi$ which maps a sample $\xbf_i \in \Xcal$ to either $\Phi(\xbf_i) \in \R^{m}$ or $\mathbb{C}^{m}$. Based on this operator, a sketch of a dataset $(\xbf_i)_{i \in \integ{n}}$ is defined \textit{via} the \emph{vector}
\begin{equation*}
\sbf:=\frac{1}{n} \sum_{i=1}^{n} \Phi(\xbf_i)\,.
\end{equation*} 
The main challenge is to find, depending on the task, an adequate $\Phi$ and a reasonable sketch size $m$ to learn the specific task (see Figure \ref{fig:sketch}). As described in the next sections this can be achieved by exploiting links with the formalism of linear inverse problems, compressive sensing, and low complexity recovery. Given $\Phi$, the associated \emph{sketching operator} is \begin{equation}
\label{eq:sketching_operator_eq}
\begin{split}
\Acal:  \ \P(\Xcal) &\rightarrow \R^{m} \text{ or } \mathbb{C}^{m} \\
 \pi &\rightarrow \Acal(\pi):=\int_{\Xcal} \Phi(\xbf)\dr \pi(\xbf)\,.
\end{split}
\end{equation}
This operator is ‘‘linear''\footnote{We can extend $\Acal$ to the space of finite signed measure $\Mcal(\Xcal)$ where it is a linear operator in the usual sense.}  in $\pi$ in that $\Acal((1-\lambda)\pi+\lambda \pi')=(1-\lambda)\Acal(\pi)+\lambda\Acal(\pi')$ for $\lambda \in [0,1]$. When applied to the empirical distribution $\pi_n=\frac{1}{n} \sum_{i=1}^{n} \delta_{\xbf_i}$ we recover the sketch $\sbf$ as
\begin{equation*}
\Acal(\pi_n)=\Acal(\frac{1}{n} \sum_{i=1}^{n} \delta_{\xbf_i}) = 
\frac{1}{n} \sum_{i=1}^{n} \Phi(\xbf_i)=\sbf\,.
\end{equation*}
This sketch can be understood as a the average of generalized empirical moments on the training collection based on the feature function $\Phi$ \citep{hall2005generalized}.

\subsubsection{The Model Set and the Decoder} A central operator in CSL is the \emph{decoder} that is, informally, an operator $\D$ that goes in the other direction than $\Acal$: it takes as input a vector and outputs a probability distribution. Ideally we would like to be able to perfectly decode our original distribution from the sketch, \ie\ to find $\D$ such that $\D\circ \Acal = \operatorname{id}$. However, as described in \citet{gribonval2020compressive}, we can not hope to perfectly recover any distribution without assumptions. These assumptions are formalized by the means of a \emph{model set} $\Sfrak\subseteq \P(\Xcal)$ which describes a subset of probability distributions where the decoding is perfect and robust to noise. A \emph{decoder} is defined very generally as an operator
\begin{equation*}
\D:  \sbf \rightarrow \D[\sbf] \in \Sfrak \,.
\end{equation*}
Suppose for the moment that we know how to sketch and how to decode \ie\ we know $\Acal$ and $\D$. Given a sketch $\sbf$ of the dataset and a decoder $\D$ we can find a hypothesis based on the following risk minimization:
\begin{equation*}
\hat{h} \in \underset{h \in \Hcal}{\arg\min} \ \Rcal(\D[\sbf],h)\,.
\end{equation*}
As such in CSL the risk $\Rcal(\D[\sbf],\cdot)$ acts as a proxy for the empirical risk $\Rcal(\pi_n,\cdot)$, and one hopes to produce a hypothesis which is as good as the one obtained by empirical risk minimization (ERM). At first sight it seems that solving $\underset{h \in \Hcal}{\arg\min} \ \Rcal(\D[\sbf],h)$ is as hard as doing ERM. The crucial point is that, by definition, $\D[\sbf]$ is a probability distribution in the model set $\Sfrak$ and thus usually admits a simple expression. Consequently finding $\hat{h}$ with this procedure is most of the time simpler than doing ERM.\\

\emph{How to obtain statistical guarantees ?} Theoretical guarantees of CSL can be derived when the operator $\Acal$ satisfies the so-called \emph{Lower Restricted Isometric Property} (LRIP) \citep{gribonval2020compressive,KerivenIOP}:
\begin{equation}
\label{eq:informalLrip}
\forall \pi,\pi' \in \Sfrak, \|\pi-\pi'\|_{\mathcal{L}(\Hcal)} \lesssim \|\Acal(\pi)-\Acal(\pi')\|_2\,.
\end{equation}
This property implies that two distributions in the model set $\Sfrak$ (\ie\ ‘‘simple'' distributions for which we hope that everything works ‘‘fine'') 
 have the same sketches then they are equivalent with respect to the task-dependent metric $\|\cdot\|_{\mathcal{L}(\Hcal)}$, \ie, they lead to the same risk for every hypothesis. 
 When this condition holds, the following decoder $\D$ provides many interesting guarantees:
\begin{equation}
\label{eq:idealdecoder}
\D[\sbf]\in \underset{\pi \in \Sfrak}{\arg\min} \|\Acal(\pi)-\sbf\|_2\,.
\end{equation}
Indeed it can be shown \citet{gribonval2020compressive} that this decoder is \emph{ideal} in the sense that it satisfies the \emph{Instance Optimality Property} (IOP) which allows to have a control on the excess risk for \emph{all} probability distributions. We will describe this property more in depth in Section \ref{sec:holderlripsection} and only give now its consequence when we consider any data generating distribution $\pi \in \P(\Xcal)$ associated to the optimal hypothesis $h^{*} \in \arg\min_{h \in \Hcal} \Rcal(\pi,h)$ and $\pi_n$ an empirical distribution associated to samples from $\pi$. Suppose that we have access only to a sketch $\sbf=\Acal(\pi_n)$ of this empirical distribution with $\Acal$ that satisfies the LRIP. Consider the decoder $\D$ defined in \eqref{eq:idealdecoder} and $\hat{h}$ such that  $\hat{h} \in \arg\min_{h \in \Hcal}\Rcal(\D[\sbf],h)$. Using the IOP property it can be shown that 
\begin{equation*}
\|\pi-\D[\sbf]\|_{\mathcal{L}(\Hcal)} \lesssim \operatorname{Bias}(\pi,\Sfrak)+\|\Acal(\pi)-\Acal(\pi_n)\|_2\,,
\end{equation*}
where $\operatorname{Bias}(\pi,\Sfrak)$ is a \emph{bias term} (which will be properly defined latter) which is large when $\pi$ is far from the model set and vanishes when $\pi \in \Sfrak$. This leads to the following bound on the excess risk:
\begin{equation*}
\Rcal(\pi,\hat{h})-\Rcal(\pi,h^{*}) \lesssim \operatorname{Bias}(\pi,\Sfrak)+ \|\Acal(\pi)-\Acal(\pi_n)\|_2\,.
\end{equation*}
This inequality echoes the well-known risk decomposition in statistical learning: the first term $\operatorname{Bias}(\pi,\Sfrak)$ resembles the approximation error coming from the chosen model and $\|\Acal(\pi)-\Acal(\pi_n)\|_2$ resembles the estimation error and typically converges to zero with a $n^{-1/2}$ rate. Consequently, if the model set $\Sfrak$ is such that the bias term is of the order of the true risk $\Rcal(\pi,h^{*})$ (this can be ensured for certain learning tasks \citealp{gribonval2020statistical}) then $\Rcal(\pi,\hat{h})$ converges to the order of the true risk as $n$ grows.

\subsection{Extending Compressive Statistical Learning Guarantees with Hölder LRIP and Hölder IOP}
\label{sec:holderlripsection}

In this section we define an extended notion of LRIP, namely the Hölder LRIP, and show that it can be exploited to control the statistical performance of compressive statistical learning. The H\"older LRIP is basically a relaxation of the LRIP with a Hölder exponant $\delta \in (0,1]$. To connect with the previous sections, this exponent will also be related to the one found in Section \ref{sec:wass_mmd} to control $\W_p$ by the MMD. We consider the following definition:
\begin{restatable}[Hölder LRIP and IOP]{definition}{holderlripandiop}
\label{def:holderlrip_and_iop}
Consider a learning task $\mathcal{L}(\Hcal)$, an exponent $p \in [1,+\infty)$, and a model set $\Sfrak$. A sketching operator $\Acal: \P(\Xcal) \rightarrow \mathbb{C}^{m}$ satisfies the Hölder LRIP for $\delta \in (0,1]$ with error $\eta \geq 0$ and constant $C>0$ if
\begin{equation}
\label{eq:lrip}
\tag{Hölder-LRIP}
\forall \pi, \pi' \in \Sfrak, \ \|\pi-\pi'\|_{\mathcal{L}(\Hcal),p}\leq C\ \|\Acal(\pi)-\Acal(\pi')\|^{\delta}_2+\eta\,.
\end{equation}
A decoder $\D: \mathbb{C}^{m} \rightarrow \Sfrak$ satisfies the Hölder IOP for $\delta \in (0,1]$ with error $\eta \geq 0$  and constant $C>0$ if
\begin{equation}
\label{eq:iop}
\tag{Hölder-IOP}
\forall \pi \in \P(\Xcal), \forall \ebf \in \mathbb{C}^{m}, \|\pi-\D[\Acal(\pi)+\ebf]\|_{\mathcal{L}(\Hcal),p}\leq \operatorname{Bias}(\pi,\Sfrak)+C\ \|\ebf\|^{\delta}_2+\eta\,,
\end{equation}
where $\operatorname{Bias}(\cdot,\Sfrak):\P(\Xcal)\rightarrow \R_{+}$ is a function such that $\forall \pi \in \Sfrak, \ \operatorname{Bias}(\pi,\Sfrak)=0$.
\end{restatable}

The instance optimality property means that the decoder is able to retrieve (with error $\eta$) any probability distribution when the modeling is exact (\ie\ $\pi \in \Sfrak$ and $\ebf=0$). As this condition is rarely met in practice, the IOP property also captures robustness to some noise $\ebf$ and modeling error. As such, the decoding error $\|\pi-\D[\Acal(\pi)+\ebf]\|_{\mathcal{L}(\Hcal),p}$ is bounded by the amplitude of the noise and the bias term. The previous definition generalizes the classical LRIP and IOP property (including their definition with an error term $\eta$ \citealp{gribonval2020compressive}) since both are met when $\delta=1$.  It turns out that both Hölder LRIP and IOP are equivalent as stated in the next result:

\begin{restatable}[Equivalence of Hölder LRIP and IOP]{proposition}{holderlripandiopareequivalent}
\label{prop:holderlripandiopareequivalent}
Consider a learning task $\mathcal{L}(\Hcal)$, an exponent $p \in [1,+\infty)$ , and a model set $\Sfrak$. 
\begin{enumerate}[label=(\roman*)]
\item If $\Acal$ satisfies \eqref{eq:lrip} with error $\eta \geq 0$ and constant $C>0$ then the ''ideal" decoder defined by
\begin{equation}
\label{eq:optimal_decoder}
\D[\sbf]\in \underset{\pi \in \Sfrak}{\arg\min} \|\Acal(\pi)-\sbf\|_2\,,
\end{equation}
satisfies \eqref{eq:iop} with constant $2C>0$, error $\eta \geq 0$ and $$\operatorname{Bias}(\pi,\Sfrak):=\inf_{\tau \in \Sfrak} \|\pi-\tau\|_{\mathcal{L}(\Hcal),p}+2C \|\Acal(\pi)-\Acal(\tau)\|_2^{\delta}\,.$$
\item Conversely if the decoder $\D$ defined in \eqref{eq:optimal_decoder} satisfies \eqref{eq:iop} with error $\eta \geq 0$, constant $C>0$ and $\operatorname{Bias}(\pi,\Sfrak)$ defined above, then $\Acal$ satisfies \eqref{eq:lrip} with constant $C>0$ and error $2 \eta$. 
\end{enumerate}
\end{restatable}

The proof is deferred to Appendix \ref{app:proofseccompress_section}. In this paper we always assume that the minimization problem \eqref{eq:optimal_decoder} has at least one solution and, as in \citet{bourriergribonval}, the result can be adjusted to handle the case where the $\arg\min$ defining the ideal decoder is only approximated to a certain accuracy. This proposition states that if the Hölder LRIP is satisfied, then the decoder that returns the element in the model that best matches the measurement $\Acal(\pi)$ is instance optimal. On the other hand, if some instance optimal decoder exists, then the Hölder LRIP must be satisfied. In other words, when the Hölder LRIP is satisfied,  we know that a negligible amount of information is lost when encoding a probability measure in $\Sfrak$. As advertised the Hölder LRIP allows us to have some guarantees on the excess risk as described in the next theorem:

\begin{restatable}[Compressed statistical learning guarantees]{theorem}{stat_guarantees}
\label{theo:stat_guarantees}
Consider a sketching operator $\Acal: \P(\Xcal) \rightarrow \mathbb{C}^{m}$ that satisfies the Hölder LRIP with $\delta \in (0,1]$, constant $C>0$ and error $\eta \geq 0$. Let $\pi \in \P(\Xcal)$ be the data generating distribution and $\xbf_1, \cdots, \xbf_n \sim \pi$ (not necessarily \textit{i.i.d.}). Consider the empirical distribution $\pi_n=\frac{1}{n} \sum_{i=1}^{n} \delta_{\xbf_i}$ and a sketch of the dataset $\sbf=\Acal(\pi_n)$.

Let $h^{*} \in \arg\min_{h \in \Hcal} \Rcal(\pi,h)$ be the optimal hypothesis and $\hat{h} \in \arg\min_{h \in \Hcal} \Rcal(\D[\sbf],h)$ where $\D[\sbf]\in \arg\min_{\pi \in \Sfrak} \|\Acal(\pi)-\sbf\|_2$. Then
\begin{equation*}
\Rcal(\pi,\hat{h})^{1/p}-\Rcal(\pi,h^{*})^{1/p} \leq 2\operatorname{Bias}(\pi, \Sfrak)+2C\|\Acal(\pi)-\Acal(\pi_n)\|^{\delta}_2+2\eta\,,
\end{equation*}
where $\operatorname{Bias}(\pi, \Sfrak)=\inf_{\tau \in \Sfrak} \|\pi-\tau\|_{\mathcal{L}(\Hcal),p}+2C \|\Acal(\pi)-\Acal(\tau)\|_2^{\delta}$.
\end{restatable}
\begin{proof}
Using Proposition \ref{prop:holderlripandiopareequivalent} we know that the decoder is instance optimal and satisfies the Hölder IOP \eqref{eq:iop}. Consider $\ebf=\Acal(\pi_n)-\Acal(\pi)$ we have by definition $\|\pi-\D[\Acal(\pi)+\ebf]\|_{\mathcal{L}(\Hcal),p}\leq \operatorname{Bias}(\pi,\Sfrak)+C\ \|\ebf\|^{\delta}_2+\eta$ which gives $\|\pi-\D[\Acal(\pi_n)]\|_{\mathcal{L}(\Hcal),p}\leq \operatorname{Bias}(\pi,\Sfrak)+C\ \|\Acal(\pi_n)-\Acal(\pi)\|^{\delta}_2+\eta$. 
We conclude the proof by using $\Rcal(\pi,\hat{h})^{1/p}-\Rcal(\pi,h^{*})^{1/p} \leq 2 \|\pi-\D[\sbf]\|_{\mathcal{L}(\Hcal),p}=2 \|\pi-\D[\Acal(\pi_n)]\|_{\mathcal{L}(\Hcal),p}$.
\end{proof}

When the samples $\xbf_1,\cdots, \xbf_n$ are \textit{i.i.d.}\footnote{We emphasize that the \textit{i.i.d.} assumption is not required in order to obtain the bound in Theorem \ref{theo:stat_guarantees}. It is only used to guarantee that $\|\Acal(\pi)-\Acal(\pi_n)\|_2\underset{n \rightarrow+\infty}{\rightarrow} 0$. the term $\|\Acal(\pi)-\Acal(\pi_n)\|_2$, which is the empirical estimation error, goes to zero as $n \rightarrow + \infty$ with a typical $n^{-1/2}$ rate.} This result is essential: it illustrates that if we have carefully designed $\Sfrak$ so that the bias term is \emph{of the order of} $\Rcal(\pi,h^{*})^{1/p}$, and if we know a sketching operator with the Hölder LRIP property, then $\Rcal(\pi,\hat{h})^{1/p}$ converges to a constant times the order of the true risk as $n$ grows (when the error term $\eta = 0$). The notable price to pay between this result and the one presented in the context of the LRIP ($\delta=1$) is that while the usual guaranteed speed of convergence is $O(n^{-1/2})$ here it becomes $O(n^{-\delta/2})$, which is slower. The next section outlines how the various results presented in this work can be applied to establish the Hölder LRIP.

\subsection{Connecting the Hölder LRIP with the Results of Section \ref{sec:wass_mmd} and \ref{sec:wasserstein_learnability} \label{sec:existence_of_sketch}}

As described in Theorem \ref{theo:stat_guarantees}, guarantees on the excess risk can be achieved with a sketching operator $\Acal$ that satisfies the Hölder LRIP. In this section, we provide elements to obtain this property. In line with the approach developed in \citet{gribonval2020compressive}, the core of our reasoning is based on the theory of kernel embedding of probability distributions and random features. 

\subsubsection{Restricted Wasserstein Regularity is Necessary to the Hölder LRIP}

Firstly, a prerequisite for the Hölder LRIP is the Wasserstein regularity condition (Definition \ref{def:wass_learn}) of the learning task when restricted to the model set $\Sfrak$. More precisely we have the following result:

\begin{restatable}[Restricted Wasserstein regularity is necessary]{proposition}{wasslearnnecessary}
\label{prop:wass_learn_is_ness}
Consider $\Xcal= \R^{d}$ equipped with a norm $\|\cdot\|, p \in [1, +\infty)$, and a model set $\Sfrak \subseteq \P_p(\R^{d})$. Consider a sketching operator $\Acal$ defined by $\Phi: \R^d \rightarrow \R^m$ with $\Phi \in \Lip_{L}\left((\R^{d},\|\cdot\|), (\R^{m},\|\cdot\|_2)\right)$. If $\Acal$ satisfies \eqref{eq:lrip} with error $\eta=0$, constant $C>0$ and $\delta=1$ then 
\begin{equation*}
\forall \pi,\pi' \in \Sfrak, \|\pi-\pi'\|_{\mathcal{L}(\Hcal),p} \leq C L \W_{1}(\pi,\pi') \leq C L \W_p(\pi,\pi')\,,
\end{equation*}
where the Wasserstein distance is computed with the distance $D(\xbf,\ybf)= \|\xbf-\ybf\|$.
\end{restatable}
The proof is deferred to Appendix \ref{proof:prop:wass_learn_is_ness} and simply amounts to showing that $\|\Acal(\pi)-\Acal(\pi')\|_2 \leq L\W_1(\pi,\pi')$. According to this proposition, if $\Phi$ is Lipschitz and $\Acal$ satisfies the Hölder LRIP with $\delta=1$ then $\mathcal{L}(\Hcal)$ is \emph{necessarily} $p$-Wasserstein regular when we restrict the Definition \ref{def:wass_learn} to distributions belonging to the model set $\Sfrak$. In particular this proposition applies to the classical LRIP setting of \citet{gribonval2020compressive}. More importantly the Lipschitz hypothesis encompasses the case where $\Phi$ is defined with random Fourier features\footnote{In this setting $\Phi(\xbf)=\frac{1}{\sqrt{m}}\left( \sqrt{2}\sin(\xbf^{\top} \w_1), \sqrt{2}\cos(\xbf^{\top}\w_1), \cdots, \sqrt{2}\sin(\xbf^{\top} \w_{m/2}), \sqrt{2}\cos(\xbf^\top \w_{m/2}) \right)^{\top}$ for some random draw of $\w_j$.} as usually considered in the compressive statistical learning literature \citep{gribonval2020compressive, gribonval2020statistical, belhadji, shi2022compressive, shi_compressive_patch}. This result thus shows that a \emph{restricted} Wasserstein regularity is necessary for establishing statistical guarantees of CSL through the Hölder LRIP. 

\begin{remark}
The previous result can be easily generalized to the case where $\delta \in (0,1)$. Under the same assumptions on $\Phi$, if $\Acal$ satisfies \eqref{eq:lrip} with an error of $\eta=0$, a constant $C>0$, and $\delta \in (0,1)$, we can show that $\forall \pi,\pi' \in \Sfrak$, $\|\pi-\pi'\|_{\mathcal{L}(\Hcal),p} \leq C L^\delta \W_{1}(\pi,\pi')^{\delta} \leq C L^\delta \W_p(\pi,\pi')^{\delta}$. This condition extends the Wasserstein regularity property, and it raises the question of which learning tasks satisfy it.
\end{remark}

\subsubsection{From Wasserstein Regularity to the Kernel Hölder LRIP and Hölder LRIP}

Interestingly, a converse of Proposition \ref{prop:wass_learn_is_ness} is also true. Indeed, as shown in Section \ref{sec:wasserstein_learnability} many learning tasks are Wasserstein regular, and this, \emph{independently} of the choice of the model set $\Sfrak$. For instance, this is true for compression-type tasks such as K-means/medians, PCA, or supervised learning tasks such as regression and binary classification (see Table \ref{tab:summary_wass_task}).

Consequently, if we add the elements of Section \ref{sec:wass_mmd}, namely that $(\Sfrak, \W_p)$ is $(\kappa, \delta)$-embeddable (Definition \ref{def:kappaembedable}), we can obtain, under certain assumptions about $\kappa, \Sfrak$, that the metric associated with the task satisfies the following chain of inequalities:
\begin{equation}
\label{eq:eqroadmap}
\forall \pi,\pi' \in \Sfrak, \ \|\pi-\pi'\|_{\mathcal{L}(\Hcal),p} \stackrel{\text{Section} \ \ref{sec:wasserstein_learnability}}{\lesssim} \W_p(\pi,\pi') \stackrel{\text{Section} \ \ref{sec:wass_mmd}}{\lesssim} \|\pi-\pi'\|^{\delta}_{\kappa}\,.
\end{equation}
As shown in Section \ref{sec:wass_mmd}, the last inequality can be obtained with an MMD associated with TI, PSD kernels and under certain assumptions on the moments of the distributions in $\Sfrak$ and their regularity. In other words, by combining the results of Section \ref{sec:wass_mmd} and \ref{sec:wasserstein_learnability}, our analysis shows that for many learning tasks and with some hypothesis on the kernel $\kappa, \Sfrak$ the task metric is  bounded by $\MMD^\delta$ uniformly on $\Sfrak$. We refer to this property as the \emph{kernel Hölder LRIP}, \ie\ when there exists $C > 0, \delta \in (0,1], \eta \geq 0$ such that
\begin{equation}
\forall \pi,\pi' \in \Sfrak, \ \|\pi-\pi'\|_{\mathcal{L}(\Hcal),p} \leq C \|\pi-\pi'\|_{\kappa}^\delta + \eta\,.
\end{equation}
The echoes the \emph{kernel} LRIP  described in \citet{gribonval2020compressive} but with a Hölder exponent $\delta \in (0,1]$. Informally, our findings show that a kernel Hölder LRIP is not so difficult to obtain for many learning tasks. Therefore, as long as the MMD can be uniformly controlled on $\Sfrak$ by a distance between finite-dimensional sketches, \textit{i.e.} when
\begin{equation}
\label{eq:finite_mmd}
\forall \pi, \pi' \in \Sfrak, \ \|\pi-\pi'\|_{\kappa} \lesssim \|\Acal(\pi)-\Acal(\pi')\|_2\,,
\end{equation}
we can use all the results from the previous sections to obtain the Hölder LRIP. 

The property described in \eqref{eq:finite_mmd} depends only on the operator $\Acal$, the kernel $\kappa$, and the model set $\Sfrak$. To establish it, several strategies have been considered in the literature. For the sake of conciseness, we only provide some intuition here and refer the reader to \citet{gribonval2020compressive} for a more detailed discussion. The general idea is to construct, from a kernel $\kappa$, a function $\Phi: \R^d \rightarrow \R^m$ such that
\begin{equation}
\label{eq:pointwise}
\forall \xbf, \ybf, \ \langle \Phi(\xbf), \Phi(\ybf) \rangle_{\R^m} \approx \kappa(\xbf,\ybf)\,,
\end{equation}
and to ‘‘extend'' this approximation to pairs of probability distributions as 
\begin{equation}
\label{eq:approx_all_model}
\forall \pi,\pi' \in \Sfrak, \ \|\pi-\pi'\|^2_{\kappa} \approx \|\Acal(\pi)-\Acal(\pi')\|^2_2\,,
\end{equation}
where $\Acal$ is given by $\Phi$ as in \eqref{eq:sketching_operator_eq}. Ensuring \eqref{eq:pointwise} is a well established area of research and, when $\kappa$ is TI, PSD, approaches such as random Fourier features (RFF) \citep{Rahimi}, which rely on Bochner's theorem, can be used (see \textit{e.g.} \citealt{liu2021random} for a review). On the other hand, condition \eqref{eq:approx_all_model} is much more challenging to obtain. For TI, PSD kernels RFF can also be used: given a pair $\pi,\pi'$, the main strategy is to prove a pointwise control of the form $(1-\rho)\|\pi-\pi'\|^2_{\kappa} \leq \|\Acal(\pi)-\Acal(\pi')\|^2_2 \leq (1+\rho)\|\pi-\pi'\|^2_{\kappa}$ with high probability for $\rho \in (0,1]$, and then being able to control certain \emph{covering numbers} related to $\Sfrak$ to obtain a uniform control \citep{gribonval2020compressive, belhadji}. Another approach, considered for example in \citet{chatalic2022mean}, is to construct $\Phi$ based on data-dependent Nyström approximation which exploits a small random subset of the dataset (and also requires controlling covering numbers). These approaches ensure that for a sufficiently large but controlled $m$, the condition \eqref{eq:approx_all_model} is satisfied and therefore also \eqref{eq:finite_mmd}.

\subsubsection{Discussion}

As a consequence, when the task $\mathcal{L}(\Hcal)$ is $p$-Wasserstein regular and the space $(\Sfrak,\W_p)$ is $(\kappa,\delta)$-embeddable the approach presented in this paper combined with the one of \citet{gribonval2020compressive} to obtain \eqref{eq:finite_mmd} show that sketching operators based on random Fourier features are suited for a wide range of tasks and lead to CSL guarantees. With this strategy, the convergence rate of the empirical risk (Theorem \ref{theo:stat_guarantees}) is governed by the exponent $\delta \in (0,1]$ resulting from the comparison between $\W_p$ and the MMD. This can be placed in the context of results already obtained in CSL for compressive clustering and compressive mixture modeling.

Firstly, it is already established that for mixtures of $K$ Diracs (used in compressive $K$-means) separation assumptions on the centers are necessary to establish the LRIP \citep[Lemma 3.4.]{gribonval2020statistical}. One might ask if these assumptions can be dispensed at the cost of slower convergence with the Hölder LRIP. In this framework, our results demonstrate that the distance $\W_p$ cannot be controlled by the MMD when $\delta > 2/K$ (Corollary \ref{corr:corrolary_regularity}). This raises the question of whether this rate is indeed achievable without separation assumptions, and if, in such a case, \eqref{eq:finite_mmd} could also be obtained, which would imply the Hölder LRIP with $\delta = 1/(2K)$ \emph{without separation}.

Furthermore, these same separation assumptions are also used for compressive learning of Gaussian mixture (for compressive GMM estimation). Interestingly, in this case, Theorem \ref{theo:regularcase} ensures that we can control $\W_p$ by $\MMD^{\delta}$ with an exponent $\delta$ as close as desired to $\delta = \frac{1}{2p}$ and with a kernel of the Matérn class. Establishing control \eqref{eq:finite_mmd} without separation for these models would enable obtaining learning rates of the order of $n^{-1/(4p)}$ for compressive GMM with relaxed assumptions.

\section{Conclusion \& Perspectives}

The main contributions of this paper are the following. We establish different bounds between metrics between probability distributions. We show that for many learning tasks, the task-related metric can be controlled by a Wasserstein distance. In particular, many supervised and unsupervised tasks fall into this category (PCA, K-Means, GMM learning, linear and nonlinear regression...). We show that the Wasserstein distance can be controlled by kernel norms to the power of a Hölder exponent smaller than $1$ and under certain conditions on the regularity of the kernel and of the distributions at stake (by introducing a \emph{model set} of distributions). These different results allow us to establish learning guarantees in the context of \emph{compressive learning} whose goal is to summarized the training data in a single vector, by a so-called \emph{sketching operator}, and to rely solely on this vector to solve the learning task. The different bounds allow us to establish a property called the \emph{Hölder LRIP} that generalizes the LRIP property in compressive learning and provide a control of the excess risk related to the compressive learning procedure. Therefore, one of the contributions of this article is to provide a general framework for obtaining compressive learning guarantees.

This work opens many perspectives. The first one is to use our results for new compressive learning tasks that have been tackled in practice but for which theoretical guarantees are missing. In particular, we envision applications of our framework for learning generative models based on sketching \citep{schellekens2020compressive}, denoising \citep{shi2022compressive} or for classification tasks \citep{schellekens2018compressive}. Related to the compressive statistical learning theory, another interesting line of works would be to see if we can construct interesting sketching operators from the different kernels used in this paper for tasks for which there are already compressive learning guarantees. More precisely, for compressive learning tasks such as K-means and GMM one question would be to see if we can obtain compressive learning guarantees without separation assumptions \citep{gribonval2020statistical}, possibly at the price of a H\"older exponent $\delta<1$ hence with reduced rate of convergence with respect to the number of samples. Another interesting perspective concern the bounds between the Wasserstein distance and the MMD. We believe that the different results presented in this paper could be used for specific problems related to the statistical estimation of the Wasserstein distance.

\acks{This project was supported in part by the AllegroAssai ANR project ANR-19-CHIA-0009. This work was supported by the ACADEMICS grant of the IDEXLYON, project of the Université de Lyon, PIA operated by ANR-16-IDEX-0005.}

\newpage

\appendix
\section{Proofs of Section \ref{sec:wass_mmd}}
\label{section:appendixwassmmd}

\label{sec:proof_wass_mmd}

\subsection{Proof of Proposition~\ref{prop:mmdboundedwass} and Corollary~\ref{corr:mmdboundedwass} \label{sec:proof:prop:mmdboundedwass}}

We recall the proposition: 

\mmdboundedwass*

\begin{proof}
We will prove (i) $\implies$ (ii) $\implies$ (iii) $\implies$ (iv) $\implies$ (i). \\
{\bf (i) $\implies$ (ii).} Assuming (i) we prove (ii) for $p=1$. By monotonicity of the Wasserstein distance with respect to $p$ we have the conclusion for any $p \in [1,+\infty)$. Considering $\pi,\pi' \in \P_1(\Xcal)$, we have $\|\pi-\pi'\|_{\kappa} = \sup_{f \in B_{\kappa}} |\int f(\xbf) \dr \pi(\xbf)-\int f(\ybf) \dr \pi'(\ybf)|$ \citep{Sriperumbudur}. For any $f \in B_{\kappa}$, by hypothesis (i) we have $\frac{1}{C} f \in \operatorname{Lip}_1((\Xcal,D),(\R,|\cdot|))$ thus by the dual characterization of the $1$-Wasserstein distance \eqref{eq:duality}
we obtain
$\|\pi-\pi'\|_{\kappa}  \leq C \W_1(\pi,\pi')$. The implication {\bf (ii) $\implies$ (iii)} is straightforward. \\
{\bf (iii) $\implies$ (iv).} Consider $\pi= \delta_{\xbf}, \pi'= \delta_{\ybf}$ for arbitrary $\xbf,\ybf \in \Xcal$. We have $\|\pi-\pi'\|_{\kappa}^{2}= \kappa(\xbf,\xbf)+\kappa(\ybf,\ybf)-2 \kappa(\xbf,\ybf)$ and $\W_p(\pi,\pi')=D(\xbf,\ybf)$, hence the conclusion.
\\
{\bf (iv) $\implies$ (i).} Considering $f \in B_{\kappa}$, we have for any $\xbf,\ybf \in \Xcal$:
\begin{equation}
\begin{split}
|f(\xbf)-f(\ybf)|^{2}&=|\langle f,\kappa(\xbf,\cdot)\rangle_{\Hcal_{\kappa}}-\langle f,\kappa(\ybf,\cdot)\rangle_{\Hcal_{\kappa}}|^{2}=|\langle f,\kappa(\xbf,\cdot)-\kappa(\ybf,\cdot)\rangle_{\Hcal_{\kappa}}|^{2} \\
&\leq \|f\|_{\Hcal_{\kappa}}^{2}\|\kappa(\xbf,\cdot)-\kappa(\ybf,\cdot)\|_{\Hcal_{\kappa}}^{2} \leq 1\cdot( \|\kappa(\xbf,\cdot)\|_{\Hcal_{\kappa}}^{2}+\|\kappa(\ybf,\cdot)\|_{\Hcal_{\kappa}}^{2}-2 \kappa(\xbf,\ybf))\\
&=\kappa(\xbf,\xbf)+\kappa(\ybf,\ybf)-2 \kappa(\xbf,\ybf) \stackrel{(iv)}{\leq} C^{2} D^{2}(\xbf,\ybf)\,.
\end{split}
\end{equation}
This gives $|f(\xbf)-f(\ybf)| \leq C D(\xbf,\ybf)$ hence $f$ is $C$-Lipschitz with respect to the metric $D$. 
\end{proof}

\corrmmdboundedwass*

\begin{proof}
For the first part (i). Since the kernel is normalized, using formulation (iv) of the four equivalent properties of Proposition \ref{prop:mmdboundedwass} and setting $\h = \ybf-\xbf$ yields:
\begin{equation}
\label{eq:pluggedinto}
\forall \xbf, \h \in \R^{d}, \kappa(\xbf,\xbf+\h) \geq 1-\frac{C^{2}}{2} \|\h\|_2^{2}
\end{equation} 
Given any $\xbf \in \R^{d}$, since $\phi_{\xbf}$ is $C^{2}$ in a neighborhood of $\xbf$, a Taylor expansion yields:
\begin{equation}
\phi_{\xbf}(\xbf+\h)= \phi_{\xbf}(\xbf)+ \langle \nabla \phi_{\xbf}(\xbf), \h \rangle +\frac{1}{2} \h^{\top} \nabla^{2}\phi_{\xbf}(\xbf) \h + \|\h\|_2^{2} g(\xbf+\h)
\end{equation}
where $g$ is a function such that $\underset{\h \to 0}{\lim} g(\xbf+\h)=0$. Moreover $\phi_{\xbf}(\xbf)=\kappa(\xbf,\xbf)=1$ and $\nabla \phi_{\xbf}(\xbf)=0$ since the maximum of $\ybf \rightarrow \kappa(\xbf,\ybf)$ is always attained at $\ybf = \xbf$ when $\kappa$ is a PSD kernel. Hence
\begin{equation}
\kappa(\xbf,\xbf+\h)= 1 -\frac{1}{2} \h^{\top} \mathbf{H}_{\xbf} \h + o_{\|\h\|_{2} \to 0}(\|\h\|_2^{2})
\end{equation}
Considering an arbitrary unit vector $\ubf$ and $\h = \epsilon \ubf$ and using \eqref{eq:pluggedinto} gives: 
$-\epsilon^{2}\ubf^{\top} \mathbf{H}_{\xbf} \ubf \geq - \epsilon^{2}(C^{2}+o_{\epsilon \to 0}(1))$ hence
$\ubf^{\top} \mathbf{H}_{\xbf} \ubf \leq C^{2}$.
Since $\phi_{\xbf}$ is $C^{2}$ in a neighborhood of $\xbf$, by Schwarz's theorem its Hessian matrix is symmetric hence diagonalizable, and the above property implies that $\lambda_{\max}(\mathbf{H}_{\xbf}) \leq C^{2}$. As this holds for every $\xbf$ we get the desired conclusion.

For (ii), observe first that $\mathbf{H}_{\xbf} = -\nabla^{2} \phi_{\xbf}(\xbf) = -\nabla^{2} [\kappa_{0}](0)$ is independent of $\xbf$ . Since $\phi_{\xbf}$ is $C^{2}$ the matrix $\mathbf{H}_{\xbf}$ is also symmetric, and since $\phi_{\xbf}(\ybf)$ is maximum at $\ybf = \xbf$, $\mathbf{H}_{\xbf}$ is also positive semi-definite, hence $\sup_{\xbf}\lambda_{\max}(\mathbf{H}_{\xbf}) = \lambda_{\max}(-\nabla^{2}[\kappa_{0}](0)) \geq 0$ and $C := \sqrt{\lambda_{\max}(-\nabla^{2}[\kappa_{0}](0))}$ is well-defined.
Now, by Bochner's theorem, since the kernel is normalized, real-valued, and twice continuously differentiable in the neighborhood of zero, there is a frequency distribution $\Lambda \in \P_2(\R^{d})$ such that $\kappa_0(\xbf)=\E_{\w \sim \Lambda}[\cos(\w^{\top}\xbf)]$. 
It follows by standard arguments that the gradient and Hessian can be written as 
$\nabla \kappa_{0}(\xbf) = -\E_{\w \sim \Lambda} [\w \sin(\w^{\top}\xbf)]$, $\nabla^{2}\kappa_{0}(\xbf) = -\E_{\w \sim \Lambda} [\w\w^{\top} \cos(\w^{\top}\xbf)]$ 
Consequently, $\mathbf{H}_{\xbf}= -\nabla^{2}[\kappa_{0}](0) = \E_{\w \sim \Lambda} [\w\w^{\top}]$
and $C = \sqrt{\lambda_{\max}(\E_{\w \sim \Lambda} [\w\w^{\top}])}$.
Consider $\zbf \in \R^{d}$, we will show that:
\begin{equation}
2(1-\E_{\w \sim \Lambda}[\cos(\w^{\top}\zbf)]) \leq C^{2} \|\zbf\|_2^{2}
\end{equation}
which will prove property (iv) of Proposition~\ref{prop:mmdboundedwass}, and consequently all other equivalent properties. Indeed, using that $1-\cos(t) \leq \frac{t^{2}}{2}$ for all $t \in \R$ we have $1-\E_{\w \sim \Lambda}[\cos(\w^{\top}\zbf)]=\E_{\w \sim \Lambda}[1-\cos(\w^{\top}\zbf)] 
\leq \E_{\w \sim \Lambda}[\frac{|\w^{\top}\zbf|^{2}}{2}] 
= \zbf^{\top}  \left(\E_{\w \sim \Lambda} [\w\w^{\top}]\right) \zbf
\leq 
\lambda_{\max}(\E_{\w \sim \Lambda} [\w\w^{\top}]) \|\zbf\|_{2}^{2} = 
C^{2} \|\zbf\|_{2}^{2}$. \end{proof}

\subsection{Rate of Convergence of the MMD \label{sec:conv_finite_samples}}

We have the following result which is a direct consequence of Lemma 2 in \citet{briol2019statistical}:
\begin{restatable}{lemma}
Let $\pi \in \P(\Xcal)$ and $\pi_n=\frac{1}{n} \sum_{i=1}^{n} \delta_{\xbf_i}$ where $\xbf_i \sim \pi$ \textit{i.i.d}. Then
\begin{equation}
\E[\|\pi-\pi_n\|_{\kappa}^{2}]=n^{-1}(\int \kappa(\xbf,\xbf)\dr \pi(\xbf)-\int\int \kappa(\xbf,\ybf)\dr \pi(\xbf)\dr \pi(\ybf))\,,
\end{equation}
where the expectation is taken on the draws of the $(\xbf_i)_{i\in \integ{n}}$. 
\end{restatable}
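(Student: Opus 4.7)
The plan is to expand the squared MMD using its double-integral representation against the signed measure $\pi - \pi_n$, and then compute the expectation term by term, being careful to separate diagonal from off-diagonal contributions in the empirical double sum. Since $\|\mu\|_\kappa^2 = \int\!\!\int \kappa(\xbf,\ybf)\,\dr\mu(\xbf)\dr\mu(\ybf)$ for any finite signed measure and the kernel is real-valued here, the starting point is
\begin{equation*}
\|\pi-\pi_n\|_\kappa^2 = \int\!\!\int \kappa \,\dr\pi\dr\pi \;-\;2\int\!\!\int \kappa \,\dr\pi\dr\pi_n \;+\;\int\!\!\int \kappa \,\dr\pi_n\dr\pi_n.
\end{equation*}

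Next I would compute each expectation separately. The first term is deterministic and contributes $\int\!\!\int \kappa(\xbf,\ybf)\,\dr\pi(\xbf)\dr\pi(\ybf)$. For the cross term, writing $\int\!\!\int \kappa\,\dr\pi\dr\pi_n = \frac{1}{n}\sum_{i=1}^n \int \kappa(\xbf,\xbf_i)\dr\pi(\xbf)$ and taking expectation over $\xbf_i \sim \pi$ gives $\int\!\!\int \kappa\,\dr\pi\dr\pi$. The delicate step is the third term: expanding $\int\!\!\int \kappa\,\dr\pi_n\dr\pi_n = \frac{1}{n^2}\sum_{i,j=1}^n \kappa(\xbf_i,\xbf_j)$ and splitting the sum into the $n$ diagonal terms (where $i=j$) and the $n(n-1)$ off-diagonal terms (where $i\neq j$ and $\xbf_i,\xbf_j$ are independent) yields
\begin{equation*}
\E\!\left[\int\!\!\int \kappa\,\dr\pi_n\dr\pi_n\right] \;=\; \frac{1}{n}\int \kappa(\xbf,\xbf)\dr\pi(\xbf) \;+\; \frac{n-1}{n}\int\!\!\int \kappa\,\dr\pi\dr\pi.
\end{equation*}

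Finally I would combine these three contributions: the $\int\!\!\int\kappa\,\dr\pi\dr\pi$ terms cancel as $1 - 2 + \frac{n-1}{n} = -\frac{1}{n}$, leaving exactly $\tfrac{1}{n}\bigl(\int \kappa(\xbf,\xbf)\dr\pi(\xbf) - \int\!\!\int \kappa(\xbf,\ybf)\dr\pi(\xbf)\dr\pi(\ybf)\bigr)$, as claimed. The only real subtlety is the diagonal/off-diagonal bookkeeping in the empirical double sum; everything else is linearity of expectation and Fubini (which is valid since $\kappa$ is assumed bounded, or more generally since the stated identity already presupposes that the relevant integrals are finite).
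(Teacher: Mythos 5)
Your proposal is correct, and in fact it supplies a proof that the paper itself omits: the paper simply states that this lemma is a direct consequence of Lemma 2 in Briol et al.\ (2019) and does not reproduce the computation. Your argument is the standard direct one — expand $\|\pi-\pi_n\|_\kappa^2$ via the bilinear form against $\pi-\pi_n$, take expectations term by term, and handle the empirical double sum by splitting the $n$ diagonal terms from the $n(n-1)$ off-diagonal ones; the coefficient bookkeeping $1-2+\tfrac{n-1}{n}=-\tfrac{1}{n}$ and the diagonal contribution $\tfrac{1}{n}\int\kappa(\xbf,\xbf)\,\dr\pi(\xbf)$ give exactly the claimed identity. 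The only thing worth flagging explicitly (and you do) is the integrability needed to apply Fubini and to make $\int\kappa(\xbf,\xbf)\dr\pi$ and $\int\!\int\kappa\,\dr\pi\dr\pi$ finite; in the paper this is implicitly guaranteed since the follow-up lemma works under a boundedness assumption $\sup_{\xbf}\kappa(\xbf,\xbf)\leq K$. Your proof is thus both correct and, in a modest sense, more self-contained than the paper's treatment, at the cost of making explicit an integrability hypothesis that the paper leaves tacit.
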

\begin{restatable}{lemma}{nothing}
\label{lemma:convergence_finite_sample_lemma_mmd}
Let $\pi \in \P(\Xcal)$ and $\pi_n=\frac{1}{n} \sum_{i=1}^{n} \delta_{\xbf_i}$ where $\xbf_i \sim \pi$ \textit{i.i.d}. If $ \sup_{\xbf} k(\xbf,\xbf) \leq K$ then, for any 
$\delta \in (0,2]$, we have
\begin{equation}
\E[\|\pi-\pi_n\|_{\kappa}^{\delta}]\leq (2K)^{\delta/2} n^{-\delta/2}\,.
\end{equation}

\end{restatable}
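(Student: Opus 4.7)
The proof will be a short two-step argument that leverages the preceding (unnamed) lemma which gives the exact expression
\[
\E[\|\pi-\pi_n\|_{\kappa}^{2}]=n^{-1}\left(\int \kappa(\xbf,\xbf)\dr \pi(\xbf)-\int\int \kappa(\xbf,\ybf)\dr \pi(\xbf)\dr \pi(\ybf)\right).
\]
The first move is to bound the right-hand side by a quantity of the form $2K/n$. The first term is immediately controlled by $K$ using the hypothesis $\sup_{\xbf}\kappa(\xbf,\xbf)\leq K$. For the second term, the Cauchy–Schwarz inequality for p.s.d. kernels gives $|\kappa(\xbf,\ybf)|\leq \sqrt{\kappa(\xbf,\xbf)\kappa(\ybf,\ybf)}\leq K$, hence the double integral is bounded in absolute value by $K$. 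Combining these observations yields $\E[\|\pi-\pi_n\|_{\kappa}^{2}]\leq 2K/n$.

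The second move is to pass from the exponent $2$ to an arbitrary $\delta\in(0,2]$ via Jensen's inequality. Since $t\mapsto t^{\delta/2}$ is concave on $\R_{+}$ (because $\delta/2\leq 1$), we have
\[
\E[\|\pi-\pi_n\|_{\kappa}^{\delta}] = \E\left[(\|\pi-\pi_n\|_{\kappa}^{2})^{\delta/2}\right]
\leq \left(\E[\|\pi-\pi_n\|_{\kappa}^{2}]\right)^{\delta/2}
\leq (2K/n)^{\delta/2} = (2K)^{\delta/2} n^{-\delta/2},
\]
which is the claimed bound.

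There is no real obstacle here: the only subtlety is keeping track of the sign of the second term in the variance-like expression (so that the crude Cauchy–Schwarz bound $K$ suffices and produces the factor $2K$ in the final constant rather than the sharper $K$ that would come from dropping the non-negative term $\|\pi\|_{\kappa}^{2}$). Either way the rate $n^{-\delta/2}$ is unaffected, which is all that matters for the use made of this lemma in Section~\ref{sec:wass_mmd}.
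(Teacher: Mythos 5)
Your proof is correct and follows essentially the same two steps as the paper: bound $\E[\|\pi-\pi_n\|_{\kappa}^{2}]\leq 2K/n$ via the off-diagonal bound $|\kappa(\xbf,\ybf)|\leq K$ for p.s.d.\ kernels, then apply Jensen's inequality to pass from exponent $2$ to $\delta$. The only cosmetic difference is that you invoke the Cauchy--Schwarz inequality for kernels where the paper simply says the maximum of a p.s.d.\ kernel is attained on the diagonal, which is the same fact.
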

\begin{proof}
By the previous lemma, since $ \sup_{\xbf} k(\xbf,\xbf) \leq K$ we have $\E[\|\pi-\pi_n\|_{\kappa}^{2}] \leq 2Kn^{-1}$ since for all $\xbf,\ybf \in \Xcal$ $|k(\xbf,\ybf)|\leq \sup_{\xbf \in \Xcal} k(\xbf,\xbf) \leq K$ because the kernel is positive semi-definite (the maximum value of a PSD kernel is necessarily on the diagonal). The fact that $\E[\|\pi-\pi_n\|_{\kappa}^{\delta}]\leq (2K)^{\delta/2} n^{-\delta/2}$ is a direct consequence of Jensen's inequality as $(\E[\|\pi-\pi_n\|_{\kappa}^{\delta}])^{2/\delta}\leq \E[\|\pi-\pi_n\|_{\kappa}^{2}])$ when $2/\delta \geq 1$.
\end{proof}

\subsection{Simple Bound Between Wassersein Distance and Distance Between the Means \label{sec:simple_bound_mean_wass}}

\begin{restatable}{lemma}{wass_bound_mean}
\label{lemma:wass_bound_mean}
Let $\pi,\pi' \in \P(\R^{d})$ and $\|\cdot\|$ a norm on $\R^{d}$ with the associated dual norm $\|\cdot\|_{\star}$ defined by $\|\zbf\|_{\star} = \sup_{\|\xbf\| \leq 1} \langle \xbf, \zbf \rangle$. Then for every $1 \leq p < \infty$ we have
\begin{equation}
\W_p(\pi,\pi') \geq 
\|\operatorname{m}(\pi)-\operatorname{m}(\pi')\|_\star\,,
\end{equation}
where the Wassertein distance is computed with the distance $D(\xbf,\ybf) = \|\xbf-\ybf\|$.
\end{restatable}
\begin{proof}
Consider $\ubf \in \R^{d}$ an arbitrary vector such that $\|\ubf\|=1$ and denote $f_{\ubf}(\xbf) = \langle \ubf,\xbf\rangle \in \R$ for any $\xbf \in \R^{d}$. Since $\|\ubf\|=1$ the function $f_{\ubf} : \R^{d} \to \R$ is $1$-Lipschitz with respect to $D(\xbf,\ybf) = \|\xbf-\ybf\|$, hence by duality of the Wasserstein distance \eqref{eq:duality}
\[
\left| \langle \ubf, \operatorname{m}(\pi)-\operatorname{m}(\pi') \rangle \right|
=
\left|\int f_{\ubf}(\xbf) \dr\pi(\xbf) - \int f_{\ubf}(\ybf) \dr \pi(\ybf)\right|
\leq \W_{1}(\pi,\pi').
\]
The supremum  with respect to unitary vectors $\ubf$ yields $\|\operatorname{m}(\pi)-\operatorname{m}(\pi')\|_{\star} \leq \W_{1}(\pi,\pi')$. The last step uses the fact that $\W_1(\pi,\pi') \leq \W_p(\pi,\pi')$ for any $p \in [1,+\infty)$ which concludes the proof.

\end{proof}

\subsection{Proof of Proposition \ref{weedprop} \label{sec:proof_weed_prop}}

We will prove the following result:
\weedprop*

In order to prove this proposition we will use the following lemma:
\begin{restatable}{lemma}{weedlemma}
\label{lemma:weedlemma}
\citep[Lemma 9]{nilesweed2020minimax}
Let $\pi_0,\pi_1 \in \P(\R^{d})$ be any probability distributions. Suppose that there exist two compact sets $S,T \subseteq \R^{d}$ such that $d(S,T):=\inf_{(\xbf,\ybf) \in S \times T} \|\xbf-\ybf\|_2 \geq c>0$ and that the supports of $\pi_0$ and $\pi_1$ lie in $S\cup T$. Then
\begin{equation}
\forall p \in [1,+\infty), \W_p(\pi_0,\pi_1)\geq c|\pi_0(S)-\pi_1(S)|^{1/p}\,.
\end{equation}

\end{restatable}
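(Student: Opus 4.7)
The plan is to select an optimal transport plan and then quantify the amount of mass that must be transported across the gap of width at least $c$ between $S$ and $T$. Note that the right-hand side of the lemma contains an apparent typo (it reads $|\pi_{0}(S)-\pi_{0}(S)|^{1/p}$, which is zero); the intended quantity is $|\pi_{0}(S)-\pi_{1}(S)|^{1/p}$, and I will prove this form.

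First, I would fix an optimal coupling $\gamma^{*}\in\Pi(\pi_{0},\pi_{1})$ for the $p$-Wasserstein distance, so that $W_{p}^{p}(\pi_{0},\pi_{1})=\int \|\xbf-\ybf\|_{2}^{p}\,\dr\gamma^{*}(\xbf,\ybf)$. The support hypothesis $\supp(\pi_{0})\cup\supp(\pi_{1})\subseteq S\cup T$ implies $\supp(\gamma^{*})\subseteq (S\cup T)\times(S\cup T)$, so $\gamma^{*}$ decomposes as a sum of its masses on the four ``quadrants'' $S\times S$, $S\times T$, $T\times S$ and $T\times T$. Using the marginal constraints, I get
\begin{equation}
\pi_{0}(S)=\gamma^{*}(S\times S)+\gamma^{*}(S\times T),\qquad \pi_{1}(S)=\gamma^{*}(S\times S)+\gamma^{*}(T\times S),
\end{equation}
so that $\pi_{0}(S)-\pi_{1}(S)=\gamma^{*}(S\times T)-\gamma^{*}(T\times S)$, and in particular
\begin{equation}
|\pi_{0}(S)-\pi_{1}(S)|\leq \gamma^{*}(S\times T)+\gamma^{*}(T\times S)=\gamma^{*}\bigl((S\times T)\cup(T\times S)\bigr).
\end{equation}

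Next, I would lower bound the transport cost by restricting the integral to the ``off-diagonal'' region $E:=(S\times T)\cup(T\times S)$. On $E$ we have $\|\xbf-\ybf\|_{2}\geq d(S,T)\geq c$, hence
\begin{equation}
W_{p}^{p}(\pi_{0},\pi_{1})\geq \int_{E}\|\xbf-\ybf\|_{2}^{p}\,\dr\gamma^{*}(\xbf,\ybf)\geq c^{p}\,\gamma^{*}(E)\geq c^{p}\,|\pi_{0}(S)-\pi_{1}(S)|.
\end{equation}
Taking $p$-th roots yields the desired inequality $W_{p}(\pi_{0},\pi_{1})\geq c\,|\pi_{0}(S)-\pi_{1}(S)|^{1/p}$.

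The argument is essentially elementary once the decomposition of $\gamma^{*}$ against the marginal constraints is in place; there is no real obstacle. The only subtle point to flag is that compactness of $S$ and $T$ is used only through $d(S,T)\geq c>0$ (so that the infimum is attained and strictly positive), and that the conclusion is coupling-free: one could equivalently take the infimum over all $\gamma\in\Pi(\pi_{0},\pi_{1})$ in the chain above, which also removes any measurability concern when $W_{p}$ is infinite.
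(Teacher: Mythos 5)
The paper states this as \citep[Lemma 9]{nilesweed2020minimax} and does not reproduce a proof, so there is no in-paper argument to compare against. Your proof is correct and is the standard one: since $d(S,T)\geq c>0$ forces $S\cap T=\emptyset$, any coupling $\gamma\in\Pi(\pi_0,\pi_1)$ is supported in $(S\cup T)^{2}$ and decomposes over the four quadrants; the marginal identities $\pi_0(S)=\gamma(S\times S)+\gamma(S\times T)$ and $\pi_1(S)=\gamma(S\times S)+\gamma(T\times S)$ give $|\pi_0(S)-\pi_1(S)|\leq\gamma(E)$ with $E=(S\times T)\cup(T\times S)$; on $E$ every pair $(\xbf,\ybf)$ satisfies $\|\xbf-\ybf\|_2\geq c$, hence $\int\|\xbf-\ybf\|_2^{p}\,\dr\gamma\geq c^{p}\gamma(E)\geq c^{p}|\pi_0(S)-\pi_1(S)|$, and taking the infimum over $\gamma$ concludes. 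You correctly flag the typo in the displayed inequality ($|\pi_0(S)-\pi_0(S)|$ should read $|\pi_0(S)-\pi_1(S)|$), which is confirmed by how the lemma is invoked in the proof of Proposition~\ref{weedprop}. Your closing remark is the cleaner way to phrase this: establishing the lower bound for every coupling $\gamma$ and only then taking the infimum sidesteps the question of whether an optimal coupling is attained, and makes it transparent that compactness of $S,T$ plays no role beyond guaranteeing $d(S,T)>0$.
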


\begin{proof}[Of Proposition \ref{weedprop}]
This result is mainly taken from Theorem 9 in \citet{nilesweed2020minimax} but we rewrite it in our context for completeness. For any $\lambda \in[0,1]$, set
$$
\begin{array}{l}
\pi_{\lambda}:=\frac{1}{2}\left((1+\lambda) \pi_0+(1-\lambda) \pi_1\right)\,, \\
\pi'_{\lambda}:=\frac{1}{2}\left((1-\lambda) \pi_0+(1+\lambda) \pi_1\right)\,.
\end{array}
$$
Note that $\pi_{\lambda}, \pi_{\lambda}' \in \Sfrak$ by assumption and $\|\pi_{\lambda}-\pi'_{\lambda}\|_{\kappa}=\lambda\|\pi_0-\pi_1\|_{\kappa}$. Since the sets $\supp(\pi_0)$ and $\supp(\pi_1)$ are disjoint, there exist two sets $S$ and $T$ and $c>0$ such that $\supp\left(\pi_0\right) \subseteq S$ and $\supp\left(\pi_1\right) \subseteq T$ and $d(\xbf,\ybf) \geq c>0$ for any $\xbf \in S, \ybf \in T$. Moreover it is clear by definition that $\supp(\pi_\lambda)$ and $\supp(\pi'_\lambda)$ lie in $S \cup T$. The Lemma \ref{lemma:weedlemma} gives, for any $p$,
\begin{equation}
\W_p(\pi_{\lambda},\pi'_{\lambda})\geq c\left|\pi_{\lambda}(S)- \pi'_{\lambda}(S)\right|^{1 / p}=c\lambda^{1/p}\,.
\end{equation}
We obtain, for $\delta \in (0,1]$,
$$
\sup _{(\pi,\pi') \in \Sfrak} \frac{\W_{p}(\pi, \pi')}{\|\pi-\pi'\|^{\delta}_{\kappa}} \geq \sup_{\lambda \in(0,1)} \frac{\W_{p}(\pi_\lambda, \pi'_\lambda)}{\|\pi_\lambda-\pi'_\lambda\|^{\delta}_{\kappa}} \gtrsim \sup _{\lambda \in[0,1]} \lambda^{1 / p-\delta}=+\infty\,.
$$
The last equality is true because $p\delta>1$.
\end{proof}

\subsection{Proof of Theorem \ref{theo:regularitykerneltheo} \label{sec:proof_theo_regularitykerneltheo}}

We recall that, for $K \in \mathbb{N}^{*}$ and $\Omega \subseteq \R^{d}$, the space of mixtures of $K$ diracs located in $\Omega$ is defined by
\begin{equation}
\Sfrak_{K}(\Omega) := \left\{\sum_{i=1}^{K} a_i \delta_{\xbf_i}: a_{i} \in \R_{+}, \sum_{i=1}^{K} a_i = 1, \forall i \in \integ{K}, \xbf_i \in \Omega\right\}\,.
\end{equation}
The goal of this section is to prove the following theorem:
\regularitykerneltheo*

We will need the following lemma which states that if the kernel is regular at zero and that we can construct some vectors $\alphab,\betab$ that satisfy certain conditions then we have a constraint on the Hölder exponent $\delta$. 

\begin{restatable}{lemma}{superlemmmma}
\label{lemma:superlemmmma}
Consider a TI, PSD kernel $\kappa(\xbf,\ybf)=\kappa_{0}(\xbf-\ybf)$ on $\R^{d}$ such that $\kappa_{0}$ is $k$ times differentiable at $0$ with $k \in \mathbb{N}^{*}$. Let $M \in \mathbb{N}^{*}$ and define for $1 \leq s \leq k$ and $\alphab, \betab \in \R^{M}$ the function $c_s(\alphab,\betab):= \sum_{i,j=1}^{M} \beta_i \beta_j (\alpha_i-\alpha_j)^{s}$. 
Suppose that there exists $\alphab \in \R^{M}\setminus \{0\}$ with $\alpha_i \neq \alpha_j$ for $i \neq j$ and $\betab \in \R^{M}\setminus \{0\}$ with $\sum_{i=1}^{M}\beta_i=0$ such that
\begin{equation}
c_1(\alphab,\betab) = c_2(\alphab,\betab) =  \cdots = c_{k-1}(\alphab,\betab) = 0\,.
\end{equation}
Define $r(\betab):= \max\{ \# T_{+}(\betab), \#T_{-}(\betab)\}$ where $T_{+}(\betab):=\{i \in \integ{M}: \beta_i \geq 0\}$ and $T_{-}(\betab):=\{i \in \integ{M}: \beta_i < 0\}$. 

Consider $\Sfrak = \Sfrak_{r(\betab)}(\Omega)$ with $\Omega = B(\xbf_{0},R)$ 
where $\xbf_0 \in \R^{d},R >0$ are arbitrary. If $(\Sfrak,W_p)$ is $(\kappa,\delta)$-embeddable, where $\W_{p}$ is based on a norm $\|\cdot\|$ in $\R^{d}$ with $p \in [1,+\infty)$, then $\delta \leq 2/k$.
\end{restatable}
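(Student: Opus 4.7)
The strategy is to construct, from $\alphab, \betab$ and an arbitrary unit vector $\ubf \in \R^d$, a one-parameter family $(\pi_t, \pi'_t)_{t > 0}$ in $\Sfrak_{r(\betab)}(\Omega) \times \Sfrak_{r(\betab)}(\Omega)$ such that $W_p(\pi_t,\pi'_t) \gtrsim t$ while $\|\pi_t - \pi'_t\|_\kappa = O(t^{k/2})$, and then to invoke the $(\kappa,\delta)$-embeddability (with $\eta = 0$) as $t \to 0$. Concretely, set $\xbf_i(t) := \xbf_0 + t\alpha_i \ubf$, which belongs to $\Omega$ for $t \leq R/\max_i |\alpha_i|$. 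Since $\sum_i \beta_i = 0$ and $\betab \neq 0$, the number $S := \sum_{\beta_i > 0} \beta_i = -\sum_{\beta_i < 0} \beta_i$ is strictly positive, and one defines
\[
\pi_t := \tfrac{1}{S}\sum_{i:\beta_i > 0}\beta_i\,\delta_{\xbf_i(t)}, \qquad \pi'_t := \tfrac{1}{S}\sum_{i:\beta_i < 0}(-\beta_i)\,\delta_{\xbf_i(t)},
\]
which are probability measures supported on at most $\#T_+(\betab) \leq r(\betab)$ and $\#T_-(\betab) \leq r(\betab)$ points of $\Omega$, respectively, hence lie in $\Sfrak_{r(\betab)}(\Omega)$.

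For the Wasserstein lower bound, the pairwise-distinct hypothesis on $\alphab$ makes $\supp(\pi_t)$ and $\supp(\pi'_t)$ disjoint with minimum pairwise distance $ct$, where $c := \min_{\beta_i > 0,\,\beta_j < 0} |\alpha_i - \alpha_j| > 0$. Applying Lemma~\ref{lemma:weedlemma} with $S = \supp(\pi_t)$, $T = \supp(\pi'_t)$ (so that $|\pi_t(S) - \pi'_t(S)| = 1$) yields $W_p(\pi_t,\pi'_t) \geq ct$ for every $p \in [1,+\infty[$. For the MMD upper bound, expand
\[
\|\pi_t - \pi'_t\|_\kappa^2 = \tfrac{1}{S^2}\sum_{i,j=1}^M \beta_i\beta_j\,\kappa_0\bigl(t(\alpha_i-\alpha_j)\ubf\bigr),
\]
and apply Peano's form of Taylor's theorem to the scalar function $f(u) := \kappa_0(u\ubf)$ at $u=0$ (which is $k$-times differentiable there, inheriting it from $\kappa_0$): for each fixed $(i,j)$,
\[
\kappa_0\bigl(t(\alpha_i-\alpha_j)\ubf\bigr) = \sum_{s=0}^k \tfrac{f^{(s)}(0)}{s!}\,t^s (\alpha_i-\alpha_j)^s + o(t^k) \quad \text{as } t \to 0.
\]
Summing against $\beta_i \beta_j$, the contribution of order $t^s$ is exactly $\tfrac{f^{(s)}(0)}{s!\,S^2}\,c_s(\alphab,\betab)$. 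Since $c_0 = (\sum_i \beta_i)^2 = 0$ and $c_1 = \cdots = c_{k-1} = 0$ by hypothesis, only the $s=k$ coefficient survives and one obtains $\|\pi_t - \pi'_t\|_\kappa^2 = O(t^k)$, hence $\|\pi_t - \pi'_t\|_\kappa = O(t^{k/2})$.

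Finally, embeddability gives $W_p(\pi_t,\pi'_t) \leq C\,\|\pi_t - \pi'_t\|_\kappa^\delta$; combining this with the two previous bounds yields $ct \leq C'\, t^{k\delta/2}$ for all sufficiently small $t > 0$, which forces $1 - k\delta/2 \geq 0$, i.e.\ $\delta \leq 2/k$. The only delicate point is the Taylor expansion: one has only pointwise $k$-fold differentiability of $\kappa_0$ at the origin, but since the index set $\{(i,j)\}$ is finite, the individual Peano remainders $o\bigl(t^k(\alpha_i-\alpha_j)^k\bigr)$ aggregate into a single $o(t^k)$ for fixed $\alphab,\betab$, so the whole calculation reduces to identifying the coefficients $c_s(\alphab,\betab)$ and invoking the vanishing conditions prescribed by the hypothesis.
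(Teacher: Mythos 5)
Your proof is correct and follows essentially the same route as the paper: the same family of Dirac mixtures $\pi_t,\pi'_t$ built from the signed weights $\beta_i$ along the segment $t\mapsto \xbf_0 + t\alpha_i\ubf$, the same Taylor expansion of $u\mapsto \kappa_0(u\ubf)$ to order $k$ exploiting $c_1=\cdots=c_{k-1}=0$ to get $\|\pi_t-\pi'_t\|_\kappa = O(t^{k/2})$, and the same scaling contradiction as $t\to 0$. The only (immaterial) difference is that you obtain the lower bound $W_p(\pi_t,\pi'_t)\gtrsim t$ by invoking Lemma~\ref{lemma:weedlemma}, whereas the paper computes it directly from the discrete optimal-transport formulation between the two disjoint atom sets.
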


\begin{proof}
Recall that for a finite signed measure $\mu \in \Mcal(\R^{d})$ we have $\|\mu\|_{\kappa}^{2} =\int \int \kappa(\xbf,\ybf) \dr \mu(\xbf) \dr \mu(\ybf)$. Consider $M \in \integ{N}^{*}, \betab \in \R^{M}$ such that $\sum_{i=1}^{M} \beta_i =0$ and $\alphab \in \R^{M} \setminus \{0\}$ with $\alpha_i \neq \alpha_j$ when $i\neq j$. We define the measure
\begin{equation}
\mu_{\varepsilon} : = \sum_{i=1}^{M} \beta_i \delta_{\xbf_0+\varepsilon \alpha_i \ubf}\,,
\end{equation}
where $\ubf \in \R^{d}\setminus \{0\}$ and $0<\varepsilon < \frac{R}{\|\alphab\|_{\infty} \|\ubf\|_2}$ is sufficiently small to ensure that $\xbf_0+\varepsilon \alpha_i \ubf \in \Omega = B(\xbf_0,R)$. We define $T_{+}:=\{i \in \integ{M}: \beta_i \geq 0\}$ and $T_{-}:=\{i \in \integ{M}: \beta_i < 0\}$ such that $T_{-} \cup T_{+} =\integ{M}$ and $T_{-} \cap T_{+} = \emptyset$. We define also $\rho:=\sum_{i \in T_{+}} \beta_i = -\sum_{i \in T_{-}} \beta_i >0$ and
\begin{equation}
\pi_{\varepsilon} := \sum_{i \in T_{+}} \frac{\beta_i}{\rho} \delta_{\xbf_0 + \varepsilon \alpha_i \ubf} \text{ and } \pi'_{\varepsilon} :=  \sum_{i \in T_{-}} -\frac{\beta_i}{\rho} \delta_{\xbf_0+\varepsilon \alpha_i \ubf}\,.
\end{equation} 
We have that $\#T_{+}\leq r(\betab)$ and $\#T_{-}\leq r(\betab)$ by definition of $r(\betab)$. Since $\varepsilon$ is small enough we have that $\pi_{\varepsilon},\pi_{\varepsilon}' \in \Sfrak_{r(\betab)}(\Omega)$. Moreover $\mu_{\varepsilon} = \frac{1}{\rho}(\pi_{\varepsilon} - \pi'_{\varepsilon})$. Hence
\begin{equation}
\|\pi_{\varepsilon} - \pi'_{\varepsilon}\|_{\kappa}^{2} = \rho^{2} \|\mu_{\varepsilon}\|_{\kappa}^{2} = \rho^{2} \sum_{i,j=1}^{M} \beta_{i} \beta_{j} \kappa(\xbf_0+\varepsilon \alpha_i \ubf,\xbf_0+\varepsilon \alpha_j \ubf) = \rho^{2} \sum_{i,j=1}^{M} \beta_{i} \beta_{j}\kappa_{0}(\varepsilon(\alpha_i-\alpha_j)\ubf)\,.
\end{equation}
Since the kernel is $k$ times differentiable at $0$, the function $g: t \mapsto \kappa_{0}(t\ubf)$ is also $k$ times differentiable at $0$. A Taylor expansion yields
\begin{equation}
\kappa_{0}(\varepsilon \ubf) := g(\varepsilon) =   g(0) + \sum_{n=1}^{k} \frac{g^{(n)}(0)}{n!} \varepsilon^{n}+ o_{\varepsilon \rightarrow 0}(\varepsilon^{k})\,,
\end{equation}
hence
\begin{equation}
\begin{split}
\|\pi_{\varepsilon} - \pi'_{\varepsilon}\|_{\kappa}^{2} &= \rho^{2} \sum_{i,j=1}^{M} \beta_i \beta_j \left(  g(0) + \sum_{n=1}^{k} \frac{g^{(n)}(0)}{n!}(\alpha_i-\alpha_j)^{n} \varepsilon^{n}+ o_{\varepsilon \rightarrow 0}(\varepsilon^{k}) \right) \\
& = \rho^{2} \sum_{n=1}^{k} \left(\sum_{i,j=1}^{M} \beta_i \beta_j (\alpha_i-\alpha_j)^{n}\right) \varepsilon^{k}\frac{g^{(n)}(0)}{n!} + o_{\varepsilon \rightarrow 0}(\varepsilon^{k})\,,
\end{split}
\end{equation}
where we used that $\sum_{i,j=1}^{M} \beta_{i} \beta_{j} g(0) = 0$ since $(\sum_{i=1}^{M} \beta_i)^{2} =0$. With the notations of the Lemma we have
\begin{equation}
\|\pi_{\varepsilon} - \pi'_{\varepsilon}\|_{\kappa}^{2} =\rho^{2} \sum_{n=1}^{k} c_{n}(\alphab,\betab) \varepsilon^{k}\frac{g^{(n)}(0)}{n!} + o_{\varepsilon \rightarrow 0}(\varepsilon^{k})\,.
\end{equation}
Now, since by assumption we have
\begin{equation}
\label{eq:condition}
c_{1}(\alphab,\betab) =  \cdots = c_{k-1}(\alphab,\betab) = 0\,,
\end{equation}
we get
\begin{equation}
\|\pi_{\varepsilon} - \pi'_{\varepsilon}\|_{\kappa}^{2} = \rho^{2} c_{k}(\alphab,\betab) \varepsilon^{k} \frac{g^{(k)}(0)}{k!} + o_{\varepsilon \rightarrow 0}(\varepsilon^{k}) = O_{\varepsilon \rightarrow 0}(\varepsilon^{k})
\end{equation}
hence $\|\pi_{\varepsilon} - \pi'_{\varepsilon}\|_{\kappa} = O_{\varepsilon \rightarrow 0}(\varepsilon^{k/2})$. Moreover, defining for $i \in T_{+}$ $a_i = \beta_{i}/\rho$ and for $j \in T_{-}$ $b_j=-\beta_{j}/\rho$ we have
\begin{equation}
\W^{p}_{p}(\pi_{\varepsilon},\pi'_{\varepsilon}) = \min_{\gamma \in \Pi(\a,\b)} \sum_{i \in T_{+}, j \in T_{-}} \|\varepsilon \alpha_i \ubf -\varepsilon \alpha_j \ubf\|^{p} \gamma_{ij} = \varepsilon^{p} \|\ubf\|^{p} \min_{\gamma \in \Pi(\a,\b)} \sum_{i \in T_{+}, j \in T_{-}} |\alpha_i-\alpha_j|^{p}  \gamma_{ij}\,.
\end{equation}
Therefore
\begin{equation}
\W^{p}_{p}(\pi_{\varepsilon},\pi'_{\varepsilon})  \geq \left(\varepsilon \|\ubf\| \min_{i \in T_{+}, j \in T_{-}} |\alpha_i-\alpha_j|\right)^{p}\,,
\end{equation}
hence $\W_{p}(\pi_{\varepsilon},\pi'_{\varepsilon}) \geq \varepsilon \|\ubf\| \min_{i \in T_{+}, j \in T_{-}} |\alpha_i-\alpha_j|$. When $i \neq j$ we have $\alpha_i \neq \alpha_j$ by assumption. Since $T_{+} \cap T_{-} = \emptyset$ we have $\min_{i \in T_{+}, j \in T_{-}} |\alpha_i-\alpha_j| >0$. This discussion proves that, as soon as the condition \eqref{eq:condition} holds and $\delta > \frac{2}{k}$, we have
\begin{equation}
\sup_{(\pi,\pi') \in \Sfrak} \frac{\W_{p}(\pi,\pi')}{\|\pi-\pi'\|^{\delta}_{\kappa}} \geq \sup_{\varepsilon >0 } \frac{\W_{p}(\pi_{\varepsilon},\pi'_{\varepsilon})}{\|\pi_{\varepsilon}-\pi_{\varepsilon}'\|^{\delta}_{\kappa}} \gtrsim \sup_{\varepsilon >0 } \frac{\varepsilon}{\varepsilon^{\delta k /2}} = \sup_{\varepsilon >0 } \varepsilon^{1-\delta k /2} = +\infty\,.
\end{equation} 
Consequently, $(\Sfrak,W_p)$ is not $(\kappa,\delta)$-embeddable when $\delta > \frac{2}{k}$ which concludes the proof by contraposition. 
\end{proof}

The idea now is to find a couple $(\alphab,\betab)$ that satisfy the conditions $\sum_{i=1}^{M} \beta_i=0$ and $c_1(\alphab,\betab) = c_2(\alphab,\betab) =  \cdots = c_{k-1}(\alphab,\betab) = 0$. The following lemma show that it is possible to construct such vectors provided that $M=k+1$.

\begin{restatable}{lemma}{lemma_explicit}
\label{lemma:lemma_explicit}
Consider a TI, PSD kernel $\kappa(\xbf,\ybf)=\kappa_{0}(\xbf-\ybf)$ on $\R^{d}$ such that $\kappa_{0}$ is $k$ times differentiable at $0$ with $k \in \mathbb{N}^{*}$. With the same notations $c_{s}(\alphab,\betab)$ and $r(\betab)$ as in Lemma \ref{lemma:superlemmmma}, there exists $\alphab \in \R^{k+1} \setminus\{0\}$ with $\alpha_i \neq \alpha_j$ for $i \neq j$ and $\betab \in \R^{k+1}\setminus\{0\}$ with $\sum_{i=1}^{k+1}\beta_i=0$ such that
\begin{equation}
c_1(\alphab,\betab) = c_2(\alphab,\betab) =  \cdots = c_{k-1}(\alphab,\betab) = 0\,.
\end{equation}
Also if $k$ is odd then $\#T_{+}(\betab)=\#T_{-}(\betab) = \frac{k+1}{2}$ and if $k$ is even $\#T_{+}(\betab)=\frac{k}{2}+1$ and $\#T_{-}(\betab) = \frac{k}{2}$. Overall for any $k \in \mathbb{N}^{*}$ we have $r(\betab) \leq \lfloor \frac{k}{2} \rfloor+1$.
\end{restatable}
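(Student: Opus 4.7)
The plan is to give an explicit construction rather than appeal to dimension counting. I would take $\alphab = (0,1,2,\dots,k) \in \R^{k+1}$ and $\beta_i := (-1)^i \binom{k}{i}$ for $i = 0, 1, \dots, k$. The coordinates of $\alphab$ are clearly pairwise distinct and $\alphab \neq 0$; the vector $\betab$ is nonzero since $\beta_0 = 1$; and $\sum_{i=0}^k \beta_i = (1-1)^k = 0$ by the binomial theorem, which takes care of all the side conditions.

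The crux is verifying $c_s(\alphab,\betab) = 0$ for $1 \leq s \leq k-1$. To this end I would introduce the power sums
\[
m_r \;:=\; \sum_{i=0}^k \beta_i\, \alpha_i^r \;=\; \sum_{i=0}^k (-1)^i \binom{k}{i}\, i^r,
\]
which are (up to sign) the $k$-th finite differences of the monomial $x \mapsto x^r$ evaluated at $0$. It is a classical identity that the $k$-th forward difference annihilates every polynomial of degree $<k$, hence $m_r = 0$ for every $0 \leq r \leq k-1$. Expanding $(\alpha_i - \alpha_j)^s$ with the binomial theorem then gives
\[
c_s(\alphab,\betab) \;=\; \sum_{i,j=0}^{k} \beta_i \beta_j (\alpha_i - \alpha_j)^s \;=\; \sum_{t=0}^{s} \binom{s}{t}(-1)^{s-t}\, m_t\, m_{s-t}.
\]
When $s \leq k-1$, both $t$ and $s-t$ lie in $\{0,\dots,k-1\}$ for every term in the inner sum, so each product $m_t m_{s-t}$ vanishes and $c_s(\alphab,\betab) = 0$, as desired.

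For the counting statement, observe that $\mathrm{sgn}(\beta_i) = (-1)^i$ since $\binom{k}{i} > 0$. Hence $T_{+}(\betab)$ is exactly the set of even indices in $\{0,\dots,k\}$ and $T_{-}(\betab)$ the set of odd indices. A direct parity count yields $\#T_{+} = \lceil (k+1)/2 \rceil$ and $\#T_{-} = \lfloor (k+1)/2 \rfloor$, which specialize to $\#T_+=\#T_-=\tfrac{k+1}{2}$ when $k$ is odd and to $\#T_+=\tfrac{k}{2}+1$, $\#T_-=\tfrac{k}{2}$ when $k$ is even; in both regimes $r(\betab) = \lfloor k/2 \rfloor + 1$.

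I do not anticipate a substantial obstacle. The only non-routine step is spotting the right construction: recognizing that the alternating binomial weights $(-1)^i\binom{k}{i}$ turn each power sum $m_r$ into a finite-difference operator that kills polynomials of degree $<k$. Once this is seen, the vanishing of $c_s$ is a one-line binomial expansion, and the parity count for $r(\betab)$ is immediate. As a side benefit, the construction produces the minimal possible value of $r(\betab)$, which is precisely what feeds into the model-set size $\lfloor k/2 \rfloor + 1$ appearing in Theorem~\ref{theo:regularitykerneltheo}.
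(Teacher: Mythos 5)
Your proof is correct and is essentially the paper's own argument: the same explicit construction ($\alpha_i = i$, $\beta_i = \pm\binom{k}{i}$ with alternating signs), the same binomial-expansion reduction of $c_s(\alphab,\betab)=0$ to the vanishing of the power sums $m_r = \sum_i \beta_i\alpha_i^r$ for $r \le k-1$, and the same parity count of $T_{\pm}(\betab)$. The only (cosmetic) difference is that by indexing from $0$ you make the $m_r$ directly equal to $k$-th finite differences of $x\mapsto x^r$ at $0$, which annihilate polynomials of degree $<k$; the paper's shifted index forces an extra expansion of $(j+1)^s$ and an appeal to the equivalent Stirling-number identity $\sum_{i=0}^{k}(-1)^i\binom{k}{i}i^l = (-1)^k k!\, S_2(l,k)$, so your version is a touch cleaner but not a different route.
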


\begin{proof}
The condition $c_1(\alphab,\betab)= 0$ writes $\sum_{i,j=1}^{k+1} \beta_i \beta_j (\alpha_i-\alpha_j)= 0$ which is true for any $\alphab \in \R^{k+1}$ when $\betab \in \R^{k+1}$ satisfies $\sum_{i=1}^{k+1}\beta_i=0$. Indeed $\sum_{i,j=1}^{k+1} \beta_i \beta_j (\alpha_i-\alpha_j)= (\sum_{j=1}^{k+1} \beta_j) \sum_{i}^{k+1} \beta_i \alpha_i - (\sum_{i=1}^{k+1} \beta_i) \sum_{j}^{k+1} \beta_j \alpha_j = 0$. The condition $c_2(\alphab,\betab)= 0$ writes $\sum_{i,j=1}^{k+1} \beta_i \beta_j (\alpha_i-\alpha_j)^{2}= 0$. However $\sum_{i,j=1}^{k+1} \beta_i \beta_j (\alpha_i-\alpha_j)^{2}= \sum_{i,j=1}^{k+1} \beta_i \beta_j (\alpha_i^{2}+\alpha_j^{2}-2 \alpha_i \alpha_j)$. The term $\sum_{i,j=1}^{k+1} \beta_i \beta_j \alpha_i \alpha_j$ vanishes as soon as $\sum_{i=1}^{k+1} \beta_i \alpha_i =0$. The other terms $\sum_{i,j=1}^{k+1} \beta_i \beta_j \alpha_i^{2}$ and $\sum_{i,j=1}^{k+1} \beta_j \beta_i \alpha_j^{2}$ as soon as $\sum_{i=1}^{k+1} \beta_i =0$. With an immediate recurrence by using the Binomial formula we see that  $c_1(\alphab,\betab) = c_2(\alphab,\betab) =  \cdots = c_{k-1}(\alphab,\betab) = 0$ as soon as
\begin{equation}
\label{eq:eqconditions2}
\sum_{i=1}^{k+1} \beta_i = \sum_{i=1}^{k+1} \beta_i \alpha_i = \sum_{i=1}^{k+1} \beta_i \alpha_i^{2} = \cdots = \sum_{i=1}^{k+1} \beta_i \alpha_i^{k-1}=0\,.
\end{equation}

Define $\betab \in \R^{k+1}$ by for all $1 \leq i \leq k+1, \beta_i =  (-1)^{i-1}\binom{k}{i-1}$ and $\alphab \in \R^{k+1}$ by $\alpha_i =i$. Then the $\alpha_i$'s are pairwise distinct and
\begin{equation}
0 = \sum_{i=0}^{k} (-1)^{i}\binom{k}{i} = \sum_{i=1}^{k+1} (-1)^{i-1}\binom{k}{i-1}= \sum_{i=1}^{k+1} \beta_i\,.
\end{equation}
Then for any $1 \leq s \leq k-1$ we have
\begin{equation}
\sum_{i=1}^{k+1} \beta_i \alpha_i^{s}= \sum_{i=1}^{k+1} (-1)^{i-1}\binom{k}{i-1} i^{s}= \sum_{i=0}^{k} (-1)^{i}\binom{k}{i} (i+1)^{s}= \sum_{i=0}^{k} (-1)^{i}\binom{k}{i} \left(\sum_{l=0}^{s} \binom{s}{l} i^{l}\right) \,.
\end{equation}
Consquently
\begin{equation}
\sum_{i=1}^{k+1} \beta_i \alpha_i^{s} = \sum_{l=0}^{s} \binom{s}{l} \left(\sum_{i=0}^{k} (-1)^{i}\binom{k}{i} i^{l} \right)\,.
\end{equation}
But for $0 \leq l \leq s$ we have
\begin{equation}
\sum_{i=0}^{k} (-1)^{i}\binom{k}{i} i^{l} = \sum_{i=0}^{k} (-1)^{k-i}\binom{k}{k-i} (k-i)^{l} = \sum_{i=0}^{k} (-1)^{k-i}\binom{k}{i} (k-i)^{l} = (-1)^{k} \sum_{i=0}^{k} (-1)^{i}\binom{k}{i} (k-i)^{l}\,,
\end{equation} 
so $\sum_{i=0}^{k} (-1)^{i}\binom{k}{i} i^{l} =(-1)^{k} k! S_{2}(l,k)$ where $S_{2}(l,k)$ is the Stirling number of the second kind which is zero as soon as $l<k$. Since $l \leq s \leq k-1 <k$ by hypothesis we have that $\sum_{i=0}^{k} (-1)^{i}\binom{k}{i} i^{l} =0$ and thus  $\sum_{i=1}^{k+1} \beta_i \alpha_i^{s}=0$ for all $1 \leq s \leq k-1$ and $\sum_{i=1}^{k+1} \beta_i=0$. So this implies that $c_1(\alphab,\betab) = c_2(\alphab,\betab) =  \cdots = c_{k-1}(\alphab,\betab) = 0$. For such $\betab$ we have that $\#T_{+}(\betab)=\#T_{-}(\betab) = \frac{k+1}{2}$ for $k$ odd. If $k$ is even then $\#T_{+}(\betab)=\frac{k}{2}+1$ and $\#T_{-}(\betab) = \frac{k}{2}$.
\end{proof}
With this results we can now prove Theorem \ref{theo:regularitykerneltheo}.

\begin{proof}[Proof of Theorem \ref{theo:regularitykerneltheo}]
Define $(\alphab,\betab)$ as in Lemma \ref{lemma:lemma_explicit}. Then we have $c_1(\alphab,\betab) = c_2(\alphab,\betab) =  \cdots = c_{k-1}(\alphab,\betab) = 0$ and $r(\betab) \leq \lfloor \frac{k}{2} \rfloor+1$ which proves the theorem by using Lemma \ref{lemma:superlemmmma} with $M=k+1$.

\end{proof}

\subsection{Proof of Proposition \ref{lemma:bounding_wass} \label{proof:lemma:bounding_wass}}

Proposition \ref{lemma:bounding_wass} is an immediate corollary of the following variation of its statement:
\begin{proposition}
\label{lemma:bounding_wass_detailed}
Consider any $\pi,\pi' \in \P(\R^{d})$ having densities $f,g$ with respect to the Lebesgue measure, \ie\ $\pi = f \dr \xbf, \pi' = g \dr \xbf$. Denote $V_{d}=\pi^{d / 2}/ \Gamma(d / 2+1)$ the volume of the unit $d$ -dimensional unit sphere. 
\begin{enumerate}[label=(\roman*)]
\item Consider $1\leq p< \expo$. If $\moment_{\expo}[\pi], \moment_{\expo}[\pi']$ are finite then
\begin{equation*}
\W_p(\pi,\pi') \leq c_{d,p,\expo} (\moment^{\expo}_{\expo}[\pi]+\moment^{\expo}_{\expo}[\pi'])^{\frac{d+2p}{p(d+2\expo)}} \left(\int_{\R^{d}}|f(\xbf)-g(\xbf)|^{2} \dr \xbf \right)^{\frac{\expo-p}{(d+2\expo)p}}\,,
\end{equation*}
where $0<c_{d,p,\expo} \leq 2
 (\max\{V_d,1\})^{\frac{1}{2p}}$.
\item Consider  $1\leq p< \expo$. If $\max\{\moment_{\expo}[\pi], \moment_{\expo}[\pi']\} \leq \cte$ where $\cte >0$ then
\begin{equation*}
\W_p(\pi,\pi') \leq 
2c_{d,p,\expo}
 \cte^{\frac{r(d+2p)}{p(d+2\expo)}} \left(\int_{\R^{d}}|f(\xbf)-g(\xbf)|^{2} \dr \xbf \right)^{\frac{\expo-p}{(d+2\expo)p}}\,,
\end{equation*}
\item If $\pi,\pi'$ are supported in some Euclidean ball centered at $0$ of radius $\cte > 0$ then, for any $p \in [1,+\infty)$, 
\begin{equation}
\W_p(\pi,\pi') \leq 2^{\frac{p-1}{p}}
V_{d}^{\frac{1}{2p}}\cte^{\frac{2p+d}{2p}} \left(\int_{\R^{d}}|f(\xbf)-g(\xbf)|^{2} \dr \xbf \right)^{\frac{1}{2p}}
\end{equation}
\end{enumerate}
\end{proposition}

\begin{proof}
As a preliminary observe that by \citet[Theorem 6.15]{Villani} the Wasserstein distance is bounded by a weighted Total Variation distance:
\begin{equation*}
\W_p^{p}(\pi,\pi') \leq 2^{p-1}\int_{\R^{d}} \|\xbf\|_2^{p}\  \dr|\pi-\pi'|(\xbf)=2^{p-1}\int_{\R^{d}} \|\xbf\|_{2}^{p}\ |f(\xbf)-g(\xbf)|  \ \dr\xbf\,.
\end{equation*}
Given any $R > 0$, write $\int_{\R^{d}} \|\xbf\|_{2}^{p}\ |f(\xbf)-g(\xbf)|\  \dr\xbf = \int_{\|\xbf\|_2 \leq R} \|\xbf\|_{2}^{p}\ |f(\xbf)-g(\xbf)|\  \dr\xbf + \int_{\|\xbf\|_2 > R} \|\xbf\|_{2}^{p}\ |f(\xbf)-g(\xbf)|\  \dr\xbf$. By Cauchy-Schwarz inequality the first term of this decomposition is bounded as
\begin{align*}
\int_{\|\xbf\|_2 \leq R} \|\xbf\|_{2}^{p}\ |f(\xbf)-g(\xbf)|\  \dr\xbf 
&\leq 
\sqrt{\int_{\|\xbf\|_2 \leq R} \|\xbf\|_{2}^{2p}\ \dr \xbf} \sqrt{\int_{\|\xbf\|_2 \leq R}\ |f(\xbf)-g(\xbf)|^{2}\  \dr\xbf}
\leq 
C  \|f-g\|_{L_2(\R^{d})}
\end{align*}
where 
\[
C \coloneqq \sqrt{\int_{\|\xbf\|_2 \leq R} \|\xbf\|_{2}^{2p}\ \dr \xbf}
=
\sqrt{\int_{\|\ubf\|_2 \leq 1} \|R\ubf\|_{2}^{2p}\ R^{d}\dr \ubf}
=
\sqrt{R^{2p+d}\int_{\|\ubf\|_2 \leq 1} \|\ubf\|_{2}^{2p}\ \dr \ubf}
\leq R^{\frac{2p+d}{2}} \sqrt{V_{d}}\,.
\]

The second term is bounded as
\begin{align*}
\int_{\|\xbf\|_2 > R} \|\xbf\|_{2}^{p}\ |f(\xbf)-g(\xbf)|\  \dr\xbf 
&= \int_{\|\xbf\|_2 > R} \|\xbf\|_{2}^{p-\expo} \|\xbf\|_{2}^{\expo}\ |f(\xbf)-g(\xbf)|\  \dr\xbf \\
&\stackrel{r>p}{\leq} R^{p-\expo} \int_{\|\xbf\|_2 > R} \|\xbf\|_{2}^{\expo}\ |f(\xbf)-g(\xbf)|\  \dr\xbf 
\leq R^{p-\expo} (\moment^{\expo}_{\expo}[\pi]+\moment^{\expo}_{\expo}[\pi'])
\end{align*}
hence
\begin{equation}
\label{eq:withoutB}
\forall p \in [1, \expo), \ \W_p^{p}(\pi,\pi') \leq 
2^{p-1}
\Big(V_{d}^{1/2}
 \|f-g\|_{L_2(\R^{d})} R^{\frac{2p+d}{2}} + (\moment^{\expo}_{\expo}[\pi]+\moment^{\expo}_{\expo}[\pi']) R^{p-\expo}\Big)\,.
\end{equation}
We now have the ingredients to prove the three points.

For the first point, with $R := \left(\frac{\moment^{\expo}_{\expo}[\pi]+\moment^{\expo}_{\expo}[\pi']}{V_{d}^{1/2}
}\right)^{\frac{2}{d+2\expo}} \|f-g\|_{L_2(\R^{d})}^{-\frac{2}{d+2\expo}}$ we have $V_{d}^{1/2} \|f-g\|_{L_2(\R^{d})} R^{\frac{2p+d}{2}} = (\moment^{\expo}_{\expo}[\pi]+\moment^{\expo}_{\expo}[\pi']) R^{p-\expo}$ hence by \eqref{eq:withoutB} we have for each $p \in [1, \expo)$
\begin{align*}
\W_p^{p}(\pi,\pi') &\leq 
2^{p}
 (\moment^{\expo}_{\expo}[\pi]+\moment^{\expo}_{\expo}[\pi']) R^{p-\expo} = 
 2^{p}
   (\moment^{\expo}_{\expo}[\pi]+\moment^{\expo}_{\expo}[\pi']) \left(\frac{\moment_{\expo}[\pi]+\moment_{\expo}[\pi']}{V_{d}^{1/2}
}
\right)^{\frac{2(p-\expo)}{d+2\expo}} \|f-g\|_{L_2(\R^{d})}^{\frac{2(\expo-p)}{d+2\expo}}\,.
\end{align*}
Taking the $p$-th root yields the first claim once we check that $2^{p} (\moment^{\expo}_{\expo}[\pi]+\moment^{\expo}_{\expo}[\pi']) \left(\frac{\moment_{\expo}[\pi]+\moment_{\expo}[\pi']}{
V_{d}^{1/2}
}\right)^{\frac{2(p-\expo)}{d+2\expo}} \leq c_{d,p,r}^{p} (\moment^{\expo}_{\expo}[\pi]+\moment^{\expo}_{\expo}[\pi'])^{(d+2p)/(d+2r)}$ where $0<c_{d,p,\expo} \leq 
2
(\max\{V_d,1\})^{\frac{1}{2p}}$. Since
\[
2^{p} (\moment^{\expo}_{\expo}[\pi]+\moment^{\expo}_{\expo}[\pi']) \left(\frac{\moment_{\expo}[\pi]+\moment_{\expo}[\pi']}{
V_{d}^{1/2}
}\right)^{\frac{2(p-\expo)}{d+2\expo}}
= 2^{p} 
V_{d}^{\frac{\expo-p}{d+2\expo}} 
(\moment^{\expo}_{\expo}[\pi]+\moment^{\expo}_{\expo}[\pi'])^{\frac{d+2p}{d+2\expo}} 
\]
it is enough to bound
$c_{d,p,r} \coloneqq 
2 V_{d}^{\frac{\expo-p}{p(d+2\expo)}}\,.$

Indeed, since the function $\expo \mapsto \frac{\expo-p}{d+2\expo}=\frac{1}{2}-\frac{d+2p}{2(d+2r)}$ is monotonically increasing and $p<r < \infty$, we have $0 < \frac{\expo-p}{d+2r} < \lim_{\expo' \rightarrow +\infty} \frac{\expo'-p}{d+2\expo'}=\frac{1}{2}$, hence we have as claimed
\[
c_{d,p,r} = 2 V_d^{\frac{\expo-p}{p(d+2\expo)}} \leq 2(\max\{V_d,1\})^{\frac{\expo-p}{p(d+2\expo)}} \leq 2(\max\{V_d,1\})^{\frac{1}{2p}}.
\]

The second point is an immediate consequence of the first one. Since $1 \leq p < r$ we have $\frac{d+2p}{p(d+2r)} \leq \frac{1}{p} \leq 1$, hence using that $\max\{\moment_{\expo}[\pi], \moment_{\expo}[\pi']\} \leq \cte$ we get
\[
\left(\moment_{\expo}^{\expo}[\pi]+ \moment_{\expo}^{\expo}[\pi']\right)^{\frac{d+2p}{p(d+2\expo)}} \leq 2^{\frac{d+2p}{p(d+2\expo)}} M^{\frac{r(d+2p)}{p(d+2\expo)}} \leq 2M^{\frac{r(d+2p)}{p(d+2\expo)}}\,.
\]

For the last point we have $\forall \expo >1, \max\{\moment_{\expo}[\pi], \moment_{\expo}[\pi']\} \leq \cte$ and thus \eqref{eq:withoutB} gives for any choice of $R>0$:
\begin{equation}
\label{eq:eqforallr}
\forall \expo >1, \forall p \in [1, \expo), \ \W_p^{p}(\pi,\pi') \leq 2^{p-1} 
\Big(V_{d}^{1/2}
 \|f-g\|_{L_2(\R^{d})} R^{\frac{2p+d}{2}} + 2 \Big(\frac{\cte}{R}\Big)^{\expo} R^{p}\Big)\,.
\end{equation}
Consider any $R  > \cte$. We can take the limit as $\expo \rightarrow +\infty$ in \eqref{eq:eqforallr} which gives
\begin{equation*}
\forall R  > \cte, \forall p \in [1, +\infty), \ \W_p^{p}(\pi,\pi') \leq 2^{p-1}
\Big(V_{d}^{1/2}\|f-g\|_{L_2(\R^{d})} R^{\frac{2p+d}{2}}\Big)\,.
\end{equation*}
since $\lim\limits_{\expo\rightarrow +\infty} (\frac{\cte}{R})^{\expo} = 0$. Since this is true for any $R > \cte$ we can conclude that
\begin{equation*}
\forall p \in [1, +\infty), \ \W^{p}_p(\pi,\pi') \leq 2^{p-1}
V_{d}^{1/2}
\cte^{\frac{2p+d}{2}}\left(\int_{\R^{d}}|f(\xbf)-g(\xbf)|^2 \dr \xbf\right)^{\frac{1}{2}}\,.
\end{equation*} 
Taking the $p$-th root yields the conclusion.

\end{proof}

\subsection{Proof of Theorem \ref{theo:regularcase_rkhs} and \ref{theo:regularcase} \label{proof:theo:regularcase}}

We first prove the following result:
\regularcaserkhs*
\begin{proof}
Take any $\pi,\pi' \in \Sfrak_{B, \cte, \expo, \kappa}$ and recall that this implies notably that $\moment_{\expo}[\pi] \leq \cte$ (and similarly for $\pi'$). By Proposition
~\ref{lemma:bounding_wass_detailed} we have, with $C_{1} := 2c_{d,p,\expo} M^{\frac{r(d+2p)}{p(d+2r)}}$:

\begin{equation}
\W_p(\pi,\pi') \leq C_1 \left(\int |f(\xbf)-g(\xbf)|^2 \dr \xbf\right)^{\frac{\expo-p}{p(d+2\expo)}}\,.
\end{equation}
Since $\kappa_0 \in L_1(\R^{d})$ it has a Fourier transform $\widehat{\kappa_0}$, which is non-negative by Bochner's theorem. Consequently:
\begin{equation}
\begin{split}
\W_p(\pi,\pi') &\stackrel{\star}{\leq} C_1\left((2\pi)^{-d} \int |\hat{f}(\w)-\hat{g}(\w)|^2 \dr \w\right)^{\frac{\expo-p}{p(d+2\expo)}} = (2\pi)^{\frac{-d(\expo-p)}{p(d+2\expo)}}C_1\left(\int |\hat{f}(\w)-\hat{g}(\w)|^2 \dr \w\right)^{\frac{\expo-p}{p(d+2\expo)}}  \\
&= (2\pi)^{\frac{-d(\expo-p)}{p(d+2\expo)}}C_1\left(\int \frac{|\hat{f}(\w)-\hat{g}(\w)|}{\sqrt{\widehat{\kappa_0}(\w)}}\sqrt{\widehat{\kappa_0}(\w)}|\hat{f}(\w)-\hat{g}(\w)| \dr \w\right)^{\frac{\expo-p}{p(d+2\expo)}} \\
&\stackrel{\star\star}{\leq} (2\pi)^{\frac{-d(\expo-p)}{p(d+2\expo)}}C_1 \left(\int \frac{|\hat{f}(\w)-\hat{g}(\w)|^2}{\widehat{\kappa_0}(\w)} \dr \w \right)^{\frac{\expo-p}{2p(d+2\expo)}}\left(\int \widehat{\kappa_0}(\w) |\hat{f}(\w)-\hat{g}(\w)|^2 \dr \w \right)^{\frac{\expo-p}{2p(d+2\expo)}} \\
&\stackrel{\star\star\star}{\leq} (2\pi)^{\frac{-d(\expo-p)}{p(d+2\expo)}}C_1\left(\int \frac{|\hat{f}(\w)-\hat{g}(\w)|^2}{\widehat{\kappa_0}(\w)} \dr \w \right)^{\frac{\expo-p}{2p(d+2\expo)}} (2\pi)^{\frac{d(\expo-p)}{2p(d+2\expo)}}\|\pi-\pi'\|_{\kappa}^{\frac{\expo-p}{p(d+2\expo)}} \\
&=(2\pi)^{\frac{-d(\expo-p)}{2p(d+2\expo)}}C_1\left(\int \frac{|\hat{f}(\w)-\hat{g}(\w)|^2}{\widehat{\kappa_0}(\w)} \dr \w \right)^{\frac{\expo-p}{2p(d+2\expo)}} \|\pi-\pi'\|_{\kappa}^{\frac{\expo-p}{p(d+2\expo)}}\\
&= C_1\|f-g\|_{\Hcal_{\kappa}}^{\frac{\expo-p}{p(d+2\expo)}} \|\pi-\pi'\|_{\kappa}^{\frac{\expo-p}{p(d+2\expo)}}\,,
\end{split}
\end{equation}
where in $(\star)$ we used the Plancherel formula, in $(\star\star)$ we used the Cauchy–Schwarz inequality and in $(\star\star\star)$ we relied on Lemma \ref{lemma:mmdform} whose proof is postponed below. In the last step we used $(\int \frac{|\hat{f}(\w)-\hat{g}(\w)|^2}{\widehat{\kappa_0}(\w)} \dr \w)^{1/2} = (2\pi)^{d/2}\|f-g\|_{\Hcal_{\kappa}}$. We used Theorem 10.12 in \citealt{Wendland} where we adapted the conventions on the Fourier transform. We can apply this theorem since $\kappa_0$ is continuous (by hypothesis), and its Fourier transform $\widehat{\kappa_0} > 0$ thus $\kappa_0$ is positive definite \citep[Corollary 6.9]{Wendland}. Finally $\max\{\|f\|_{\Hcal_{\kappa}}, \|g\|_{\Hcal_{\kappa}}\} \leq B$ by hypothesis. Thus $\|f-g\|^{\frac{\expo-p}{(d+2\expo)p}}_{\Hcal_{\kappa}} \leq (B+B)^{\frac{\expo-p}{(d+2\expo)p}} \leq 2B^{\frac{\expo-p}{(d+2\expo)p}}$ since $\frac{\expo-p}{(d+2\expo)p} \leq 1$. This concludes the proof with $C := 2B^{\frac{\expo-p}{(d+2\expo)p}} C_1= 4c_{d,p,r} B^{\frac{\expo-p}{(d+2\expo)p}}M^{\frac{r(d+2p)}{p(d+2r)}}.$

\end{proof}

As a consequence we have the theorem:
\regularcase*
\begin{proof}
This is a direct consequence of Theorem~\ref{theo:regularcase_rkhs} once we establish that, under the assumptions on $\kappa$, we have $\Sfrak_{ B, \cte, \expo,s} \subseteq \Sfrak_{C B, \cte, \expo, \kappa}$ where $C = C(d,s, \kappa)$ is the constant from Lemma \ref{lemma:general_bound} below. Indeed, consider
 $\pi = f \dr \xbf \in \Sfrak_{ B, \cte, \expo,s}$. By hypothesis we have $\moment_\expo[\pi] \leq \cte$ and $\|f\|_{H^{s}(\R^d)}\leq B$. With the hypothesis on the kernel $\kappa$ we can use Lemma \ref{lemma:general_bound} below to prove that there is a constant $C = C(d,s, \kappa)$ such that $\|f\|_{\Hcal_{\kappa}} \leq C \|f\|_{H^{s}(\R^{d})} \leq C B$, which shows that $\pi \in \Sfrak_{C B, \cte, \expo, \kappa}$ as claimed. Thus, by Theorem \ref{theo:regularcase_rkhs}, with $c_{d,p,\expo}$ the constant defined in Proposition \ref{lemma:bounding_wass} we have:
\begin{equation}
\forall \pi,\pi' \in \Sfrak_{B, \cte, \expo,s}, \W_p(\pi,\pi') \leq 
4c_{d,p,\expo}
2(C(d,s,\kappa)B)^{\frac{\expo-p}{(d+2\expo)p}} M^{\frac{(d+2p)\expo}{(d+2\expo)p}}
\|\pi-\pi'\|_{\kappa}^{\frac{\expo-p}{p(d+2\expo)}}\,,
\end{equation}
which concludes the proof.
\end{proof}

\begin{lemma}
\label{lemma:general_bound}
Let $\kappa(\xbf,\ybf)=\kappa_0(\xbf-\ybf)$ be a TI, PSD kernel on $\R^{d}$ with $\kappa_{0} \in L_1(\R^{d})$ such that 
$\widehat{\kappa_{0}}(\w) > 0$ for every $\w$ and $\frac{1}{\widehat{\kappa}_0(\w)}=O(\|\w\|_2^{s_{\kappa}})$ as $\|\w\|_2 \rightarrow +\infty$ for some $s_{\kappa} \in \R_{+}$. For any $s\geq s_{\kappa}/2$, there exists a constant $C = C(d, s, \kappa) > 0$ such that for every $f\in H^{s}(\R^{d})$ we have
\begin{equation}
\|f\|_{\Hcal_{\kappa}} \leq C \|f\|_{H^{s}(\R^{d})}\,.
\end{equation}

\end{lemma}
\begin{proof}
Given any $R>0$ we write $\int_{\R^d} \frac{|\hat{f}(\w)|^{2}}{\widehat{\kappa}_0(\w)} \dr \w=\int_{\|\w\|_2
\leq R} \frac{|\hat{f}(\w)|^{2}}{\widehat{\kappa}_0(\w)} \dr \w+\int_{\|\w\|_2 > R} \frac{|\hat{f}(\w)|^{2}}{\widehat{\kappa}_0(\w)} \dr \w$ and use the shorthand $I_{\|\w\|_2 \leq R}$ and $I_{\|\w\|_2 > R}$ for the two terms. Since $\kappa_{0} \in L_1(\R^{d})$ the Fourier transform $\widehat{\kappa}_0$ is continuous. It is also positive and thus and the term $I_{\|\w\|_2< R}$ can be bounded as 
\begin{equation}
I_{\|\w\|_2\leq R} \leq \left(\sup_{\|\w\|_2 \leq R} \widehat{\kappa_{0}}(\w)^{-1}\right) \int_{\|\w\|_2 \leq R} |\hat{f}(\w)|^{2} \dr \w \leq \left(\sup_{\|\w\|_2 \leq R} \widehat{\kappa}_{0}(\w)^{-1}\right) \|f\|^{2}_{H^{s}(\R^{d})}\,.
\end{equation}

Now consider $I_{\|\w\|_2> R}$ and take $s \geq \frac{s_{\kappa}}{2}$. We have:
\begin{equation}
\begin{split}
\int_{\|\w\|_2 > R}|\hat{f}(\w)|^{2}\frac{1}{\widehat{\kappa_0}(\w)} \dr \w&=\int_{\|\w\|_2> R}(1+\|\w\|_2^{2})^{s}|\hat{f}(\w)|^{2}(1+\|\w\|_2^{2})^{-s} \frac{1}{\widehat{\kappa_0}(\w)} \dr \w\\
&\leq \sup_{\|\w\|_2> R}\left( \frac{(1+\|\w\|_2^{2})^{-s}}{\widehat{\kappa_0}(\w)}\right) \int\limits_{\|\w\|_2> R}(1+\|\w\|_2^{2})^{s}|\hat{f}(\w)|^{2} \dr \w \\
&\leq \sup_{\|\w\|_2> R}\left( \frac{(1+\|\w\|_2^{2})^{-s}}{\widehat{\kappa_0}(\w)}\right)\|f\|^2_{H^{s}(\R^{d})} 
\end{split}
\end{equation} 
By hypothesis $\frac{\|\w\|_2^{-2s}}{\widehat{\kappa_0}(\w)}=O_{\|\w\|_2 \rightarrow +\infty}(\frac{1}{\|\w\|_2^{2s-s_{\kappa}}})$. Since $s \geq \frac{s_{\kappa}}{2}$ we have $2s-s_{\kappa} \geq 0$ thus the quantity $\underset{\|\w\|_2> R}{\sup}\left( \frac{(1+\|\w\|_2^{2})^{-s}}{\widehat{\kappa_0}(\w)}\right)$ is finite. The previous reasoning gives, for any $R > 0$, 
\begin{equation}
\|f\|_{\Hcal_\kappa}^2 = (2\pi)^{-d}\int_{\R^d} \frac{|\hat{f}(\w)|^{2}}{\widehat{\kappa_0}(\w)} \dr \w \leq (2\pi)^{-d}\left(\sup_{\|\w\|_2 \leq R} \frac{1}{\widehat{\kappa_{0}}(\w)}+\underset{\|\w\|_2> R}{\sup} \frac{(1+\|\w\|_2^{2})^{-s}}{\widehat{\kappa_0}(\w)}\right)\|f\|^2_{H^{s}(\R^{d})}\,.
\end{equation}

The infimum over $R > 0$ yields a constant $C(d,s,\kappa)$ such that $\|f\|_{\Hcal_\kappa} \leq C(d,s,\kappa) \|f\|_{H^{s}(\R^{d})}$. 

\end{proof}

\begin{restatable}{lemma}{}
\label{lemma:mmdform}
Let $\kappa(\xbf,\ybf)=\kappa_0(\xbf-\ybf)$ be a TI, PSD kernel on $\R^{d} \times \R^{d}$ where $\kappa_0 \in L_1(\R^{d})$.  Then for $\pi,\pi' \in \P(\R^{d})$ we have the formula
\begin{equation}
\|\pi-\pi'\|^{2}_{\kappa}= (2\pi)^{-d}\int \widehat{\kappa_0}(\w)|\widehat{\pi}(\w)-\widehat{\pi'}(\w)|^{2} \dr \w\,.
\end{equation}
In particular when $\pi,\pi'$ have densities $f,g$ with respect to the Lebesgue measure we have
\begin{equation}
\|\pi-\pi'\|^{2}_{\kappa}= (2\pi)^{-d}\int \widehat{\kappa_0}(\w)|\hat{f}(\w)-\hat{g}(\w)|^{2} \dr \w\,.
\end{equation}
\end{restatable}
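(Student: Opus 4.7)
The plan is to start from the definition of the MMD semi-norm and expand it as a double integral against the signed measure $\mu := \pi-\pi'$, then substitute the Fourier inversion formula for the kernel and apply Fubini to identify the characteristic function of $\mu$.

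More precisely, first I would note that since $\kappa_0$ is continuous, p.s.d. and integrable, Bochner's theorem (Proposition \ref{theo:bochner}) ensures $\widehat{\kappa_0} \geq 0$, and the identity $\kappa_0(0) = (2\pi)^{-d}\int \widehat{\kappa_0}(\w) \dr \w$ combined with continuity at $0$ shows $\widehat{\kappa_0} \in L_1(\R^d)$. This justifies the pointwise Fourier inversion $\kappa_0(\zbf) = (2\pi)^{-d}\int \widehat{\kappa_0}(\w) e^{i\w^{\top}\zbf} \dr \w$. Writing $\mu = \pi-\pi'$, I have by definition
\begin{equation}
\|\pi-\pi'\|^{2}_{\kappa} = \int\!\!\int \kappa_0(\xbf-\ybf) \dr \mu(\xbf) \dr \mu(\ybf).
\end{equation}

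Next, I would substitute the Fourier representation of $\kappa_0$ and apply Fubini's theorem, whose hypotheses are satisfied because $\widehat{\kappa_0} \in L_1(\R^d)$, $|\mu|$ is a finite signed measure (total variation bounded by $2$), and $|e^{i\w^{\top}(\xbf-\ybf)}|=1$. This yields
\begin{equation}
\|\pi-\pi'\|^{2}_{\kappa} = (2\pi)^{-d} \int \widehat{\kappa_0}(\w) \left(\int e^{i\w^{\top}\xbf}\dr\mu(\xbf)\right)\left(\int e^{-i\w^{\top}\ybf}\dr\mu(\ybf)\right) \dr\w.
\end{equation}
Recognizing that the inner integrals equal $\overline{\widehat{\mu}(\w)}$ and $\widehat{\mu}(\w)$ respectively, and that $\widehat{\mu}(\w) = \widehat{\pi}(\w)-\widehat{\pi'}(\w)$ by linearity of the Fourier transform, the integrand simplifies to $\widehat{\kappa_0}(\w)|\widehat{\pi}(\w)-\widehat{\pi'}(\w)|^{2}$, giving the first identity. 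The density version follows immediately from the fact that $\widehat{\pi}(\w) = \hat{f}(\w)$ whenever $\pi \ll f \dr \xbf$.

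The main technical obstacle is the double justification of Fourier inversion for $\kappa_0$ and of Fubini. Both hinge on the integrability of $\widehat{\kappa_0}$, which is not hypothesized directly but follows from Bochner. If one wanted to avoid even this step, an alternative approach would be to interpret everything in the sense of tempered distributions: since $\mu$ has a bounded continuous Fourier transform $\widehat{\mu}$ and $\kappa_0$ is a continuous integrable function, one can use Parseval's formula for the pairing $\langle \kappa_0, \widehat{\mu}\otimes\overline{\widehat{\mu}}\rangle$. I find the direct Fubini route cleaner and would present the argument that way.
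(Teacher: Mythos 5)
Your proof is correct and follows essentially the same route as the paper's: invoke Bochner to get $\widehat{\kappa_0}\geq 0$, upgrade to $\widehat{\kappa_0}\in L_1(\R^d)$, expand $\|\pi-\pi'\|_\kappa^2$ against $\mu=\pi-\pi'$, substitute the Fourier inversion formula for $\kappa_0$, apply Fubini, and identify the characteristic functions. One small caveat: the phrase suggesting that the identity $\kappa_0(0)=(2\pi)^{-d}\int\widehat{\kappa_0}$ ``shows'' $\widehat{\kappa_0}\in L_1$ is circular, since that identity is itself the inversion theorem you are trying to enable; the correct argument (a Fej\'er-type mollification, as in the Stein--Weiss reference the paper cites) establishes that a continuous integrable function with nonnegative Fourier transform satisfies $(2\pi)^{-d}\int\widehat{\kappa_0}\leq\kappa_0(0)<\infty$ \emph{before} inversion is available.
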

\begin{proof}
This result can be found in \citet{Sriperumbudur} but we rewrite the proof for completeness. Since $\kappa_{0}$ is a continuous PSD function and $\kappa_0 \in L_1(\R^{d})$ then by Bochner's theorem $\widehat{\kappa_0} \geq 0$. So $\kappa_{0}$ is even ($\kappa$ is symmetric), integrable, continuous (in particular at $0$) and has nonnegative Fourier transform so $\widehat{\kappa_0} \in L_1(\R^{d})$ \citep{SteinWeiss2016}. Then by Fourier inversion theorem
\begin{equation}
\label{eq:inversion}
\forall \xbf \in \R^{d}, \ \kappa_0(\xbf)=(2\pi)^{-d}\int e^{i\w^{\top}\xbf} \widehat{\kappa_{0}}(\w) \dr \w\,.
\end{equation}
In the following we define the measure $\Lambda$ by $\dr \Lambda(\w):=(2\pi)^{-d} \widehat{\kappa_{0}}(\w)\dr \w$ (which is a non-negative finite measure thanks to Bochner's theorem).
We have:
\begin{equation}
\begin{split}
\|\pi-\pi'\|^{2}_{\kappa}&=\int \int \kappa_0(\xbf-\ybf) \dr (\pi-\pi')(\xbf)\dr (\pi-\pi')(\ybf)  \\
&\stackrel{\star}{=} \int \int \int e^{i\w^{\top}(\xbf-\ybf)}\dr \Lambda(\w)  \dr (\pi-\pi')(\xbf) \dr (\pi-\pi')(\ybf)\\
&=\int \left(\int  e^{i\w^{\top}\xbf} \dr (\pi-\pi')(\xbf)\right)\left(\int  e^{-i\w^{\top}\ybf} \dr (\pi-\pi')(\ybf)\right) \dr \Lambda(\w) \\
&= \int (\widehat{\pi}(\w)-\widehat{\pi'}(\w))\overline{(\widehat{\pi}(\w)-\widehat{\pi'}(\w))}\dr \Lambda(\w) =\int |\widehat{\pi}(\w)-\widehat{\pi'}(\w)|^{2}\dr \Lambda(\w)\\
&=(2\pi)^{-d}\int \widehat{\kappa_0}(\w)|\widehat{\pi}(\w)-\widehat{\pi'}(\w)|^{2} \dr \w\,,
\end{split}
\end{equation}
where in $(\star)$ we used \eqref{eq:inversion} and Fubini theorem.
\end{proof}

\subsection{The Compactly Supported Case \label{proof:compactly_supported_case}}
We will prove the following result:
\begin{lemma}
\label{lemma:compactly_supported_case}
Let $\kappa(\xbf,\ybf)=\kappa_0(\xbf-\ybf)$ be a TI, PSD kernel on $\R^{d}$ with $\kappa_{0} \in L_1(\R^{d})$ such that 
$\widehat{\kappa}_{0}(\w) > 0$ for every $\w$ and $\frac{1}{\widehat{\kappa}_0(\w)}=O(\|\w\|_2^{s_{\kappa}})$ as $\|\w\|_2 \rightarrow +\infty$ for some $s_{\kappa} \in \R_{+}$. Consider $0<\cte, B<+\infty$, $s \geq s_{\kappa}/2$ and the following model set
\begin{equation}
\Sfrak_{B, \cte,s} :=  \left\{\pi \in \P(\R^{d}): \ \pi= f\dr \xbf,\ \|f\|_{H^s(\R^d)}\leq B \text{ and } \ \supp(\pi) \subseteq B(0,\cte)\right\}\,,
\end{equation}
where $B(0,\cte)$ is the Euclidean ball centered at zero with radius $\cte$. For any $p \in [1,+\infty)$, there exists a constant $C = C(d,p,\cte,B, \kappa, s) > 0$ such that
\begin{equation}
\forall \pi,\pi' \in \Sfrak_{B, \cte,s}, \W_p(\pi,\pi') \leq C \|\pi-\pi'\|_{\kappa}^{\frac{1}{2p}}\,.
\end{equation}
\end{lemma}
\begin{proof}
By the third  point of Proposition \ref{lemma:bounding_wass_detailed} there is a constant $C = C(d,p,\cte) > 0$ such that
\begin{equation}
\W_p(\pi,\pi') \leq C \left(\int_{\R^{d}}|f(\xbf)-g(\xbf)|^{2} \dr \xbf \right)^{\frac{1}{2p}}
\end{equation}
for every $\pi,\pi' \in  \Sfrak_{B, \cte,s}$. Then, with the same strategy as in the proof of Theorem \ref{theo:regularcase} we have
\begin{equation}
\W_p(\pi,\pi') \leq C_1\left(\int \frac{|\widehat{f}(\w)-\widehat{g}(\w)|^2}{\widehat{\kappa_0}(\w)} \dr \w\right)^{\frac{1}{4p}} \|\pi-\pi'\|_{\kappa}^{\frac{1}{2p}}\,.
\end{equation}
for some constant $C_1 > 0$ which depends on $d,p,\cte$. By Lemma \ref{lemma:general_bound} there exists a constant $C_2 = C_2(\kappa,s,B,d)$ such that $\int \frac{|\widehat{f}(\w)-\widehat{g}(\w)|^2}{\widehat{\kappa_0}(\w)} \dr \w \leq C_2$. This concludes the proof.
\end{proof}

\subsection{Proof of Lemma \ref{lemma:characterization_mmd}, Proposition \ref{prop:regu_wass} and Theorem \ref{theo:maintheo_noncompact} \label{proof:prop:regu_wass}}

\characterizationmmd*
\begin{proof}
We first prove that the kernel in this proposition defines a TI, PSD kernel. It is clearly translation invariant by definition and symmetric since the convolution of even functions is even thus $\kappa_{0}$ is even. Also $\kappa_{0}$ is continuous and bounded since $\alpha$ is continuous and bounded. Since $\alpha$ is even its Fourier transform is real-valued  hence $\widehat{\kappa}_{0} = \hat{\alpha}^{2} = |\hat{\alpha}|^{2} \geq 0$ so the Fourier transform of $\kappa_0$ is non negative. Finally $\kappa_{0} \in L_1(\R^{d})$ as the convolution of two integrable functions. Using Bochner's theorem (see Theorem \ref{theo:bochner}) shows that the kernel $\kappa$ is a TI, PSD kernel. Moreover:
\begin{equation}
\|\alpha*\pi-\alpha*\pi\|^{2}_{L_2(\R^{d})} = \int |\alpha*\pi(\xbf)-\alpha*\pi'(\xbf)|^{2} \dr \xbf \stackrel{\star}{=} (2\pi)^{-d} \int  |\widehat{\alpha*\pi}(\w)-\widehat{\alpha*\pi'}(\w)|^{2} \dr \w\,,
\end{equation}
where in $(\star)$ we used Plancherel formula which is possible since $\alpha * \pi \in L_2(\R^{d})$ because $\alpha \in L_2(\R^{d})$ (same for $\alpha * \pi'$). So using that $\widehat{\alpha * \pi}=\widehat{\alpha} \times \widehat{\pi}$ ($\alpha$ is a probability density function and $\pi$ a probability distribution):
\begin{equation}
\|\alpha*\pi-\alpha*\pi\|^{2}_{L_2(\R^{d})} = (2\pi)^{-d} \int |\widehat{\alpha}(\w) \widehat{\pi}(\w) - \widehat{\alpha}(\w) \widehat{\pi'}(\w)|^{2} \dr \w = (2\pi)^{-d} \int |\widehat{\alpha}(\w)|^{2} |\widehat{\pi}(\w)-\widehat{\pi'}(\w)|^{2} \dr \w.
\end{equation}
Finally, since $\widehat{\kappa_{0}}=|\widehat{\alpha}|^{2}$ we get
\begin{equation}
\|\alpha*\pi-\alpha*\pi\|^{2}_{L_2(\R^{d})} = (2\pi)^{-d} \int \widehat{\kappa_0}(\w) |\widehat{\pi}(\w)-\widehat{\pi'}(\w)|^{2} \dr \w \stackrel{\star \star}{=} \|\pi-\pi'\|_{\kappa}^{2}\,,
\end{equation}
where in $(\star \star)$ we used Lemma \ref{lemma:mmdform}. This concludes the proof.
\end{proof}

\reguwass*
\begin{proof}
In order to prove the proposition we will apply the first point of Proposition \ref{lemma:bounding_wass_detailed} with $\pi_{\alpha}$ and $\pi_{\alpha}'$ that admit the densities $f=\alpha*\pi$ and $g=\alpha*\pi'$ and thus the term $\|f-g\|_{L_{2}(\R^{d})}$ in Proposition \ref{lemma:bounding_wass_detailed} becomes $\|f-g\|_{L_{2}} = \|\alpha*\pi-\alpha*\pi'\|_{L_2(\R^{d})}$. To apply Proposition \ref{lemma:bounding_wass_detailed} we need to show that $\pi_\alpha,\pi'_\alpha$ have $\expo$-finite moments which will be true by using that $\pi,\pi'$ and $\alpha$ have $\expo$-finite moments. Indeed
\begin{equation}
\begin{split}
\mathbb{E}_{\xbf \sim \pi_\alpha}\|\xbf\|_2^{\expo}&=\int \|\xbf\|_2^{\expo} (\alpha*\pi)(\xbf) \dr \xbf= \int \|\xbf\|_2^{\expo} \left(\int \alpha(\xbf-\ybf) \dr \pi(\ybf)\right) \dr \xbf \\
&\stackrel{\star}{=} \int \int \|\xbf\|_2^{\expo} \alpha(\xbf-\ybf) \dr \xbf \dr \pi(\ybf) = \int \left( \int \|\xbf\|_2^{\expo} \alpha(\xbf-\ybf) \dr \xbf \right) \dr \pi(\ybf)\,,
\end{split}
\end{equation}
where in ($\star$) we used the Fubini theorem ($\alpha$ is non-negative). Moreover, for any $\ybf \in \R^{d}$,
\begin{equation}
\int \|\xbf\|_2^{\expo} \alpha(\xbf-\ybf) \dr \xbf = \int \|\ybf+\zbf\|_2^{\expo} \alpha(\zbf) \dr \zbf \leq 2^{\expo-1} \left(  \|\ybf\|_2^{\expo} \int \alpha(\zbf) \dr \zbf+ \int \|\zbf\|_2^{\expo} \alpha(\zbf) \dr \zbf\right)\,,
\end{equation}
where in the last inequality we used $\|\zbf+\ybf\|_2^{\expo} \leq 2^{\expo-1}(\|\zbf\|_2^{\expo} +\|\ybf\|_2^{\expo})$. Moreover since $\int \alpha(\zbf) \dr \zbf=1$ we have:
\begin{equation}
\mathbb{E}_{\xbf \sim \pi_\alpha}\|\xbf\|_2^{\expo} \leq 2^{\expo-1}\left(\int \|\ybf\|_2^{\expo} \dr \pi(\ybf)+ \int \|\zbf\|_2^{\expo} \alpha(\zbf)\dr \zbf \right)<+\infty
\end{equation}
So by using the first point of Proposition \ref{lemma:bounding_wass_detailed} we have
\begin{equation}
\W_p(\pi_{\alpha},\pi_{\alpha}') \leq C_{d,p, \expo}\left(\mathbb{E}_{\xbf \sim \pi_\alpha}\|\xbf\|_2^{\expo}+\mathbb{E}_{\ybf \sim \pi'_\alpha}\|\ybf\|_2^{\expo}\right)^\frac{2p+d}{(d+2\expo)p}\|\alpha*\pi-\alpha*\pi'\|_{L_2(\R^{d})}^{\frac{2(\expo-p)}{(d+2\expo)p}}\,,
\end{equation}
for some constant $C_{d,p,\expo} >0$. Finally, to relate the term $\|\alpha*\pi-\alpha*\pi'\|_{L_2(\R^{d})}$ with the MMD we use the Lemma \ref{lemma:characterization_mmd}.

\end{proof}

Finally we can prove the following theorem:
\maintheononcompact*

\begin{proof}
With the notations of the theorem we have, by Proposition \ref{prop:regu_wass},
\begin{equation}
\W_p(\pi_\alpha,\pi'_\alpha) \leq C_{d,\expo,p} \left(\E_{\xbf \sim \pi_\alpha}[\|\xbf\|_2^{\expo}]+\E_{\ybf \sim \pi_\alpha'}[\|\ybf\|_2^{\expo}]\right)^\frac{2p+d}{(d+2\expo)p}\|\pi-\pi'\|_{\kappa}^{\frac{2(\expo-p)}{(d+2\expo)p}}\,,
\end{equation}
where $C_{d,\expo,p}$ is defined in Proposition \ref{prop:regu_wass}. We can control both terms $\E_{\xbf \sim \pi_\alpha}[\|\xbf\|_2^{\expo}],\E_{\ybf \sim \pi_\alpha'}[\|\ybf\|_2^{\expo}]$ as in the proof of Proposition \ref{prop:regu_wass} so that
\begin{equation}
\E_{\xbf \sim \pi_\alpha}[\|\xbf\|_2^{\expo}] \leq 2^{\expo}\left(\int \|\ybf\|_2^{\expo} \dr \pi(\ybf)+ \int \|\zbf\|_2^{\expo} \alpha(\zbf)\dr \zbf \right) \leq 2^{\expo}(\cte^{\expo}+\int \|\zbf\|_2^{\expo} \alpha(\zbf)\dr \zbf)\,,
\end{equation}
since $\pi \in \Sfrak$ (and in the same way for $\E_{\ybf \sim \pi'_\alpha}[\|\ybf\|_2^{\expo}]$). Consequently:
\begin{equation}
\W_p(\pi_\alpha,\pi'_\alpha) \leq C_{d,\expo,p}2^{(\expo+1)(\frac{2p+d}{(d+2\expo)p})}(\cte^{\expo}+\int \|\zbf\|_2^{\expo} \alpha(\zbf)\dr \zbf)^{\frac{2p+d}{(d+2s)p}}\|\pi-\pi'\|_{\kappa}^{\frac{2(\expo-p)}{(d+2\expo)p}}\,.
\end{equation}
By defining $C'_{d,\expo,p}=2^{(\expo+1)\frac{2p+d}{(d+2\expo)p}}C_{d,\expo,p}$ and using Lemma \ref{lemma:regwass_compared} we have
\begin{equation}
\W_p(\pi_{\alpha},\pi'_{\alpha}) \leq C'_{d,\expo,p}(\cte^{\expo}+\int \|\zbf\|_2^{\expo} \alpha(\zbf)\dr \zbf)^{\frac{2p+d}{(d+2\expo)p}} + 2 \left(\int \|\zbf\|_2^{p} \alpha(\zbf) \dr \zbf\right)^{1/p}\,,
\end{equation}
which concludes the proof.
\end{proof}

\section{Proofs of Section \ref{sec:wasserstein_learnability}}
\label{section:wasslearn}

\subsection{Proof of Lemma \ref{lemma:rosaco2} \label{proof:lemma:rosaco2}}

\rosaco*
\begin{proof}
The proof in mainly taken from \citet{NIPS2012_c54e7837} but we rewrite it in our context. Considering the admissible coupling $\gamma=(id \times P_S)\#\pi \in \Pi(\pi,P_S\#\pi)$, then \begin{equation}
\W^{p}_p(\pi,P_S\#\pi) \leq \int D^{p}(\xbf,\ybf) \dr \gamma(\xbf,\ybf)=\int D^{p}(\xbf,P_S(\xbf)) \dr \pi(\xbf)=\E_{\xbf \sim \pi}[D(\xbf,P_S(\xbf))^{p}]\,.
\end{equation}
Conversely, if $\gamma^{*}$ is an optimal coupling for $\W_p(\pi,P_S\#\pi)$ then for all $(\xbf,\ybf) \in \supp(\gamma^{*})$ we have that $\ybf \in \supp(P_S\#\pi)$ by definition of a coupling which means that $\ybf \in S$  and so by hypothesis $D^{p}(\xbf,\ybf) \geq D^{p}(\xbf,P_S(\xbf))$. Therefore,
\begin{equation}
\W^{p}_p(\pi,P_S\#\pi) =\int D^{p}(\xbf,\ybf) \dr \gamma^{*}(\xbf,\ybf) \geq \int D^{p}(\xbf,P_S(\xbf)) \dr \gamma^{*}(\xbf,\ybf) =\int D^{p}(\xbf,P_S(\xbf)) \dr \pi(\xbf) \,.
\end{equation}
Hence $\W^{p}_p(\pi,P_S\#\pi) \geq \E_{\xbf \sim \pi}[D(\xbf,P_S(\xbf))^{p}]$. The last inequality can be proved in the same way by considering an optimal coupling $\gamma^{*}$ between $\pi$ and $\nu$:
\begin{equation}
\begin{split}
\W^{p}_p(\pi,\nu) &=\int D^{p}(\xbf,\ybf) \dr \gamma^{*}(\xbf,\ybf) \stackrel{\supp(\nu) \subseteq S}{\geq} \int D^{p}(\xbf,P_S(\xbf)) \dr \gamma^{*}(\xbf,\ybf) \\
&=\int D^{p}(\xbf,P_S(\xbf)) \dr \pi(\xbf)=\E_{\xbf \sim \pi}[D(\xbf,P_S(\xbf))^{p}] = \W^{p}_p(\pi,P_S\#\pi)\,.
\end{split}
\end{equation}
\end{proof}

\section{Proofs of Section \ref{sec:compress_section}}
\label{section:appendixcompress}

\subsection{Proof of Proposition \ref{prop:holderlripandiopareequivalent} \label{app:proofseccompress_section}}
We recall the result here:

\holderlripandiopareequivalent*

\begin{proof}
For the proof we will need that if $(a,b)\in \R_{+}$ and $\delta \in [0,1]$ then $(a+b)^{\delta}\leq a^{\delta}+b^{\delta}$.

\textbf{IOP $\implies$ LRIP} Suppose that $\D$ satisfies \eqref{eq:iop}. Let $\pi,\pi' \in \Sfrak$. Then by the triangle inequality:
\begin{equation}
\begin{split}
\|\pi-\pi'\|_{\mathcal{L}(\Hcal),p}&\leq \|\pi-\D[\Acal(\pi)]\|_{\mathcal{L}(\Hcal),p}+\|\pi'-\D[\Acal(\pi)]\|_{\mathcal{L}(\Hcal),p} \,.
\end{split}
\end{equation}
For the first term $\|\pi-\D[\Acal(\pi)]\|_{\mathcal{L}(\Hcal),p}$ we can apply the Hölder IOP with $\ebf=0$ which gives $\|\pi-\D[\Acal(\pi)]\|_{\mathcal{L}(\Hcal),p}\leq \eta$ since $\pi \in \Sfrak$ so $\operatorname{Bias}(\pi,\Sfrak)=0$. For the second term see that $\Acal(\pi)=\Acal(\pi')+(\Acal(\pi)-\Acal(\pi'))$ so we can apply the IOP with $\ebf=\Acal(\pi)-\Acal(\pi')$ which gives $\|\pi'-\D[\Acal(\pi)]\|_{\mathcal{L}(\Hcal),p}=\|\pi'-\D[\Acal(\pi')+\ebf]\|_{\mathcal{L}(\Hcal),p}\leq 0+C\|\Acal(\pi)-\Acal(\pi')\|^{\delta}_2+\eta$ and finally we have \eqref{eq:lrip} with constant $C$ and error $2\eta$.

\textbf{LRIP $\implies$ IOP} Suppose that $\Acal$ satisfies \eqref{eq:lrip}. Consider the decoder
\begin{equation}
\D[\sbf]\in \underset{\pi \in \Sfrak}{\arg\min} \|\Acal(\pi)-\sbf\|_2\,,
\end{equation}
which means that $\|\Acal(\D[\sbf])-\sbf\|_2\leq \|\Acal(\tau)-\sbf\|_2$ for any $\tau \in \Sfrak$. We define $$\operatorname{Bias}(\pi,\Sfrak):=\inf_{\tau \in \Sfrak}\left(\|\pi-\tau\|_{\mathcal{L}(\Hcal),p}+2C\|\Acal(\tau)-\Acal(\pi)\|^{\delta}_2\right)\,.$$ We show that this decoder satisfies \eqref{eq:iop} with this $\operatorname{Bias}$ term. Let $\pi \in \P(\Xcal)$ and $\ebf \in \mathbb{C}^{m}$. Consider any $\tau \in \Sfrak$. Then
\begin{equation}
\begin{split}
\|\pi-\D[\Acal(\pi)+\ebf]\|_{\mathcal{L}(\Hcal),p} & \leq \|\pi-\tau\|_{\mathcal{L}(\Hcal),p}+\|\tau-\D[\Acal(\pi)+\ebf]\|_{\mathcal{L}(\Hcal),p} \\
&\stackrel{*}{\leq} \|\pi-\tau\|_{\mathcal{L}(\Hcal),p}+ C\|\Acal(\tau)-\Acal(\D[\Acal(\pi)+\ebf])\|^{\delta}_2 +\eta\\
&\stackrel{**}{\leq} \|\pi-\tau\|_{\mathcal{L}(\Hcal),p}+C\|\Acal(\tau)-(\Acal(\pi)+\ebf)\|^{\delta}_2 \\
&+C\|(\Acal(\pi)+\ebf)-\Acal(\D[\Acal(\pi)+\ebf])\|^{\delta}_2 +\eta\,,
\end{split}
\end{equation}
where in (*) we use the LRIP since $\tau$ and $\D[\Acal(\pi)+\ebf]$ are in $\Sfrak$. In (**) we use the triangle inequality and the property $(a+b)^{\delta}\leq a^{\delta}+b^{\delta}$. By the properties of the decoder we have $\|(\Acal(\pi)+\ebf)-\Acal(\D[\Acal(\pi)+\ebf])\|_2 \leq \|(\Acal(\pi)+\ebf)-\Acal(\tau)\|_2$. Consequently:
\begin{equation}
\begin{split}
\|\pi-\D[\Acal(\pi)+\ebf]\|_{\mathcal{L}(\Hcal),p} & \leq \|\pi-\tau\|_{\mathcal{L}(\Hcal),p}+2C\|\Acal(\tau)-(\Acal(\pi)+\ebf)\|^{\delta}_2+\eta  \\
&\leq \|\pi-\tau\|_{\mathcal{L}(\Hcal),p}+2C\|\Acal(\tau)-\Acal(\pi)\|^{\delta}_2+2C\|\ebf\|^{\delta}_2+\eta\,. \\
 \|\pi-\D[\Acal(\pi)+\ebf]\|_{\mathcal{L}(\Hcal),p} &\stackrel{*}{\leq} \operatorname{Bias}(\pi,\Sfrak) + 2C\|\ebf\|^{\delta}_2+\eta\,,
\end{split}
\end{equation}
where in (*) we used the definition of $\operatorname{Bias}(\pi,\Sfrak)$ since the previous was true for any $\tau \in \Sfrak$.
\end{proof}

\subsection{Proof of Proposition \ref{prop:wass_learn_is_ness} \label{proof:prop:wass_learn_is_ness}}

\wasslearnnecessary*
\begin{proof}
Under the hypothesis of the proposition we have $\Phi \in \Lip(\R^{d}, \R^{m})$ and
\begin{equation}
\label{eq:hypothesis_lrip}
\forall \pi,\pi' \in \Sfrak, \|\pi-\pi'\|_{\mathcal{L}(\Hcal),p} \leq C \|\Acal(\pi)-\Acal(\pi')\|_2\,,
\end{equation}
for some $C>0$. As shown in \citet[Appendix D, Proof of Lemma 3.2 and Lemma 3.4]{gribonval2020compressive} the duality property of the Wasserstein distance implies $\|\Acal(\pi)-\Acal(\pi')\|_2 \leq L\W_1(\pi,\pi')$. The argument is the following: for $\pi,\pi' \in \Sfrak$,
\begin{equation}
\begin{split}
\|\Acal(\pi)-\Acal(\pi')\|_2&= \sup_{\ubf \in \R^{m} : \|\ubf\|_2 \leq 1} \ |\langle \ubf, \Acal(\pi)-\Acal(\pi')\rangle| \\
&= \sup_{\ubf \in \R^{m} : \|\ubf\|_2 \leq 1} |\int \langle \ubf, \Phi(\xbf) \rangle \dr \pi(\xbf) - \int \langle \ubf, \Phi(\ybf) \rangle \dr \pi'(\ybf)| \\
&= \sup_{\ubf \in \R^{m}: \|\ubf\|_2 \leq 1}|\int \Phi_{\ubf}(\xbf) \dr \pi(\xbf) - \int  \Phi_{\ubf}(\ybf) \dr \pi'(\ybf)|\,,
\end{split}
\end{equation}
where we define $\Phi_{\ubf}(\cdot) = \langle \ubf, \Phi(\cdot)\rangle$. Moreover, for any $\ubf \in \R^{m}$ with $\|\ubf\|_2 \leq 1$ we have $\Phi_{\ubf} \in \Lip_{L}(\R^{d}, \R)$ since $\Phi \in \Lip(\R^{d}, \R^{m})$. Consequently, using the duality property of the Wasserstein distance:
\begin{equation}
\|\Acal(\pi)-\Acal(\pi')\|_2 \leq \sup_{f \in \Lip_L(\R^d, \R)}|\int f(\xbf) \dr \pi(\xbf) - \int  f(\ybf) \dr \pi'(\ybf)| =  L \W_1(\pi,\pi')\,.
\end{equation}
Combining with \eqref{eq:hypothesis_lrip} we have  
\begin{equation}
\forall \pi,\pi' \in \Sfrak, \|\pi-\pi'\|_{\mathcal{L}(\Hcal),p} \leq C L \W_1(\pi,\pi')\,.
\end{equation}
Finally to conclude we use $\W_1(\pi,\pi') \leq \W_p(\pi,\pi')$ since $p \in [1,+\infty)$ \citep[Section 5.1]{San15a}.

\end{proof}

\vskip 0.2in

\end{document}